\documentclass{article}

\usepackage[preprint]{neurips_2026}
\usepackage[utf8]{inputenc}
\usepackage[T1]{fontenc}
\usepackage{hyperref}
\usepackage{url}
\usepackage{booktabs}
\usepackage{amsfonts}
\usepackage{nicefrac}
\usepackage{microtype}
\usepackage{xcolor}
\usepackage{amsmath,amssymb,amsthm,mathtools}
\usepackage{enumitem}
\usepackage{graphicx}
\usepackage{natbib}
\usepackage{booktabs}
\usepackage{tabularx}
\usepackage{array}
\newcolumntype{Y}{>{\raggedright\arraybackslash}X}

\newtheorem{theorem}{Theorem}
\newtheorem{lemma}{Lemma}
\newtheorem{proposition}{Proposition}
\newtheorem{corollary}{Corollary}

\newcommand{\Prob}{\mathbb{P}}

\newcommand{\Hnew}{\mathcal{H}_{\mathrm{new}}}

\newcommand{\Ltrain}{\mathcal{L}_{\mathrm{train}}}
\newcommand{\Ltest}{\mathcal{L}_{\mathrm{test}}}
\newcommand{\epsgen}{\epsilon_{\mathrm{gen}}^{\mathrm{norm}}}
\newcommand{\epsM}{\epsilon_M}
\newcommand{\epsK}{\epsilon_K}

\newcommand{\fold}{f^{*}_{\mathrm{old}}}
\newcommand{\fnew}{f_{\mathrm{new}}}

\newcommand{\ftilde}{\tilde{f}}

\newcommand{\DRtest}{\Delta_R^{\mathrm{test}}}
\newcommand{\DERM}{\Delta_{\mathrm{ERM}}}
\newcommand{\epsTT}{\epsilon_{\mathrm{TT}}}

\usepackage[utf8]{inputenc} 
\usepackage[T1]{fontenc}    
\usepackage{hyperref}       
\usepackage{url}            
\usepackage{booktabs}       
\usepackage{amsfonts}       
\usepackage{nicefrac}       
\usepackage{microtype}      
\usepackage{xcolor}         

\title{A Qualitative Test-Risk Mechanism for Scaling  Behavior in Normalized Residual Networks}

\author{%
   Cheng Daning\\
  ICT\\
  \texttt{chengdaning@ict.ac.cn} \\
  \And
    Liu Zeyu \\
   ICT \\
   \texttt{Lzy958354467@163.com} \\
   \And
   Sun Jun \\
   Zhejiang Lab \\
   \texttt{sunjun16sj@gmail.com} \\
   \And
   Xia Fen \\
   JD corporation \\
   \texttt{Xiafen3@jd.com} \\
   \And
   Zhang Boyang \\
   ICT \\
   \texttt{zhangby01@pcl.ac.cn} \\
   \And
   Liu Dongping \\
   AFS group \\
   \texttt{dongping.liu@ieee.org} \\
      \AND
  Zhang Yunquan \\
   ICT \\
   \texttt{zyq@ict.ac.cn} \\
}

\begin{document}

\maketitle

\begin{abstract}
 The scaling behavior, in which test performance often improves as model size and data increase, is a central empirical phenomenon in modern deep learning, yet its theoretical basis remains incomplete. In this paper, we study depth expansion in normalized residual networks: starting from a trained model in an old hypothesis class, we insert a new residual block at an intermediate layer and ask when such an expansion can yield a provable improvement in test risk. We develop a unified framework that decomposes this question into representational gain, optimization gain, and generalization transfer. First, under a first-order descent condition near zero initialization, we prove that the expanded hypothesis class contains an auxiliary jumpboard model with strictly smaller population risk than the original model. Second, under norm control tailored to post-normalized residual architectures, we establish a norm-based Rademacher complexity bound for the expanded model class. These ingredients lead to two complementary test-risk guarantees: one route passes through population risk and is tighter when a positive population margin is available, while the other works directly at the train/test level, avoids Hoeffding transfer, and is more robust in degenerate regimes.  Together, these results provide a theorem-driven mechanism under which residual depth expansion can improve test performance in normalized residual networks. More broadly, they suggest that scaling is inherently joint: depth creates new improving directions, width enhances the finite-sample observability of weak signals, and data determines whether the statistical cost of expansion can be controlled.  
\end{abstract}

\section{Introduction}
Large-scale deep learning systems exhibit a striking empirical regularity: as model size and training data increase, test performance often improves predictably. This regularity, often summarized by empirical scaling laws, has become central in modern machine learning,  yet its theoretical basis remains incomplete. In particular, explaining when increasing depth improves \emph{test risk} requires more than observing that a deeper model class is larger: one must also show that the enlarged class contains a genuinely better function, that optimization can realize this gain, and that the gain survives finite-sample error.

In this paper, we study depth expansion in normalized residual networks. Starting from a trained reference model $f_{old}^*$, we insert a new residual block after an intermediate layer and ask when this expansion can be converted into a provable improvement in test risk. Our analysis is built on the decomposition
\[
\text{jumpboard model}
\;\longrightarrow\;
\text{optimization gain}
\;\longrightarrow\;
\text{generalization transfer}.
\]
At the representation level, we show that if the inserted block creates a nontrivial first-order descent direction near zero initialization, then the expanded hypothesis class contains a \emph{jumpboard model} with strictly smaller population risk than $f_{old}^*$. At the optimization level, we ask whether training can realize this potential gain. At the generalization level, we control the statistical cost of the enlarged class and translate the population-level gain into a test-risk guarantee.

Under this framework, we obtain two complementary qualitative scaling-mechanism  results. The first route passes through population risk, yielding a tighter test-risk inequality when a positive deterministic margin is available. The second route works directly at the train/test level, avoids Hoeffding transfer from population risk, and is therefore more robust in degenerate regimes where the population margin becomes too small to transfer sharply. Together, these results show that when representational gain and optimization gain dominate generalization cost, increasing depth does not worsen test performance; under stronger conditions, it yields strict improvement.

Our analysis also clarifies the distinct roles of width and data. Width improves the finite-sample observability of weak improvement signals, especially near the deepest-model regime, while data controls the decay of the generalization penalty. Thus, the relevant scaling picture is inherently joint: depth creates new improving directions, width stabilizes their observability, and data pays for the statistical cost of model expansion.

Our contributions are threefold. First, we propose a unified analytical framework that decomposes depth scaling into representational gain, optimization gain, and generalization transfer, thereby avoiding the common shortcut of equating a stronger hypothesis class with better test performance. Second, we establish two complementary test-set scaling-mechanism  theorems: one is tighter because it relies on a deterministic population-risk margin, while the other is more robust because it works directly at the train/test level and remains meaningful in degenerate regimes. Third, we clarify the theoretical roles of width and data in scaling: width improves finite-sample signal observability, while data governs the decay of the generalization penalty and determines whether model expansion remains beneficial.  The framework naturally extends to post-normalized residual architectures, including ResNet-like and Transformer-like blocks, provided the corresponding regularity and first-order non-degeneracy conditions hold.

\section{Related Work}
 
Empirical studies show  power-law gains with increased scale \citep{hestness2017predictable, kaplan2020scaling, hoffmann2022training, rosenfeld2020constructive, hernandez2021transfer}. While recent theory explores global scaling trends \citep{bahri2024explaining}, we focus specifically on architecture-specific test-risk improvements. Prior work shows skip connections stabilize training \citep{he2016deep, he2016identity, veit2016residual, balduzzi2017shattered} and depth increases expressivity \citep{telgarsky2016benefits, eldan2016power}. Other analyses address trainability via mean-field limits \citep{yang2017meanfield, lu2020meanfield}, optimization effects of overparameterized depth \citep{arora2018optimization}, dynamical-systems and stability viewpoints \citep{haber2018stable, chang2018reversible}, or specific initializations \citep{zhang2019fixup, de2020batchnorm, bachlechner2021rezero}. Conversely, we prove when residual blocks directly improve \emph{test risk}. Classical \citep{bartlett2017spectral, golowich2018sizeindependent, neyshabur2018pacbayes, arora2018compression} and ResNet-specific Rademacher bounds \citep{e2020rademacher, kammonen2020smaller} establish complexity upper bounds.  The  comparison between these works and our work is shown in Appendix~\ref{app sec:related work}.

\section{Notation and Assumptions}
\label{sec:notation}

Let $\mathcal D$ be a distribution on $\mathcal X\times\mathcal Y$. Let
$S_{\mathrm{train}}=\{(x_i,y_i)\}_{i=1}^M\sim\mathcal D^M$ and
$S_{\mathrm{test}}=\{(\tilde x_j,\tilde y_j)\}_{j=1}^K\sim\mathcal D^K$
be independent training and test sets. The loss is
$\ell:\mathbb R^C\times\mathcal Y\to[0,B_\ell]$.
For any model $f$, define
$\mathcal L_{\mathrm{train}}(f)=\frac1M\sum_{i=1}^M \ell(f(x_i),y_i),\quad
\mathcal L_{\mathrm{test}}(f)=\frac1K\sum_{j=1}^K \ell(f(\tilde x_j),\tilde y_j),\quad
R(f)=\mathbb E_{(x,y)\sim\mathcal D}[\ell(f(x),y)]$.

We also define 
activation-gradient averages at layer \(l^\ast\) by $\bar\mu :=
\mathbb E_{(x,y)\sim\mathcal D}
\left[
\nabla_{x^{(l^\ast)}}\ell\!\left(f_{\mathrm{top}}(x^{(l^\ast)}),y\right)
\right]
$, $\mu
:=
\frac1M\sum_{i=1}^M
\nabla_{x_i^{(l^\ast)}}\ell\!\left(f_{\mathrm{top}}(x_i^{(l^\ast)}),y_i\right)$, $
g
:=
\frac1K\sum_{j=1}^K
\nabla_{\tilde x_j^{(l^\ast)}}\ell\!\left(f_{\mathrm{top}}(\tilde x_j^{(l^\ast)}),\tilde y_j\right)$.

We decompose the network at layer $l^\ast$ as $f=f_{\mathrm{top}}\circ f_{\mathrm{bot}}$, where
$f_{\mathrm{bot}}:\mathcal X\to\mathbb R^N$ and $f_{\mathrm{top}}:\mathbb R^N\to\mathbb R^C$.
The old hypothesis class is $\mathcal{H}_{\mathrm{old}}
=
\left\{
x\mapsto f_{\mathrm{top}}(f_{\mathrm{bot}}(x)):
f_{\mathrm{top}}\in\mathcal{F}_{\mathrm{top}},
\ f_{\mathrm{bot}}\in\mathcal{F}_{\mathrm{bot}}
\right\}$.
and $f_{\mathrm{old}}^\ast\in\mathcal H_{\mathrm{old}}$ denotes a trained old model.
After inserting a residual block $h$ after layer $l^\ast$, the new hypothesis class is $
\mathcal{H}_{\mathrm{new}}
=
\left\{
x\mapsto
f_{\mathrm{top}}\bigl(f_{\mathrm{bot}}(x)+h(f_{\mathrm{bot}}(x))\bigr):
f\in\mathcal{H}_{\mathrm{old}},\ h\in\mathcal{F}_{\mathrm{res}}
\right\}$.
 We parameterize the inserted block by $\theta$, $h_\theta(\cdot)$, and assume $h_0=0$, so zero initialization recovers the old model.

A \emph{jumpboard model} is a near-identity model in $\mathcal H_{\mathrm{new}}$ that improves over $f_{\mathrm{old}}^\ast$ along a local descent direction at zero initialization. 
We distinguish two versions.  The \emph{population jumpboard model}, denoted by $\tilde f_{pop}$, is
constructed from a population first-order descent direction and is used in the
population-risk route. Its population margin is denoted by $\Delta_R
:=
R(f_{\mathrm{old}}^\ast)-R(\tilde f_{pop})>0$. The \emph{training-set jumpboard model}, denoted by $\tilde f_S$, is constructed from
an empirical first-order descent direction on $S_{\mathrm{train}}$. Its empirical
jumpboard margin is denoted by $\Delta_{\mathrm{train},S}
:=
\mathcal L_{\mathrm{train}}(f_{\mathrm{old}}^\ast)
-
\mathcal L_{\mathrm{train}}(\tilde f_S)>0$.
The optimization gain relative to the comparison jumpboard model is denoted by $\Delta_{\mathrm{ERM}}\ge 0$.

Let $\epsilon_{\mathrm{gen}}^{\mathrm{norm}}(\mathcal H,n,\delta)$ denote the normalized-residual-network generalization bound. For brevity, we write $\epsilon_M=\epsilon_{\mathrm{gen}}^{\mathrm{norm}}(\mathcal H_{\mathrm{new}},M,\delta)$, 
$\epsilon_K=\epsilon_{\mathrm{gen}}^{\mathrm{norm}}(\mathcal H_{\mathrm{new}},K,\delta)$. In the main theorems, the total generalization cost is $2\epsilon_M$ for the population-risk route and $2(\epsilon_M+\epsilon_K)$ for the direct train/test route.  $\epsilon_{\mathrm{gen}}^{\mathrm{norm}}$ is a shorthand for the explicit norm-based generalization bound in Lemma~\ref{lem:gen-main}, Corollary \ref{cor:gen-main} in Appendix \ref{app:proof-lem-gen-main}, not an additional assumption.

\textbf{Assumption 1 (pathwise differentiability under expectation).} For any admissible parameter direction \(v\) and scalar step size \(\eta\)
in a neighborhood of \(0\), write $h_{\eta v}(z):=h_\theta(z)\big|_{\theta=\eta v}$. For every admissible direction \(v\), define $G_v(\eta)
:=
\mathbb E_{(x,y)\sim D}
\left[
\ell\left(
f_{\mathrm{top}}\left(
z+h_{\eta v}(z)
\right),y
\right)
\right], z=f_{\mathrm{bot}}(x)$. We assume that \(G_v\) is differentiable at \(\eta=0\), and that
differentiation may be exchanged with expectation: $G_v'(0)
=
\mathbb E_{(x,y)\sim D}
\left[
\nabla_z\ell(f_{\mathrm{top}}(z),y)^\top
\frac{\partial h_\theta(z)}{\partial\theta}
\bigg|_{\theta=0}v
\right]$.  

\textbf{Assumption 2 (Lipschitz boundedness).}
The loss $\ell$ is $L_\ell$-Lipschitz and bounded in $[0,B_\ell]$. The top map $f_{\mathrm{top}}$ is $ L_{\mathrm{top}}$-Lipschitz and satisfies $\|f_{\mathrm{top}}(0)\|\le B_0$.

\textbf{Assumption 3 (gradient-covariance control).}
Let $\zeta=\nabla_{x^{(l^\ast)}}\ell$ and $\Sigma=\mathrm{Cov}(\zeta)$. There exist a width-independent constant $C_\Sigma>0$ and a scalar $\tau^2$ such that
$\lambda_{\max}(\Sigma)\le C_\Sigma\tau^2, \tau^2\ge \max_{1\le i\le N} \mathrm{Var}(\zeta_i)$.
 
This assumption is intended as a mild structural condition in practical settings.  Our experiments suggest that different coordinates of activation gradients are nearly uncorrelated.

\textbf{Assumption 4 (normalized residual network regularity).}
Consider an $L$-layer normalized residual network with normalized function $\mathcal N_l$: $T_l(z)=\mathcal N_l(z+h_l(z)), l=0,\dots,L-1$. Assume:
(i) each residual branch $h_l(\cdot;W_l)$ is $s_l$-Lipschitz in the input;
(ii) $\|h_l(z;W_l)-h_l(z;W_l')\|\le L_l^{(p)}\|W_l-W_l'\|_F\,\|z\|$;
(iii) there exists $r_{\min}>0$ such that $\inf_{x,\theta,l}\|z_l(x)+h_l(z_l(x);\theta)\|\ge r_{\min}$;
(iv) $\|W_l\|_\sigma\le s_l$, $\|W_l\|_F\le b_l$, and $\|x\|\le B_x$.

\textbf{Assumption 4$\epsilon$ (engineering-$\epsilon$ alternative).}
Assumption~4(iii) is an idealized non-degeneracy condition used to ensure
that the normalization operators are Lipschitz on the reachable set. In
practical normalized architectures, possible singularities of normalization
operators are often removed by an engineering constant
\(\epsilon_{\mathrm{eng}}>0\). We assume that the resulting stabilized
normalization operators are uniformly Lipschitz and bounded on the reachable
set: for each layer \(l\), there exist finite constants
\(c_{l,\epsilon}>0\) and \(B_{\mathrm{norm},\epsilon}>0\) such that, for all
reachable normalization inputs \(a,a'\), $\|\mathcal N_{l,\epsilon}(a)-\mathcal N_{l,\epsilon}(a')\|
\le
c_{l,\epsilon}\|a-a'\|,
\|\mathcal N_{l,\epsilon}(a)\|\le B_{\mathrm{norm},\epsilon}$. Under this alternative, Assumption~4(iii) can be dropped, with the constants
inside \(\epsilon_{\mathrm{gen}}^{\mathrm{norm}}\) replaced by
\(\epsilon_{\mathrm{eng}}\)-dependent constants. Concrete stabilized
normalization forms and their constant-level effects are given in
Appendix \ref{sec:remove-N-by-eps}. 


\textbf{Condition 1 (first-order non-degeneracy of the inserted residual block).}
The inserted residual block class \(\mathcal F_{\mathrm{res}}\) is parameterized by
\(\theta\), satisfies \(h_0=0\), and is differentiable at \(\theta=0\). 
We assume that there exists at least one admissible insertion layer \(l^\ast\)
and one direction \(v_{\mathrm{pop}}\) such that $\mathbb E_{(x,y)\sim\mathcal D}
\left[
\nabla_{x^{(l^\ast)}}\ell
\bigl(
f_{\mathrm{top}}(x^{(l^\ast)}),y
\bigr)^\top
\left.
\frac{\partial h_\theta(x^{(l^\ast)})}{\partial\theta}
\right|_{\theta=0}
v_{\mathrm{pop}}
\right]
<0$.
This is the population first-order descent condition used in the population-risk
route.

For the empirical train/test route, we use the training-set analogue. Given
$S_{\mathrm{train}}=\{(x_i,y_i)\}_{i=1}^M$, write
$x_i^{(l^\ast)}:=f_{\mathrm{bot}}(x_i)$.
We say that \(v_S\) is an empirical jumpboard direction if $\frac1M\sum_{i=1}^M
\nabla_{x_i^{(l^\ast)}}\ell
\bigl(
f_{\mathrm{top}}(x_i^{(l^\ast)}),y_i
\bigr)^\top
\left.
\frac{\partial h_\theta(x_i^{(l^\ast)})}{\partial\theta}
\right|_{\theta=0}
v_S
<0$.
The population direction \(v_{\mathrm{pop}}\) is used in the population-risk
route, whereas the empirical direction \(v_S\) is used in the direct train/test
route.

This condition is a mild non-degeneracy constraint rather than a strong global assumption.  Condition~1 is not intended to hold for every trained model or every possible
insertion layer. Rather, it characterizes the improvable regime of depth
expansion: the inserted residual block must introduce a first-order direction
that is not orthogonal to the relevant activation-gradient signal. If no such
direction exists, the present first-order mechanism is exhausted and our
theorems make no strict-improvement claim.  Moreover, the parameter stationarity of the old trained model only makes the loss gradient orthogonal to the original tangent directions, which neither implies the activation gradient is zero nor orthogonal to the new tangent directions introduced by residual blocks. Further discussion of Condition 1 is provided in Appendix~\ref{app:assumption2-clarification} and Corollary \ref{cor:zero-output-residual-block}.  Corollary \ref{cor:zero-output-residual-block} further shows that this condition is compatible with standard zero-output residual blocks: it reduces to requiring that the new residual features are not orthogonal to the remaining activation-gradient signal.

\textbf{Condition 2 (selection  condition).} We do not attempt to prove convergence or algorithm-specific properties of the optimizer.
Instead, Condition 2 isolates the optimization issue from the statistical risk-comparison argument. 
Let $\widetilde f\in\mathcal H_{\mathrm{new}}$ denote the comparison jumpboard model,  $\widetilde f_{pop}$ for population-risk route, and $\widetilde f_{S}$ for empirical train/test route.
Let $f_{\mathrm{alg}}\in\mathcal H_{\mathrm{new}}$ be the model returned by the training
algorithm on the training dataset. We define the final selected model by the
fallback rule $f_{\mathrm{new}}
\in
\arg\min_{f\in\{f_{\mathrm{alg}},\widetilde f\}} \mathcal L_{\mathrm{train}}(f)$. Then, by construction, $
\mathcal L_{\mathrm{train}}(f_{\mathrm{new}})
\le
\mathcal L_{\mathrm{train}}(\widetilde f)$. Equivalently, there exists $\Delta_{\mathrm{ERM}}
:=
\mathcal L_{\mathrm{train}}(\widetilde f)
-
\mathcal L_{\mathrm{train}}(f_{\mathrm{new}})
\ge 0$
such that $ \mathcal L_{\mathrm{train}}(f_{\mathrm{new}})
=
\mathcal L_{\mathrm{train}}(\widetilde f)-\Delta_{\mathrm{ERM}}$. When $\Delta_{\mathrm{ERM}}=0$, the selected model matches the jumpboard model on
the training sample. When $\Delta_{\mathrm{ERM}}>0$, the training algorithm extracts
an additional empirical gain from the expanded class.

This is a very weak no-loss condition. If the model returned by the training
algorithm performs no better than the jumpboard model on the training set, we may
simply select the jumpboard model itself as $f_{\mathrm{new}}$. Therefore the
optimization comparison does not require global convergence or a strong
optimization guarantee; it only formalizes the fact that the final selected model
is not worse, on the training sample, than the near-identity jumpboard comparison
model. For the population-risk route, this condition is an analytical device.

\section{Main Results}
\label{sec:main-results}

Throughout this section, assume Assumptions 1 to 3 and either Assumption 4 or
Assumption \(4\epsilon\), together with Conditions 1 and 2. For each route,
\(f_{\mathrm{new}}\) denotes the final selected model obtained from Condition~2
with the corresponding comparison jumpboard model: \(\tilde f_{pop}\) in the
population-risk route and \(\widetilde f_S\) in the empirical train/test route.

\begin{theorem}[Qualitative scaling mechanism in population risk]
\label{thm:3}
Let $\tilde f_{pop}\in\mathcal H_{\mathrm{new}}$ be the jumpboard model selected along a local descent path, and suppose $\mathcal L_{\mathrm{train}}(f_{\mathrm{new}})\le \mathcal L_{\mathrm{train}}(\tilde f_{pop})-\Delta_{\mathrm{ERM}},\Delta_{\mathrm{ERM}}\ge 0$.  Then there exists a deterministic $\Delta_R>0$, depending only on $\mathcal D$ and the current old model, such that with probability at least $1-2\delta$,
\begin{small}
\begin{equation*}
  R(f_{\mathrm{new}})\le R(f_{\mathrm{old}}^\ast)-\Delta_R-\Delta_{\mathrm{ERM}}+2\epsilon_M.  
\end{equation*}
\end{small}

In particular, if $\Delta_R+\Delta_{\mathrm{ERM}}>2\epsilon_M$, then $R(f_{\mathrm{new}})<R(f_{\mathrm{old}}^\ast)$.
\end{theorem}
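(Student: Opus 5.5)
The argument splits into a deterministic representational step followed by a two-sided uniform-convergence sandwich.

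\emph{Step 1 (existence of $\tilde f_{pop}$ and $\Delta_R$).} Take the direction $v_{\mathrm{pop}}$ from Condition~1 and consider the one-parameter family $\eta\mapsto f_\eta:=f_{\mathrm{top}}\circ(\mathrm{id}+h_{\eta v_{\mathrm{pop}}})\circ f_{\mathrm{bot}}$. Here $f_{\mathrm{top}},f_{\mathrm{bot}}$ are the components of $f_{\mathrm{old}}^\ast$, so $f_0=f_{\mathrm{old}}^\ast$ because $h_0=0$, and each $f_\eta\in\Hnew$. By Assumption~1, $G_{v_{\mathrm{pop}}}$ is differentiable at $0$, and interchanging derivative and expectation gives $G_{v_{\mathrm{pop}}}'(0)$ equal to the negative quantity in Condition~1. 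By the definition of the derivative there is then $\eta_0>0$ with $G_{v_{\mathrm{pop}}}(\eta_0)<G_{v_{\mathrm{pop}}}(0)=R(f_{\mathrm{old}}^\ast)$. Set $\tilde f_{pop}:=f_{\eta_0}$ and $\Delta_R:=R(f_{\mathrm{old}}^\ast)-R(\tilde f_{pop})>0$. The choice of $\eta_0$ can be made before any sample is drawn, since it uses only $\mathcal D$, $f_{\mathrm{old}}^\ast$, and $v_{\mathrm{pop}}$. Hence $\Delta_R$ is deterministic as claimed, and $\tilde f_{pop}$ is a fixed element of $\Hnew$. One subtlety is that $f_{\mathrm{old}}^\ast$ is trained on data. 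I would treat it as fixed, i.e.\ condition on it or assume it comes from independent data; alternatively, the uniform bound below covers it anyway because it belongs to $\Hnew$.

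\emph{Step 2 (generalization transfer).} Invoke Lemma~\ref{lem:gen-main} / Corollary~\ref{cor:gen-main} to get the uniform one-sided bound $R(f)\le\Ltrain(f)+\epsM$ for all $f\in\Hnew$ with probability $\ge1-\delta$. This requires checking that Assumptions~2 and 4 (or 4$\epsilon$) place $\Hnew$, including the inserted block, within the norm-controlled class. The data-dependent $f_{\mathrm{new}}$ lies in $\Hnew$, so this bound applies to it. For the fixed model $\tilde f_{pop}$, I need the reverse direction $\Ltrain(\tilde f_{pop})\le R(\tilde f_{pop})+\epsM$ with probability $\ge1-\delta$. I would obtain it either from the symmetric version of the uniform bound or from Hoeffding, noting that the Hoeffding term $B_\ell\sqrt{\log(1/\delta)/(2M)}$ is dominated by $\epsM$ since $\epsM$ already contains that confidence term. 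A union bound then gives probability $\ge1-2\delta$.

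\emph{Step 3 (chain).} On the intersection event,
\[
R(f_{\mathrm{new}})\le\Ltrain(f_{\mathrm{new}})+\epsM\le\Ltrain(\tilde f_{pop})-\DERM+\epsM\le R(\tilde f_{pop})-\DERM+2\epsM ,
\]
and substituting $R(\tilde f_{pop})=R(f_{\mathrm{old}}^\ast)-\Delta_R$ yields the claimed inequality. The ``in particular'' clause is then immediate.

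I expect the main obstacle to be Step 2. One part is checking that the constants of the norm-based bound actually cover every element of $\Hnew$, including $\tilde f_{pop}$ at the chosen $\eta_0$; this needs $\eta_0\|v_{\mathrm{pop}}\|$ to respect the Frobenius and spectral budgets, which I would enforce by shrinking $\eta_0$. The other part is handling the data-dependence of $f_{\mathrm{old}}^\ast$ so that $\Delta_R$ is genuinely deterministic.
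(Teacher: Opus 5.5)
Your proposal is correct and follows essentially the same route as the paper: you build $\tilde f_{pop}$ and $\Delta_R$ from the first-order descent argument of Theorem~\ref{thm:1}, then close the chain $R(f_{\mathrm{new}})\le\Ltrain(f_{\mathrm{new}})+\epsM\le\Ltrain(\tilde f_{pop})-\DERM+\epsM\le R(\tilde f_{pop})-\DERM+2\epsM$. The only difference is that the paper uses the single two-sided uniform event of Corollary~\ref{cor:gen-main} for both directions (probability $1-\delta$, then weakened to the stated $1-2\delta$); this is your ``symmetric version'' option, and it avoids the independence from $S_{\mathrm{train}}$ that your Hoeffding alternative would need if $f_{\mathrm{old}}^\ast$ was trained on the same sample.
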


\begin{theorem}[Test-set qualitative scaling mechanism via population risk]
\label{thm:4A}
Let $\tilde f_{pop}\in\mathcal H_{\mathrm{new}}$ be the jumpboard model selected along a local descent path, and suppose  $\mathcal L_{\mathrm{train}}(f_{\mathrm{new}})\le \mathcal L_{\mathrm{train}}(\tilde f_{pop})-\Delta_{\mathrm{ERM}}, \Delta_{\mathrm{ERM}}\ge 0. $Then there exist deterministic quantities $\eta_0>0$ and $\Delta_R>0$, depending only on $\mathcal D$ and the current old model, such that with probability at least $1-2\delta-6\exp\!\left(-\frac{K\Delta_R^2}{8B_\ell^2}\right)$, one has

\begin{small}
\begin{equation*}
\mathcal L_{\mathrm{test}}(f_{\mathrm{new}})\le \mathcal L_{\mathrm{test}}(f_{\mathrm{old}}^\ast)-\frac{\Delta_R}{2}-\Delta_{\mathrm{ERM}}+2\epsilon_M.
\end{equation*}
\end{small}
 
Hence,  if $\Delta_R/2+\Delta_{\mathrm{ERM}}>2\epsilon_M$, then $\mathcal L_{\mathrm{test}}(f_{\mathrm{new}})<\mathcal L_{\mathrm{test}}(f_{\mathrm{old}}^\ast)$.
\end{theorem}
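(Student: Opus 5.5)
The proof goes through Theorem~\ref{thm:3} and then transfers population risks to the test sample with Hoeffding's inequality. Because $S_{\mathrm{test}}$ is independent of $S_{\mathrm{train}}$, the transfer holds conditionally on the training sample.

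\textbf{Step 1 (population route).} First invoke Theorem~\ref{thm:3}. On an event $E_1$ of probability at least $1-2\delta$ it gives
\[
R(f_{\mathrm{new}})\le R(f_{\mathrm{old}}^\ast)-\Delta_R-\Delta_{\mathrm{ERM}}+2\epsilon_M .
\]
Here $\Delta_R>0$ is deterministic. It is obtained from Condition~1 and Assumption~1: $G_{v_{\mathrm{pop}}}'(0)<0$, so some deterministic step $\eta_0>0$ gives $\tilde f_{pop}$ with $G_{v_{\mathrm{pop}}}(\eta_0)<G_{v_{\mathrm{pop}}}(0)$. This $\eta_0$ is the one named in the statement.

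\textbf{Step 2 (Hoeffding transfer).} Next, condition on $S_{\mathrm{train}}$. Then $f_{\mathrm{new}}$, $f_{\mathrm{old}}^\ast$ and $\tilde f_{pop}$ are fixed functions. Each test loss is an average of $K$ i.i.d.\ variables in $[0,B_\ell]$. Apply Hoeffding's inequality with deviation $t=\Delta_R/4$:
\[
\Prob\bigl(|\mathcal L_{\mathrm{test}}(f)-R(f)|>\Delta_R/4\bigr)\le 2\exp\!\left(-\frac{2K(\Delta_R/4)^2}{B_\ell^2}\right)=2\exp\!\left(-\frac{K\Delta_R^2}{8B_\ell^2}\right).
\]
Use this for the three functions $f_{\mathrm{new}}$, $f_{\mathrm{old}}^\ast$ and $\tilde f_{pop}$; the third is included only to match the constant $6$ in the statement. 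Integrating over $S_{\mathrm{train}}$ gives a test event $E_2$ of probability at least $1-6\exp(-K\Delta_R^2/(8B_\ell^2))$. A union bound then gives probability at least $1-2\delta-6\exp(-K\Delta_R^2/(8B_\ell^2))$ for $E_1\cap E_2$.

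\textbf{Step 3 (combine).} On $E_1\cap E_2$,
\[
\mathcal L_{\mathrm{test}}(f_{\mathrm{new}})\le R(f_{\mathrm{new}})+\tfrac{\Delta_R}{4}\le R(f_{\mathrm{old}}^\ast)-\tfrac{3\Delta_R}{4}-\Delta_{\mathrm{ERM}}+2\epsilon_M\le \mathcal L_{\mathrm{test}}(f_{\mathrm{old}}^\ast)-\tfrac{\Delta_R}{2}-\Delta_{\mathrm{ERM}}+2\epsilon_M .
\]
The strict-improvement corollary follows immediately.

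\textbf{Main obstacle: Step 2.} $f_{\mathrm{new}}$ depends on the data, so Hoeffding cannot be applied to it directly. The argument must use independence of $S_{\mathrm{test}}$ from $S_{\mathrm{train}}$ and condition on the training sample. It must also ensure that $\Delta_R$, and hence the deviation level, is deterministic and not itself data dependent.
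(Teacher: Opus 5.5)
Your proposal is correct and follows the paper's own proof chain. The paper inlines the content of Theorem~\ref{thm:3}: it uses uniform bounds at levels $\delta_1,\delta_2$ for $\tilde f_{pop}$ and $f_{\mathrm{new}}$ together with the optimization gain, and then applies Hoeffding with $t=\Delta_R/4$ to $f_{\mathrm{new}}$ (conditionally on the training set) and to $f_{\mathrm{old}}^\ast$ exactly as you do. As you noticed, the paper's Hoeffding bound for $\tilde f_{pop}$ is never used in the final chain and only accounts for the factor $6$.
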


\begin{theorem}[Direct train/test comparison with a random test margin]
\label{thm:4B}
Let $\widetilde f_S\in\mathcal H_{\mathrm{new}}$ be the jumpboard model constructed
from a first-order descent direction, and suppose $\mathcal L_{\mathrm{train}}(f_{\mathrm{new}})
\le
\mathcal L_{\mathrm{train}}(\widetilde f_S)-\Delta_{\mathrm{ERM}}$, $\Delta_{\mathrm{ERM}}\ge 0$. 
Define the finite-test-set random margin $\Delta_R^{\mathrm{test}}
:=
\mathcal L_{\mathrm{test}}(f_{\mathrm{old}}^\ast)-\mathcal L_{\mathrm{test}}(\widetilde f_S)$.
Then, with probability at least $1-3\delta$,

\begin{small}
\begin{equation*}
\mathcal L_{\mathrm{test}}(f_{\mathrm{new}})
\le
\mathcal L_{\mathrm{test}}(f_{\mathrm{old}}^\ast)
-
\Delta_R^{\mathrm{test}}
-
\Delta_{\mathrm{ERM}}
+
2(\epsilon_M+\epsilon_K).
\end{equation*}
\end{small}
 
Consequently, on the event $\Delta_R^{\mathrm{test}}+\Delta_{\mathrm{ERM}}
>
2(\epsilon_M+\epsilon_K)$, one has $\mathcal L_{\mathrm{test}}(f_{\mathrm{new}})
<
\mathcal L_{\mathrm{test}}(f_{\mathrm{old}}^\ast)$.

 \end{theorem}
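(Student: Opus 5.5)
The statement to prove is Theorem~\ref{thm:4B}. The argument is a chain of inequalities linking test and training losses through the population risk. It uses the uniform generalization bound of Lemma~\ref{lem:gen-main} / Corollary~\ref{cor:gen-main} twice: once on the training sample (size $M$) and once on the test sample (size $K$). No Hoeffding step is applied to a single fixed model. This matters because $\widetilde f_S$ and $f_{\mathrm{new}}$ both depend on $S_{\mathrm{train}}$, so uniform control over $\mathcal H_{\mathrm{new}}$ is the natural tool.

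\textbf{Step 1: three good events.} I fix three events, each of probability at least $1-\delta$.
\begin{itemize}
\item[(E1)] $|R(f)-\mathcal L_{\mathrm{train}}(f)|\le \epsilon_M$ for all $f\in\mathcal H_{\mathrm{new}}$. This is the uniform bound on $S_{\mathrm{train}}$; if the corollary is one-sided, I take its two-sided version.
\item[(E2)] $|R(f)-\mathcal L_{\mathrm{test}}(f)|\le \epsilon_K$ for all $f\in\mathcal H_{\mathrm{new}}$. This is the same bound applied to $S_{\mathrm{test}}$, which is independent of $S_{\mathrm{train}}$, so it holds conditionally on $S_{\mathrm{train}}$ and hence unconditionally.
\item[(E3)] A third event reserved for a possible one-sided split of (E1) into upper and lower deviations, accounting for the $3\delta$ in the statement.
\end{itemize}
A union bound gives probability at least $1-3\delta$.

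\textbf{Step 2: the chain.} For $f_{\mathrm{new}}$ I write
\[
\mathcal L_{\mathrm{test}}(f_{\mathrm{new}})\le R(f_{\mathrm{new}})+\epsilon_K\le \mathcal L_{\mathrm{train}}(f_{\mathrm{new}})+\epsilon_M+\epsilon_K\le \mathcal L_{\mathrm{train}}(\widetilde f_S)-\Delta_{\mathrm{ERM}}+\epsilon_M+\epsilon_K .
\]
The last step is the hypothesis supplied by Condition~2. For the jumpboard model I go back the other way:
\[
\mathcal L_{\mathrm{train}}(\widetilde f_S)\le R(\widetilde f_S)+\epsilon_M\le \mathcal L_{\mathrm{test}}(\widetilde f_S)+\epsilon_M+\epsilon_K .
\]
By definition, $\mathcal L_{\mathrm{test}}(\widetilde f_S)=\mathcal L_{\mathrm{test}}(f_{\mathrm{old}}^\ast)-\Delta_R^{\mathrm{test}}$. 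Substituting yields exactly
\[
\mathcal L_{\mathrm{test}}(f_{\mathrm{new}})\le\mathcal L_{\mathrm{test}}(f_{\mathrm{old}}^\ast)-\Delta_R^{\mathrm{test}}-\Delta_{\mathrm{ERM}}+2(\epsilon_M+\epsilon_K).
\]
The strict-improvement consequence follows by rearranging on the stated event.

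\textbf{Membership in $\mathcal H_{\mathrm{new}}$.} I must check that $\widetilde f_S$ and $f_{\mathrm{new}}$ lie in the norm-controlled class to which the bound applies. For $\widetilde f_S=f_{\mathrm{top}}\circ(\mathrm{id}+h_{\eta v_S})\circ f_{\mathrm{bot}}$, I take $\eta$ small enough that $\eta v_S$ satisfies the norm constraints of Assumption~4(iv). I also need the normalization lower bound (iii), or the $\epsilon_{\mathrm{eng}}$ constants of Assumption~$4\epsilon$. $f_{\mathrm{new}}$ is either $f_{\mathrm{alg}}$ or $\widetilde f_S$, both of which lie in $\mathcal H_{\mathrm{new}}$.

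\textbf{Main obstacle: data dependence.} $\widetilde f_S$ is built from $S_{\mathrm{train}}$, so both its training loss and its test loss must be controlled uniformly rather than pointwise. Uniformity over $\mathcal H_{\mathrm{new}}$ in (E1)–(E2) handles this. This is also why I route $\mathcal L_{\mathrm{train}}(\widetilde f_S)$ back to $\mathcal L_{\mathrm{test}}(\widetilde f_S)$ through $R$ rather than comparing the two samples directly. The margin $\Delta_R^{\mathrm{test}}$ is kept random and never transferred via Hoeffding, which is the point of this route. Positivity of $\Delta_{\mathrm{train},S}$ is not needed for the inequality itself; it only motivates when the event $\Delta_R^{\mathrm{test}}+\Delta_{\mathrm{ERM}}>2(\epsilon_M+\epsilon_K)$ is likely.
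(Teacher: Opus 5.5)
Your proof is correct and is essentially the paper's argument: you apply the uniform bound of Corollary~\ref{cor:gen-main} once on $S_{\mathrm{train}}$ and once on the independent $S_{\mathrm{test}}$, take a union bound, and run the chain $\mathcal L_{\mathrm{test}}(f_{\mathrm{new}})\to\mathcal L_{\mathrm{train}}(f_{\mathrm{new}})\to\mathcal L_{\mathrm{train}}(\widetilde f_S)\to\mathcal L_{\mathrm{test}}(\widetilde f_S)$; the paper merely packages your two passes through $R$ into the single uniform estimate $|\mathcal L_{\mathrm{train}}(f)-\mathcal L_{\mathrm{test}}(f)|\le\epsilon_M+\epsilon_K$. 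The one discrepancy is your event (E3): the corollary is already two-sided, so no split is needed and the inequality holds with probability at least $1-2\delta$, whereas the paper assigns the extra $\delta$ to the optional train/test alignment event used only to motivate $\Delta_R^{\mathrm{test}}>0$.
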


Theorem~\ref{thm:4B} does not assume that 
\(\Delta_R^{\mathrm{test}}\) is deterministically positive. 
For the training-set jumpboard model \(\tilde f_S\), positivity of this
finite-sample margin is supported probabilistically by the empirical
train/test alignment mechanism, i.e., $\mathbb P(\mu^\top g\le 0)$. We show this in Appendix  \ref{app:proof-thm4B}, Corollary \ref{cor:alignment-test-margin}. The relevant failure event has two sources:
the empirical direction \(\mu\) may fail to align with the population direction
\(\bar\mu\), and the independent test direction \(g\) may fluctuate away from
\(\bar\mu\). Under the covariance-control condition in Appendix~F, the empirical
alignment bound gives $ \mathbb P(\mu^\top g\le 0)
\le
\frac{4C_\Sigma\tau^2}{M\|\bar\mu\|^2}
+
\frac{4C_\Sigma\tau^2}{K\|\bar\mu\|^2}
+
\frac{4C_\Sigma\tau^2\operatorname{tr}(\Sigma)}
{KM\|\bar\mu\|^4}$.
Therefore the probability that the training-selected direction remains
positively aligned with the independent test direction increases with both the
training size \(M\), the test size \(K\), and the width-dependent signal
\(\|\bar\mu\|^2\). In the uniformly active regime
\(\|\bar\mu\|^2=\Theta(N)\),   this becomes $\mathbb P(\mu^\top g\le 0)
=
O\!\left(
\frac1{MN}
+
\frac1{KN}
\right)$, up to the mixed covariance term.
Thus width does not make the finite-test margin deterministic; rather, it
increases the probability that the empirical jumpboard gain is visible on the
test set, while the training size controls the reliability of the
training-selected direction.

\section{Proof Roadmap}
\label{sec:roadmap}

The three risk-transfer statements in Section~4 are supported by three ingredients. 

\textbf{Depth creates a better direction.}
The first ingredient is a local existence statement: under Assumption 1 and Condition 1, the inserted residual block is not first-order dead at zero initialization. Hence there exists a descent direction in parameter space and therefore a jumpboard model $\tilde f$ in $\mathcal H_{\mathrm{new}}$ whose population risk is strictly below that of $f_{\mathrm{old}}^\ast$. In the population-risk route, this produces a deterministic margin $\Delta_R>0$ and a selected jumpboard model $\tilde f_{pop}$.  This proof is shown in Appendix \ref{app:proof-thm1}. 

\textbf{Generalization error remains controllable.}
The second ingredient is a norm-based generalization bound for normalized residual networks. Under Assumption 2 and either 4 or 4$\epsilon$, Lemma~\ref{lem:gen-main} in Appendix \ref{app:proof-lem-gen-main} gives a uniform bound  for the loss class associated with $\mathcal H_{\mathrm{new}}$. In the main text, this bound appears only through the shorthand quantities $\epsilon_M$ and $\epsilon_K$. If one uses RMSNorm with an engineering constant $\epsilon>0$, the same role is played by the $\epsilon$-version of the lemma, which removes the pre-normalization non-degeneracy requirement from the main text.  The proof is shown in Appendix \ref{app:proof-lem-gen-main} and  \ref{sec:remove-N-by-eps}.


 

\textbf{Optimization converts existence into an actual final model.}
The third ingredient is an optimization comparison: the  model $f_{\mathrm{new}}$ is assumed to achieve training loss no worse than the jumpboard model, up to the gain term $\Delta_{\mathrm{ERM}}$. This is the point where the existential jumpboard model is connected to the actual final output. 

\textbf{Route A: through population risk.}
Theorem~\ref{thm:4A} follows the chain
 
\begin{small}
\begin{equation*}
R(f_{\mathrm{old}}^\ast)\xrightarrow{-\Delta_R}R(\tilde f_{pop})\xrightarrow{\epsilon_M}\mathcal L_{\mathrm{train}}(\tilde f_{pop})\xrightarrow{-\Delta_{\mathrm{ERM}}}\mathcal L_{\mathrm{train}}(f_{\mathrm{new}})\xrightarrow{\epsilon_M}R(f_{\mathrm{new}}), 
\end{equation*}
\end{small}

and then uses Hoeffding to connect the two ends to test risk. This route is tighter because its total generalization price is only $2\epsilon_M$, but it relies on a non-vanishing deterministic population-risk margin $\Delta_R>0$. The proof is shown in Appendix \ref{app:proof-thm4A}.

\textbf{Route B: directly on train/test risks.}
Theorem~\ref{thm:4B} bypasses population risk and Hoeffding altogether. Instead, it applies the normalized-network generalization bound once on the training sample and once on the independent test sample, yielding a uniform control of $|\mathcal L_{\mathrm{train}}(f)-\mathcal L_{\mathrm{test}}(f)|$ through $\epsilon_M+\epsilon_K$. The proof is shown in Appendix \ref{app:proof-thm4B}, 
and this gives the chain
\begin{small}
\begin{equation*}
  \mathcal L_{\mathrm{test}}(f_{\mathrm{old}}^\ast)\xrightarrow{-\Delta_R^{\mathrm{test}}}\mathcal L_{\mathrm{test}}(\tilde f_S)\xrightarrow{\epsilon_M+\epsilon_K}\mathcal L_{\mathrm{train}}(\tilde f_S)\xrightarrow{-\Delta_{\mathrm{ERM}}}\mathcal L_{\mathrm{train}}(f_{\mathrm{new}})\xrightarrow{\epsilon_M+\epsilon_K}\mathcal L_{\mathrm{test}}(f_{\mathrm{new}}).  
\end{equation*}
\end{small}

The two routes are logically independent and close the same scaling conclusion in complementary regimes. Route A is sharper in the ordinary regime where $\Delta_R$ is not too small and $M\gg K$, while Route B is more robust near the degenerate regime where the deterministic population margin becomes too small for Hoeffding-based transfer to remain useful. Together they form the double strictification underlying the main theorem section. 

\section{Roles of Depth, Width, and Data}
\label{sec:roles-depth-width-data}

\textbf{Depth.}
The role of depth in our framework is existential rather than merely parametric. Adding a residual block matters because it may create a nonzero first-order descent direction near zero initialization, which in turn yields a jumpboard model in the expanded hypothesis class. In this sense, depth is the source of \emph{new improving directions}: it enlarges the function class in a way that can be converted into a smaller population risk and, under the main theorems, into improved test performance.

At the same time, this mechanism need not persist with uniform strength. As the model becomes deeper and better adapted, the available first-order descent signal may shrink, and Condition~1 may eventually fail at the relevant insertion layers. This motivates a functional notion of the \emph{deepest model}: a model is deepest if inserting additional residual blocks can no longer improve performance through the first-order mechanism. Equivalently, the model has already reached representation optimality at the relevant layers, so that infinitesimal perturbations of the intermediate representations no longer decrease the loss to first order. In this regime, further depth may still enlarge the hypothesis class formally, but it no longer creates a useful first-order descent direction from zero initialization. This viewpoint suggests a natural progression of depth scaling. In the clearly improvable regime, increasing depth can still produce a strict gain because a stable first-order descent direction exists. Closer to the deepest-model regime, the remaining signal becomes weak: the model is not yet fully saturated, but the first-order gain becomes increasingly difficult to extract and certify. At the limiting deepest-model regime, the first-order mechanism is exhausted, and further depth no longer yields a meaningful strict improvement. Equivalently, the deepest-model regime is reached when Condition~1 fails at every admissible insertion layer: the activation-gradient signal is zero, or no zero-output residual block can generate a tangent direction non-orthogonal to the remaining activation gradient.

 \textbf{Data.}
Data determines whether the gain created by model expansion survives statistical uncertainty. 
Under the norm-controlled framework of Lemma~\ref{lem:gen-main}, the
generalization term decays at the standard \(M^{-1/2}\) statistical rate, but
with constants that depend on the architecture through norm, depth, width, and
normalization factors. When these architecture-dependent factors are controlled
or treated as fixed, the bound reduces to the familiar shorthand
\(\widetilde O(\sqrt{d/M})\), where \(d\) is the effective parameter dimension.
When depth, width, or normalization constants are themselves scaling variables,
the corresponding architecture-dependent factors must be kept explicit; see
Appendix~\ref{app:data-requirement} for the resulting data requirement. In other words, model expansion is beneficial only when the growth of data is fast enough to offset the statistical price of the enlarged hypothesis class.

 This viewpoint is structurally consistent with empirical data--parameter
coupling heuristics in large language models. For example, compute-optimal
scaling heuristics such as Chinchilla suggest an empirical operating point of
roughly \(20\) training tokens per parameter, i.e., $\frac{M}{d}\approx 20$,
when \(M\) is interpreted as the number of training tokens. This ratio should not be read as a theorem-level constant
predicted by our bound. Rather, our result supports the qualitative structure
behind such heuristics: data must grow with model complexity in order to offset
the statistical cost of expansion. The precise proportionality factor depends
on architecture-dependent norm amplification, the training objective, compute
budget, and optimization setting. Our bound supports only the qualitative need for data–complexity coupling; it does not predict the empirical ratio.

\textbf{Depth--width coupling.}
Near the deepest-model regime, the remaining first-order improvement signal becomes small. In Route A, this weakens the Hoeffding transfer because the deterministic margin shrinks; in Route B, the main difficulty becomes whether the favorable direction is still visible at finite sample size. This is quantified by Theorem~\ref{thm 2} in the Appendix \ref{app:proof-thm2}. As $\|\bar\mu\|\to 0$, train/test directional alignment becomes harder to observe reliably, even if a beneficial direction still exists. Under the uniformly active regime, $\|\bar\mu\|^2=\Theta(N)$, so the bound improves to $\mathbb P(\mu^\top g\le 0)=O\!\left(\frac{1}{KN}+\frac{1}{MN}\right)$. Therefore, as depth approaches the deepest-model regime, width acts as a compensating factor by improving finite-sample observability of weak first-order gains. This is why our theory naturally leads to a depth--width coupling: deeper models are not guaranteed to improve at fixed width, but can continue to improve in a jointly scaled regime where width and data grow sufficiently fast.

\section{Experiments}
\label{sec:experiments}

This section empirically examines the main theoretical ingredients of our framework. Our purpose is not to claim state-of-the-art performance, but to test whether the empirical phenomena predicted by the theory are visible in a controlled setting: near-diagonal activation-gradient covariance,  test-loss performance with scale, decay of first-order signals near the deepest-model regime.

For architectures, we use residual networks of depth $6n+2$, and enlarge width by multiplying the number of channels in each residual stage by a width factor. Unless otherwise stated, the normalization layer is chosen to match the theoretical setting in the main text.  For datasets, the main scaling and gradient-decay experiments are conducted on CIFAR-10 and CIFAR-100. We additionally include ImageNet results in the covariance experiment to test whether the near-diagonal activation-gradient covariance pattern also appears in a larger-scale setting. The training set is used for optimization and for estimating training-side activation gradients, while the test set is used both for standard evaluation and for estimating test-side activation gradients. For training details, all models are trained under the same optimization setup, including optimizer type, learning-rate schedule, batch size, data preprocessing, and augmentation pipeline, which are shown in Appendix \ref{app:experiments setting}.  Appendix \ref{app:transformer} discusses external Transformer evidence from prior work..

\subsection{Empirical support for Assumption 3}
\label{sec:exp-covariance}
\begin{figure}[!htbp]
\centering
\begin{minipage}{0.24\textwidth}
\centering
\includegraphics[width=\textwidth]{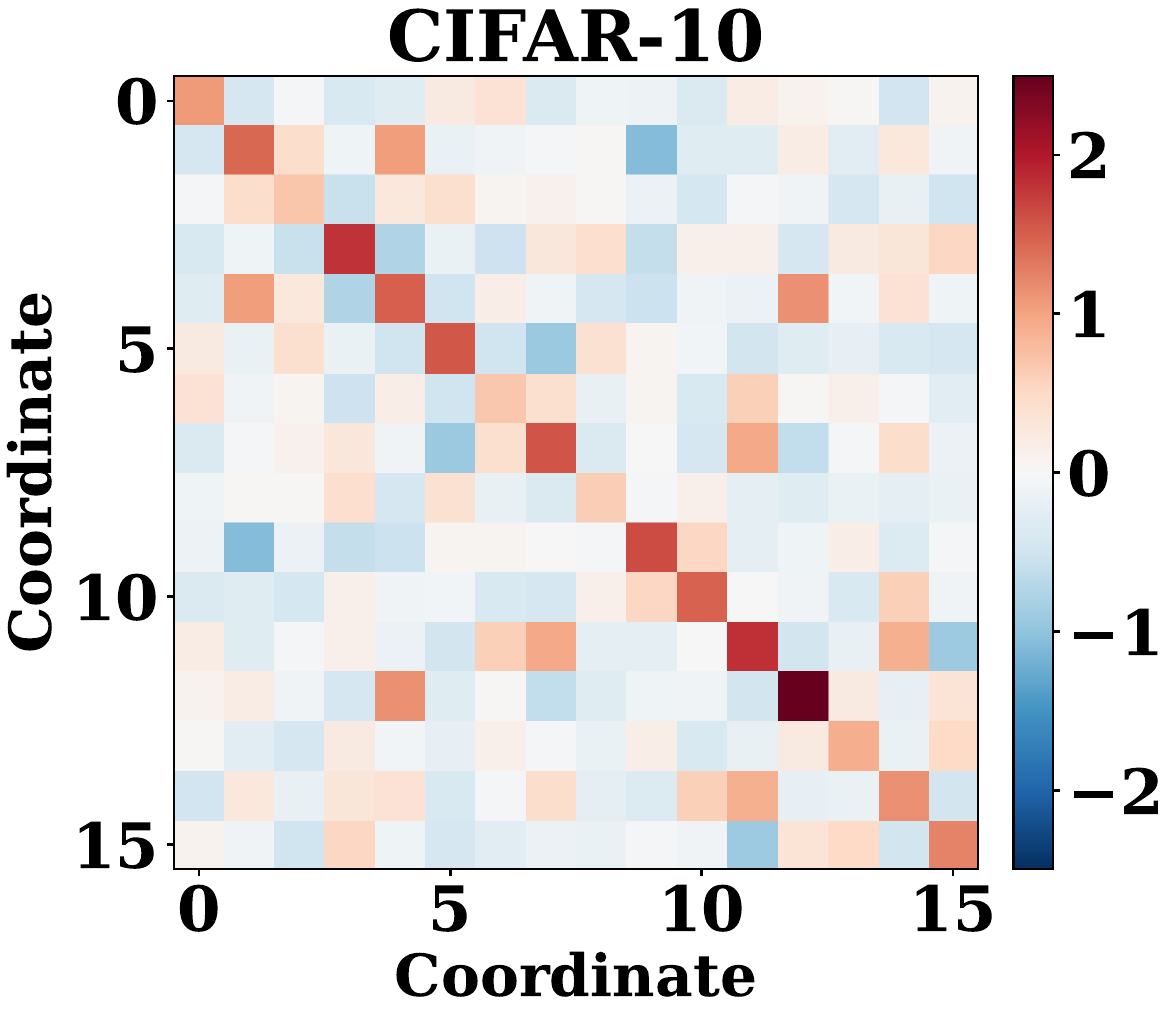}
(a)
\end{minipage}
\begin{minipage}{0.24\textwidth}
\centering
\includegraphics[width=\textwidth]{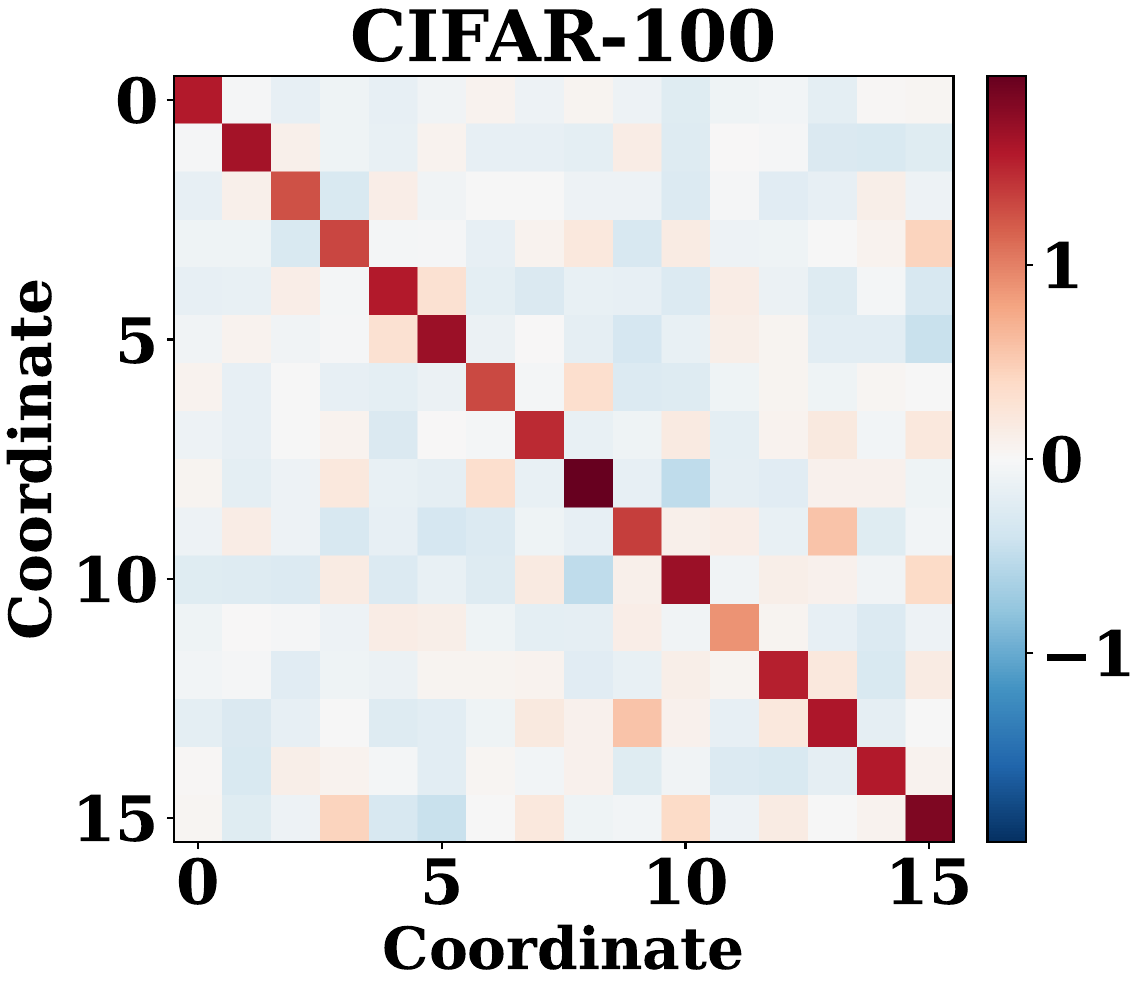}
(b)
\end{minipage}
\begin{minipage}{0.24\textwidth}
\centering
\includegraphics[width=\textwidth]{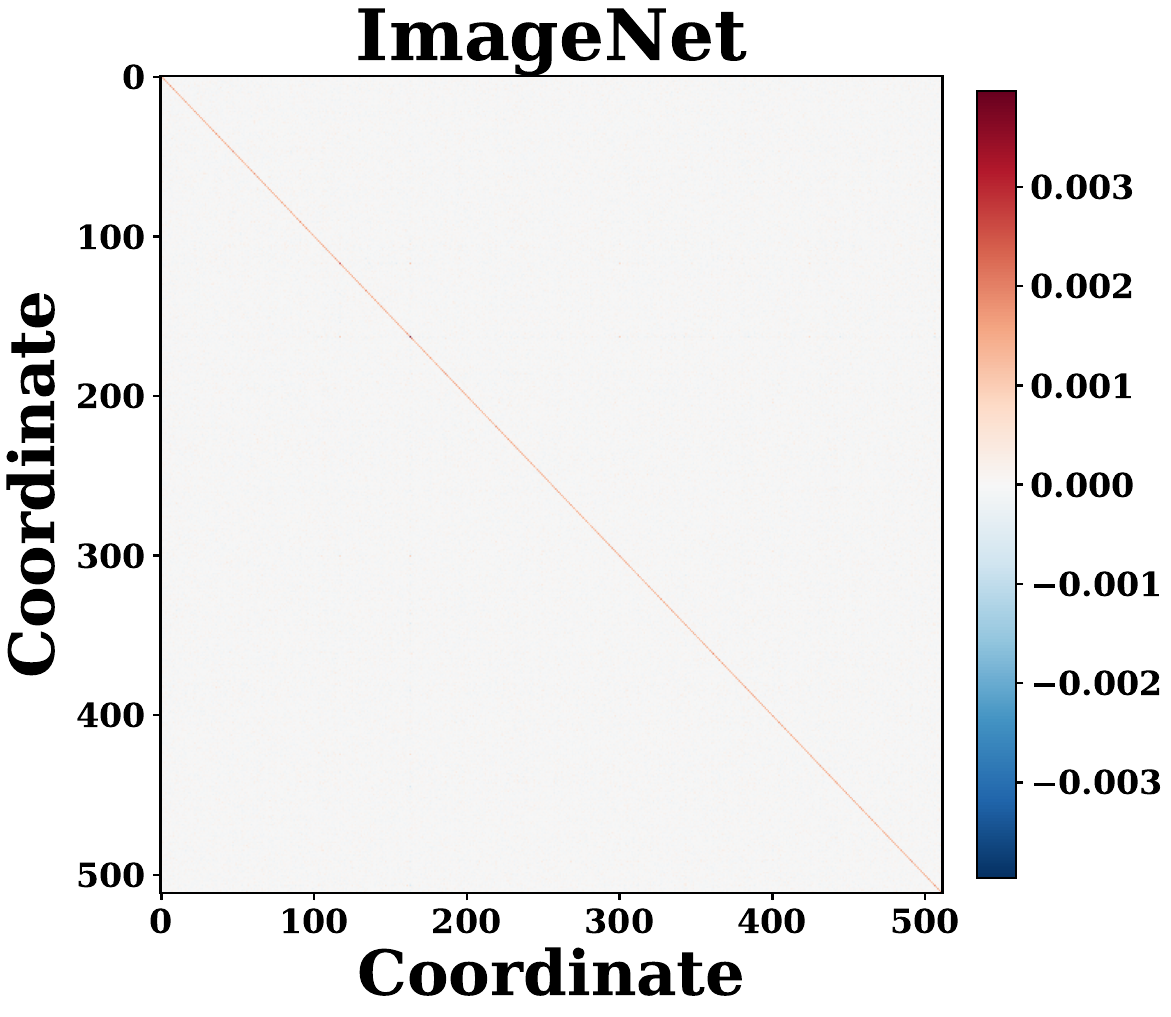}
(c)
\end{minipage}
\begin{minipage}{0.24\textwidth}
\centering
\includegraphics[width=\textwidth]{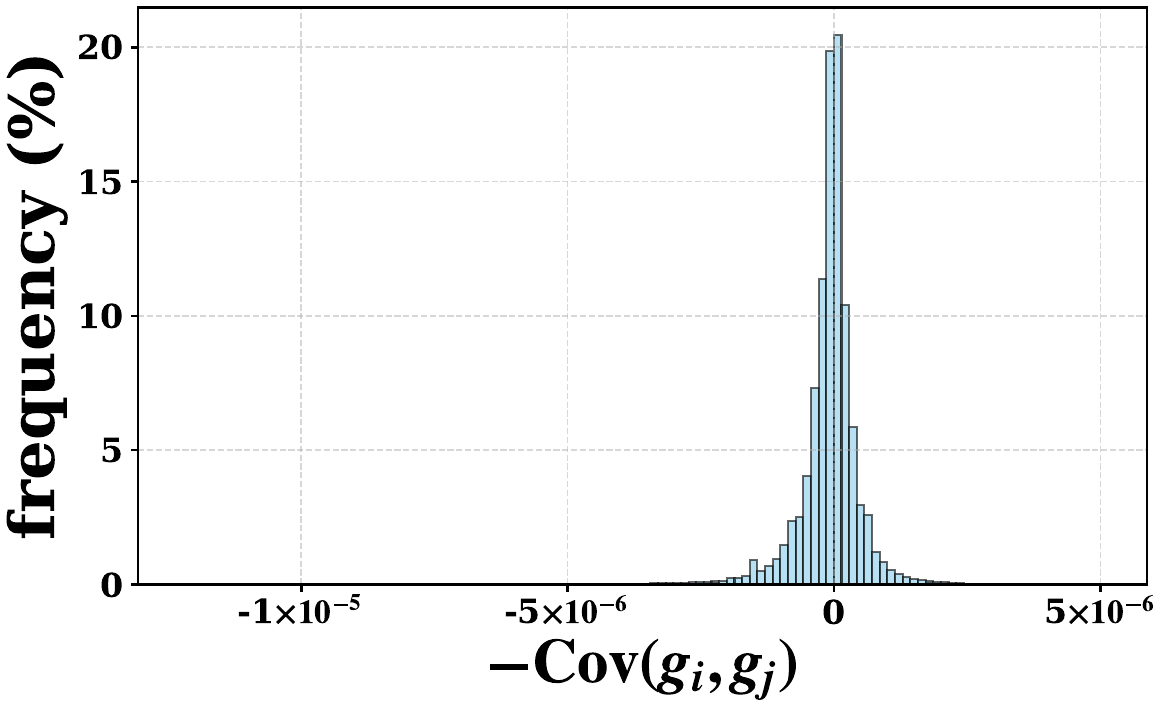}
(d)
\end{minipage}
\caption{
Empirical support for Assumption 3.
\textbf{Figure (a),(b),(c)}: A representative covariance heatmap of activation gradients. The activation before the fully connected layer. ((a) : CIFAR-10, ResNet-8, base channel 4; (b) : CIFAR-100, ResNet-8,  base channel 4; (c) :ImageNet: ResNet-18, channel 64).
\textbf{Figure (d):} Distribution of sampled off-diagonal covariance entries for a higher-dimensional case (ResNet-56, channel 16, activation before the first ResNet block, sampling $2*10^7$ pairs).
All figures suggest that activation-gradient covariance is close to diagonal in practice.
}
\label{fig:Assumption 3}
\vspace{-0.1cm}
\end{figure}

This experiment provides empirical evidence for the covariance-control behavior required by Assumption 3. Rather than directly verifying the full spectral condition, we test a stronger and more interpretable empirical property: whether different coordinates of the activation gradient are nearly uncorrelated across samples. If so, the covariance matrix is close to diagonal, and its spectral behavior is governed mainly by coordinate-wise variances rather than by large cross-coordinate correlations.

To test this, we fix a target layer \(l^\ast\), compute sample-level activation gradients for a trained model, and examine their empirical covariance structure. Since storing the full covariance matrix is expensive at larger widths, we use two complementary visualizations: Figure~\ref{fig:Assumption 3} (a) (b) and (c) show a representative covariance heatmap for a low-dimensional case, while Figure~\ref{fig:Assumption 3} (d) shows the distribution of sampled off-diagonal covariance entries over a much larger set of coordinate pairs. Besides CIFAR datasets, we also show ImageNet results in this part.

The results in Figure~\ref{fig:Assumption 3} show a clear and consistent pattern: the off-diagonal covariance entries are sharply concentrated near zero, with only a very small tail away from the origin. Empirically, this indicates that the off-diagonal part of the activation-gradient covariance matrix is extremely small, so the covariance structure is close to diagonal in practice. The heatmap supports this interpretation visually, while the histogram shows that near-zero off-diagonal covariance is not an isolated artifact of a few coordinates but a dominant distributional phenomenon.

From the perspective of our theory, this observation provides direct empirical support for Assumption 3. If the covariance matrix is approximately diagonal, then its spectral behavior is controlled primarily by coordinate-wise variances, making the spectral-control condition used in Theorem~\ref{thm 2} in Appendix \ref{app:proof-thm2} substantially more plausible in trained normalized residual networks. We emphasize that the experiment is not intended to prove that \(\Sigma\) is exactly diagonal in a strict mathematical sense. Rather, it shows that the off-diagonal correlations are sufficiently small in practice for the ``approximately diagonal, spectrally controlled'' approximation underlying our width-alignment analysis to be credible.

\subsection{Decay of first-order activation-gradient signals with depth}
\label{sec:exp-gradient-decay}
\begin{figure}[h]
\centering
\begin{minipage}{0.24\textwidth}
\centering
\includegraphics[width=\textwidth]{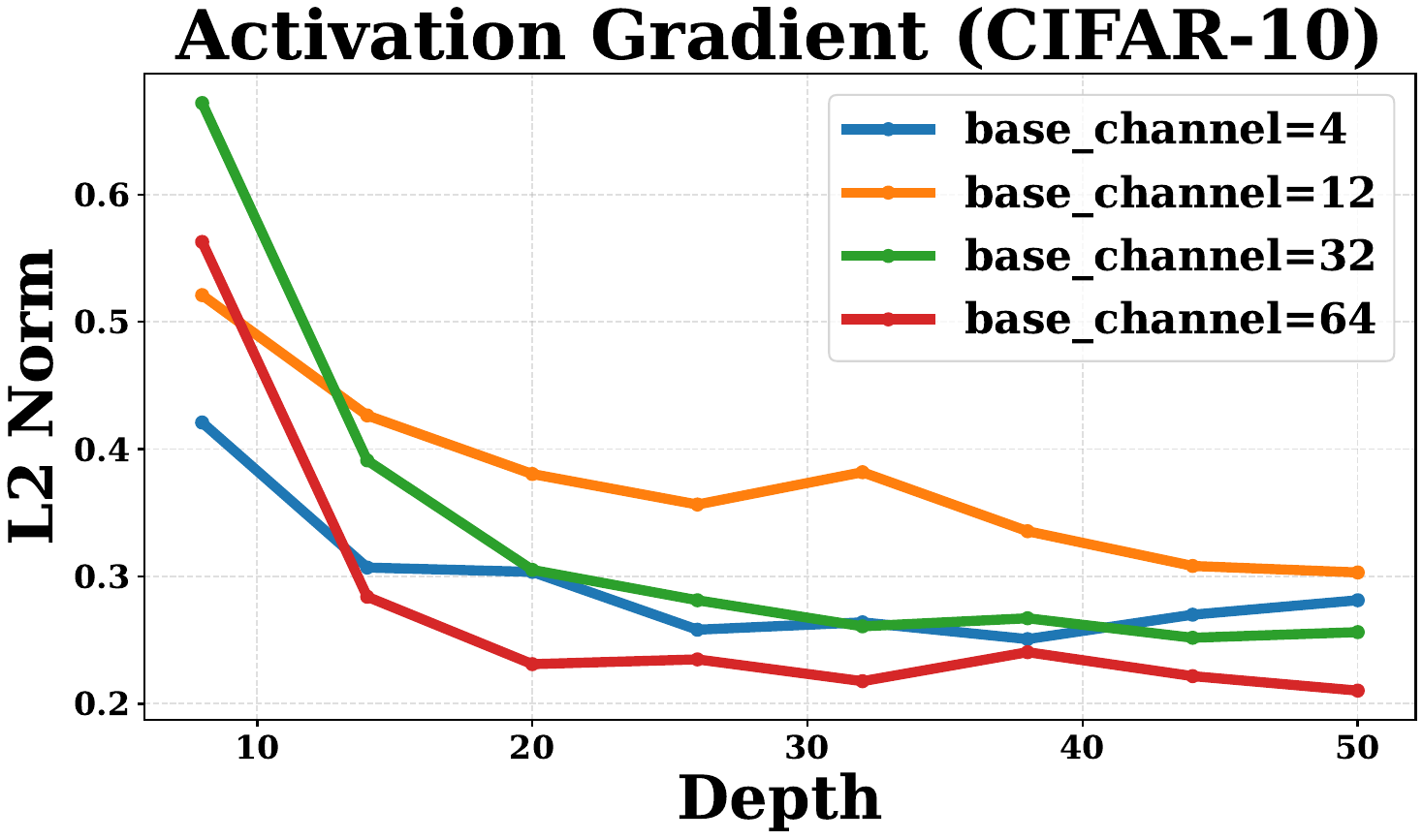}
(a)
\end{minipage}
\begin{minipage}{0.24\textwidth}
\centering
\includegraphics[width=\textwidth]{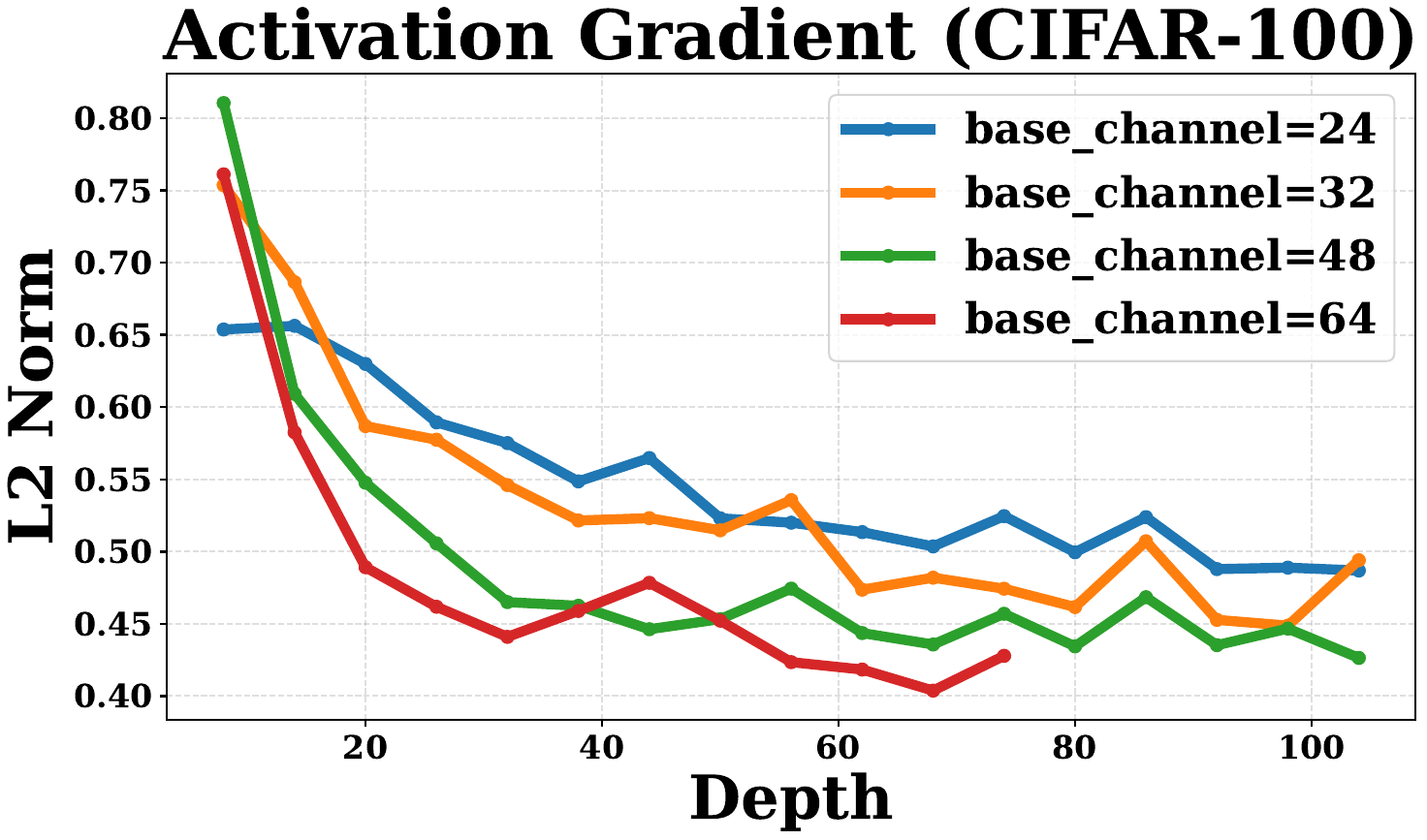}
(b)
\end{minipage}
\begin{minipage}{0.24\textwidth}
\centering
\includegraphics[width=\textwidth]{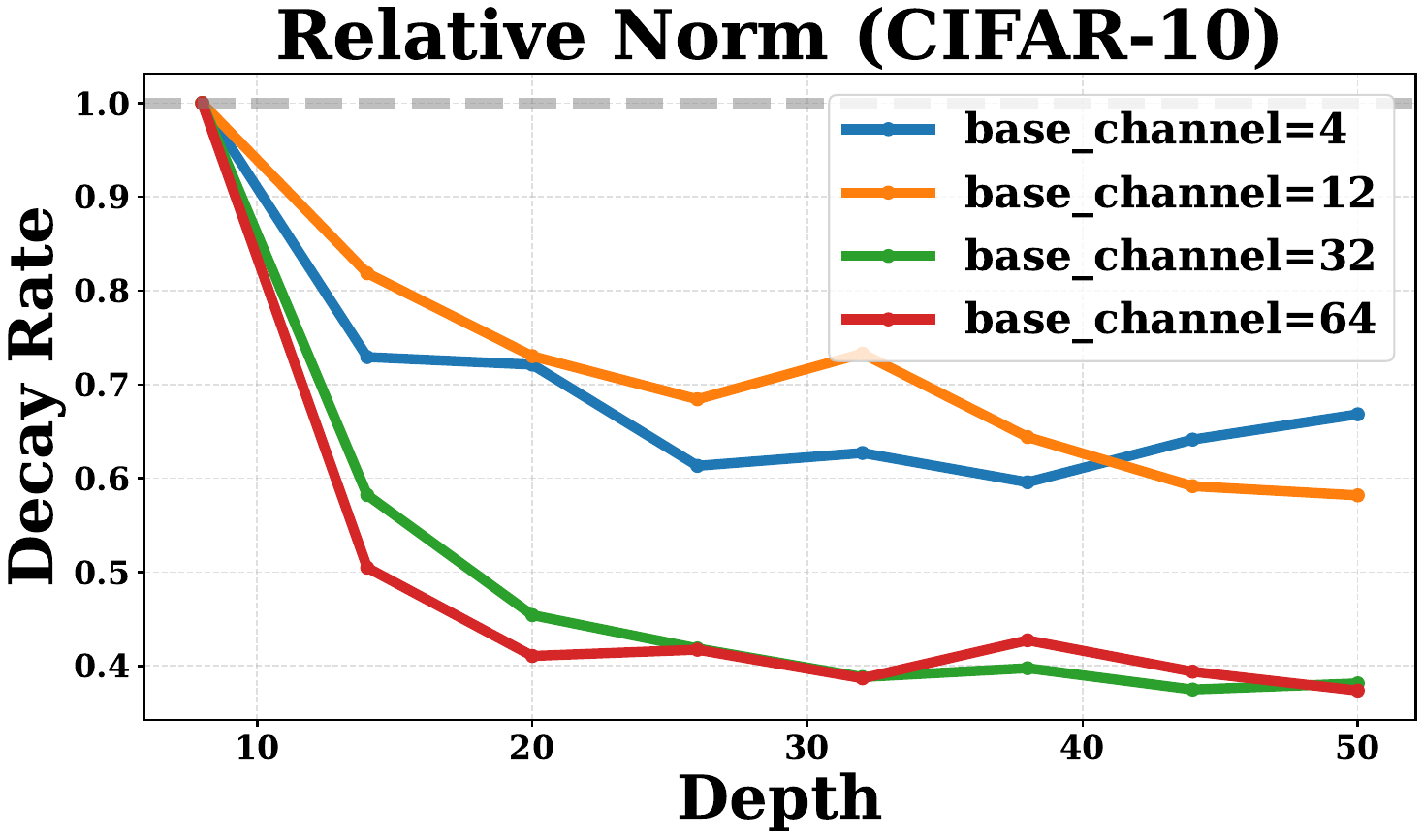}
(c)
\end{minipage}
\begin{minipage}{0.24\textwidth}
\centering
\includegraphics[width=\textwidth]{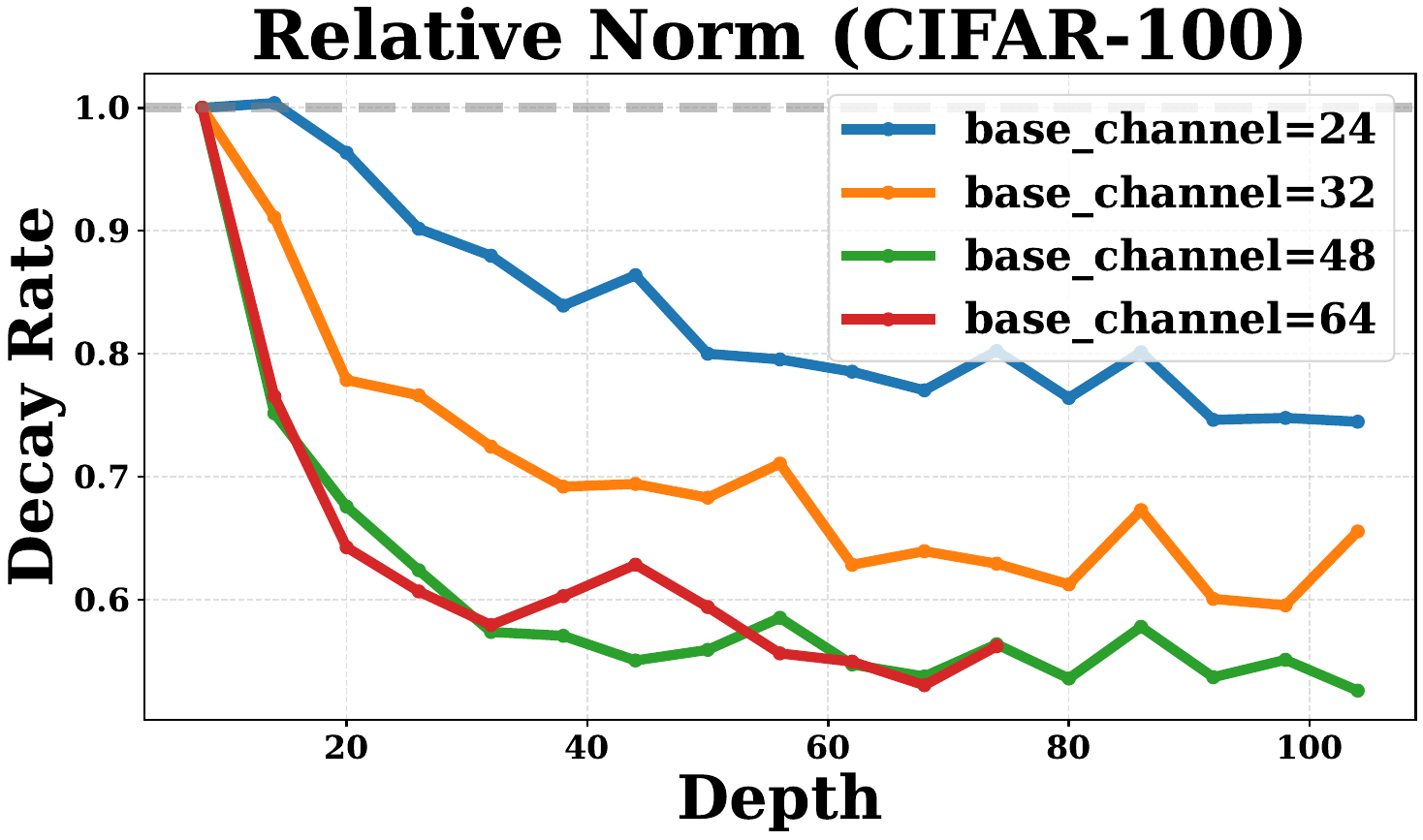}
(d)
\end{minipage}
\caption{
Decay of first-order average activation-gradient signals with depth.
\textbf{(a),(b)}: absolute \(\ell_2\)-norm of the activation gradient immediately before the final fully connected layer.
\textbf{(c),(d)}: the same quantity after width-wise normalization by the shallowest model.
The absolute plot shows that the remaining first-order signal decreases as depth increases, while the normalized plot shows that this decay pattern is structurally consistent across widths.
}
\label{fig:grad_decay}
\vspace{-0.1cm}
\end{figure}

This experiment is designed to probe the weakening of the first-order improvement signal as the model becomes deeper. According to the discussion in the main text, approaching the deepest-model regime should be accompanied by a shrinkage of the activation-gradient signal, which makes additional depth harder to exploit through the first-order mechanism.

To test this phenomenon, we consider ResNet models with different depths and varying the base channel width. For each trained model, we select the activation immediately before the final fully connected layer and compute the corresponding activation gradient with respect to the loss. We use the \(\ell_2\)-norm of the average activation gradient as an empirical proxy for the remaining first-order signal. In addition to the absolute norm, we also report a width-wise normalized version, obtained by dividing each curve by its value at the shallowest depth, in order to isolate the relative decay pattern across widths. 

Figure~\ref{fig:grad_decay} shows two complementary views of this phenomenon. The Figure~\ref{fig:grad_decay}(a), (b) report the absolute activation-gradient norm, while the   Figure~\ref{fig:grad_decay}(c), (d) shows the corresponding relative decay after width-wise normalization. In the absolute plot, the overall trend is clear: as depth increases, the activation-gradient norm decreases substantially across all tested widths, indicating that deeper models carry a weaker remaining first-order improvement signal. This directly supports the deepest-model interpretation in the main text.

The normalized plot further shows that this weakening is not merely an artifact of different initial signal magnitudes across widths. After normalization, the curves still exhibit a clear overall downward trend, which indicates that the decay pattern itself is induced primarily by increasing depth. In particular, much of the reduction occurs in the early stage of deepening, after which the curves flatten in a lower-signal regime. This suggests that the first-order signal is depleted progressively rather than disappearing abruptly at a single critical depth.

Another important observation is that width mainly affects the absolute signal level and the stability of the decay trajectory, rather than the existence of the decay itself. Across all tested widths, the remaining first-order signal weakens as depth grows. However, wider models tend to maintain a larger absolute signal, while narrower models show more visible late-stage fluctuation. This is consistent with our theoretical picture: width does not create the first-order signal, but it can make weak signals more observable and more stable at finite sample size.

From the viewpoint of the theory, Figure~\ref{fig:grad_decay} supports the claim that near the deepest-model regime, the population-level first-order signal becomes small, so the gain available from further depth expansion weakens. This helps explain why strict improvement becomes harder to certify in very deep models, and why width becomes increasingly important as a compensating factor in the weak-signal regime.

 \subsection{Test loss under joint depth and width scaling}
\label{sec:exp-scaling-performance}
\begin{figure}[!htbp]
\centering
\begin{minipage}{0.24\textwidth}
\centering
\includegraphics[width=\textwidth]{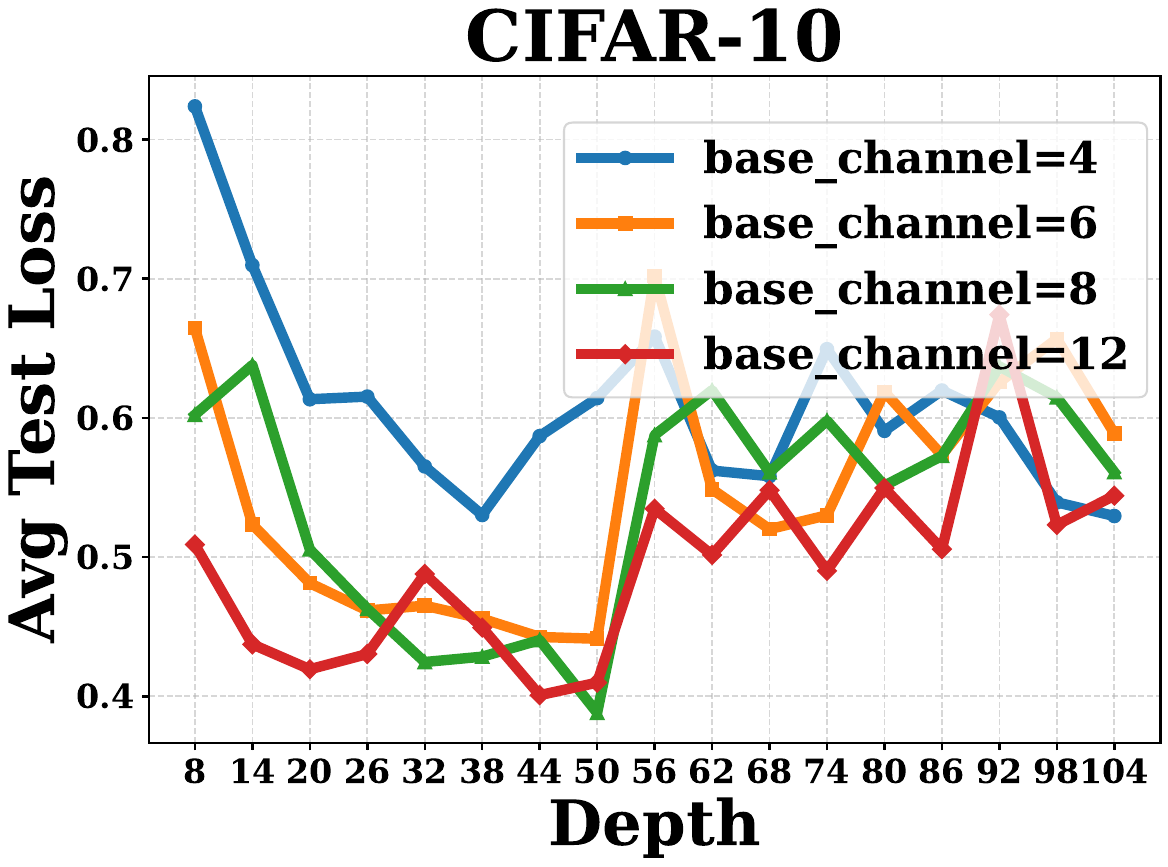}
(a)
\end{minipage}
\begin{minipage}{0.24\textwidth}
\centering
\includegraphics[width=\textwidth]{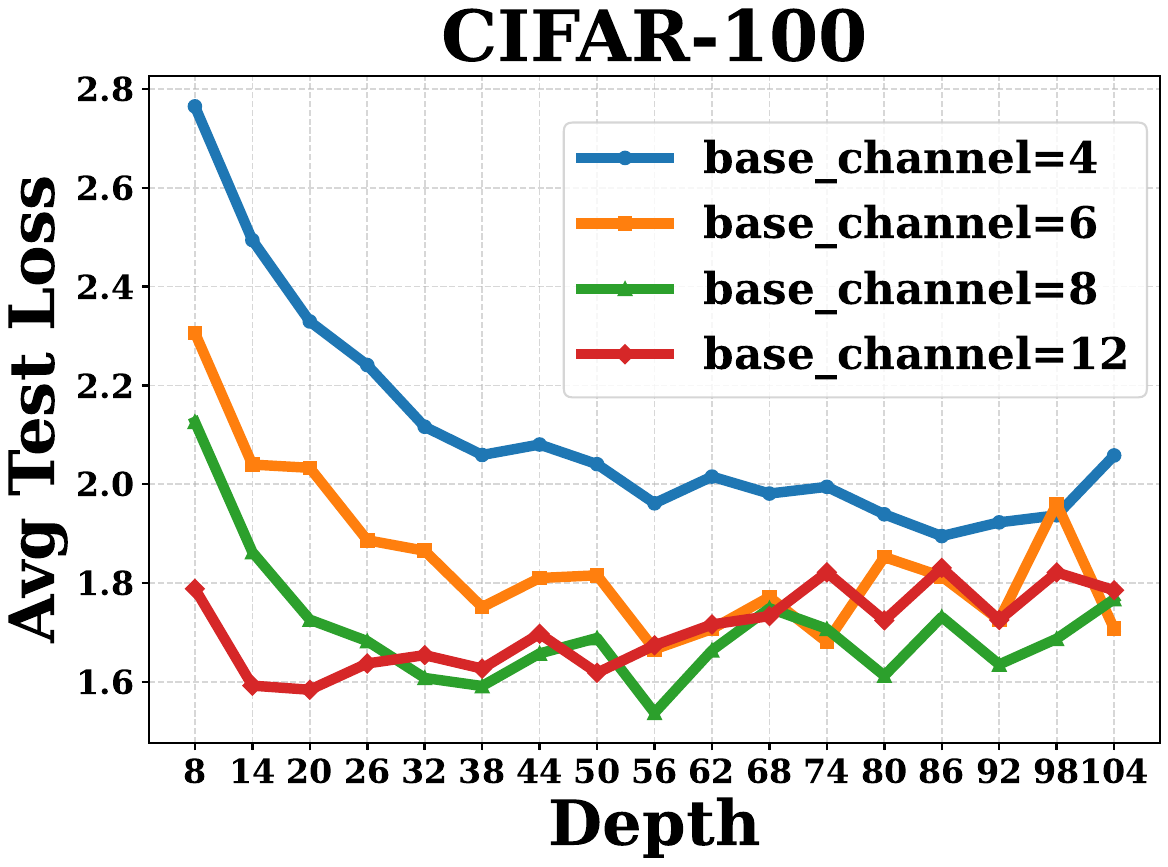}
(b)
\end{minipage}
\begin{minipage}{0.24\textwidth}
\centering
\includegraphics[width=\textwidth]{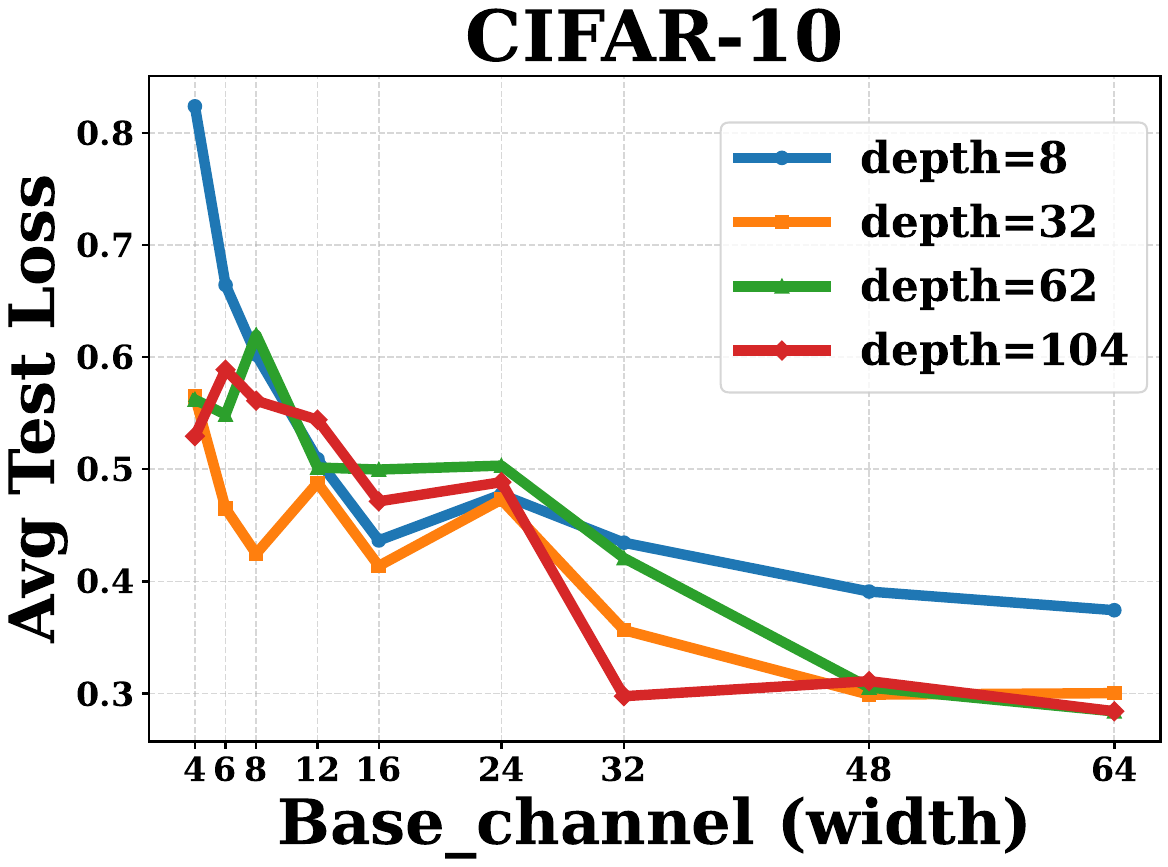}
(c)
\end{minipage}
\begin{minipage}{0.24\textwidth}
\centering
\includegraphics[width=\textwidth]{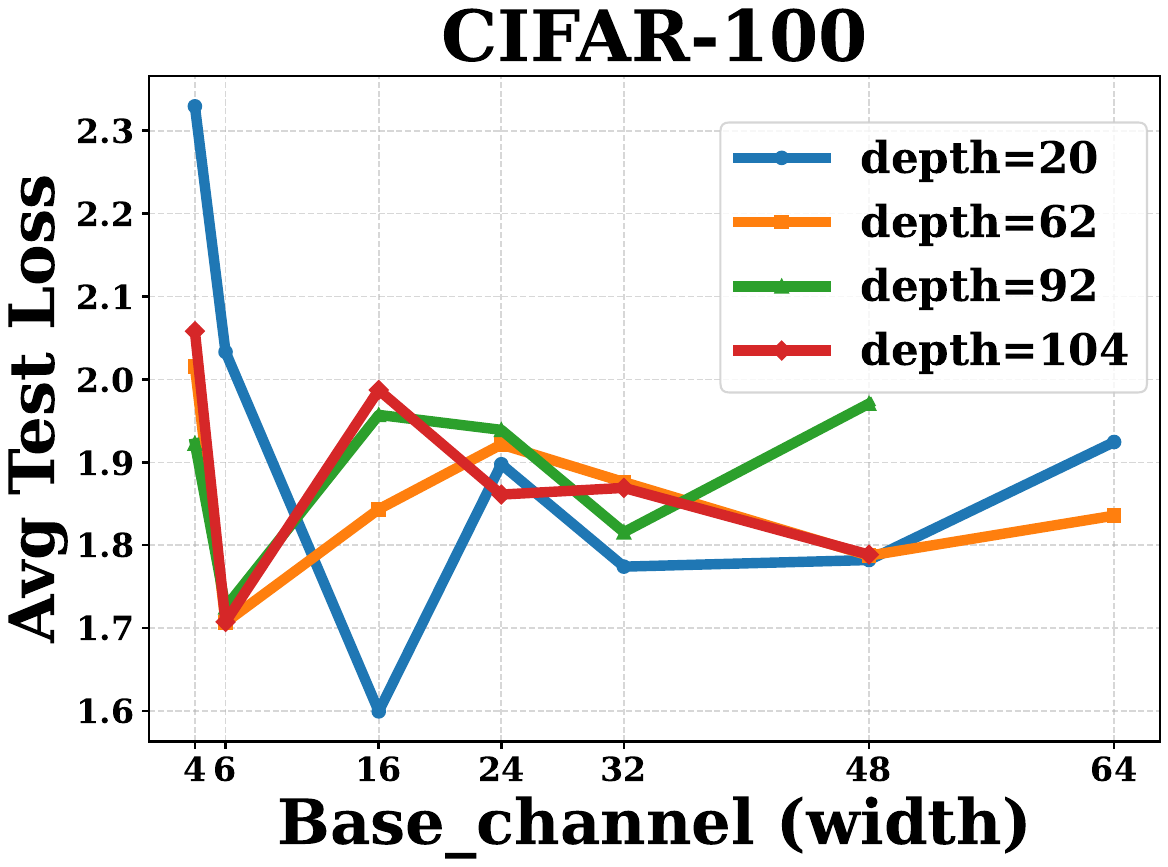}
(d)

\end{minipage}

\caption{Test loss under joint depth and width scaling. (a),(b) show test loss as a function of depth at fixed widths; (c), (d) show test loss as a function of width at fixed depths.}
\label{fig:loss scaling}
\end{figure}
This experiment is designed to test whether the empirical test-loss behavior is consistent with the qualitative predictions of the main theorems, namely that depth expansion can improve test performance in a jointly scaled regime, while width stabilizes the finite-sample observability of such gains. Figure~\ref{fig:loss scaling} provides two complementary views: test loss as a function of depth at several widths, and test loss as a function of width at several depths. Curves report the mean over 5 random seeds, with shaded regions showing one standard deviation.

The first observation is that the overall trend is favorable: increasing depth and width often reduces test loss, which is consistent with the theorem-level prediction that model expansion can improve test performance when the available gain dominates the statistical cost. At the same time, for CIFAR-10, this improvement is clearly non-monotonic. After an initial improvement, the test loss rebounds noticeably at larger depths. This suggests that increasing depth alone is not sufficient; when width does not grow adequately with depth, the test-side benefit of further deepening becomes unstable and may partially disappear. For CIFAR-100, we do not observe the same non-monotonic rebound within the tested depth range, possibly because the maximum depth explored here is not yet large enough.

The second observation is that width acts  as a stabilizing factor rather than as an unlimited source of gain. In the width-scaling plot, increasing width produces a substantial reduction in test loss in the narrow-to-moderate regime, especially for deeper models. However, once width becomes sufficiently large, the curves flatten and further widening yields only marginal improvement. This saturation behavior is  consistent with our theoretical picture.

A third conclusion is that the best depth depends on width. At small width, deeper models can become less reliable and may even show rebound in test loss, whereas at larger width, depth scaling becomes substantially more stable. This supports the joint-scaling interpretation developed in the main text: depth creates new improving directions, but width is needed to make those gains reliably visible on finite data. Therefore, Figure~\ref{fig:loss scaling} should be read not as evidence that ``deeper is always better at fixed width,'' but rather as empirical support for a depth--width coupling, in which the benefit of additional depth depends critically on whether width grows enough to support it.

\section{Discussion and Conclusion}

The main theorems first reveal a qualitative scaling mechanism. On this basis, if one further introduces model-dependent assumptions, this qualitative mechanism is possible to be converted into the power-law form commonly observed in empirical scaling laws. Specifically, in  Appendix \ref{app:power-law}
 we present one possible assumption-based route: Assumption G characterizes the coupling between the remaining excess risk and the population-level first-order improvement signal, while Assumption G' converts this first-order signal into the risk decrease obtained from each depth expansion step. Together, these assumptions induce a recursion on the remaining excess risk, which yields a power-law scaling relation in the asymptotic regime.

We develop a theorem-driven account of when depth expansion in normalized residual networks improves test risk. Our framework links three ingredients: a jumpboard model with lower population risk, optimization gain, and generalization transfer. This yields two complementary qualitative scaling-mechanism  results: a tighter population-risk route and a more robust train/test route. The analysis also clarifies that depth creates improving directions, width preserves their finite-sample observability, and data pays for the statistical cost of model expansion.
\bibliographystyle{plainnat}     
\bibliography{mybibtext}     

\begin{thebibliography}{32}
\providecommand{\natexlab}[1]{#1}
\providecommand{\url}[1]{\texttt{#1}}
\expandafter\ifx\csname urlstyle\endcsname\relax
  \providecommand{\doi}[1]{doi: #1}\else
  \providecommand{\doi}{doi: \begingroup \urlstyle{rm}\Url}\fi

\bibitem[Arora et~al.(2018{\natexlab{a}})Arora, Cohen, and Hazan]{arora2018optimization}
Sanjeev Arora, Nadav Cohen, and Elad Hazan.
\newblock On the optimization of deep networks: Implicit acceleration by overparameterization.
\newblock In \emph{International conference on machine learning}, pages 244--253. PMLR, 2018{\natexlab{a}}.

\bibitem[Arora et~al.(2018{\natexlab{b}})Arora, Ge, Neyshabur, and Zhang]{arora2018compression}
Sanjeev Arora, Rong Ge, Behnam Neyshabur, and Yi~Zhang.
\newblock Stronger generalization bounds for deep nets via a compression approach.
\newblock In \emph{International conference on machine learning}, pages 254--263. PMLR, 2018{\natexlab{b}}.

\bibitem[Bachlechner et~al.(2021)Bachlechner, Majumder, Mao, Cottrell, and McAuley]{bachlechner2021rezero}
Thomas Bachlechner, Bodhisattwa~Prasad Majumder, Henry Mao, Gary Cottrell, and Julian McAuley.
\newblock Rezero is all you need: Fast convergence at large depth.
\newblock In \emph{Uncertainty in artificial intelligence}, pages 1352--1361. PMLR, 2021.

\bibitem[Bahri et~al.(2024)Bahri, Dyer, Kaplan, Lee, and Sharma]{bahri2024explaining}
Yasaman Bahri, Ethan Dyer, Jared Kaplan, Jaehoon Lee, and Utkarsh Sharma.
\newblock Explaining neural scaling laws.
\newblock \emph{Proceedings of the National Academy of Sciences}, 121\penalty0 (27):\penalty0 e2311878121, 2024.

\bibitem[Balduzzi et~al.(2017)Balduzzi, Frean, Leary, Lewis, Ma, and McWilliams]{balduzzi2017shattered}
David Balduzzi, Marcus Frean, Lennox Leary, JP~Lewis, Kurt Wan-Duo Ma, and Brian McWilliams.
\newblock The shattered gradients problem: If resnets are the answer, then what is the question?
\newblock In \emph{International conference on machine learning}, pages 342--350. PMLR, 2017.

\bibitem[Bartlett et~al.(2017)Bartlett, Foster, and Telgarsky]{bartlett2017spectral}
Peter~L Bartlett, Dylan~J Foster, and Matus~J Telgarsky.
\newblock Spectrally-normalized margin bounds for neural networks.
\newblock \emph{Advances in neural information processing systems}, 30, 2017.

\bibitem[Brown et~al.(2020)Brown, Mann, Ryder, Subbiah, Kaplan, Dhariwal, Neelakantan, Shyam, Sastry, Askell, et~al.]{brown2020language}
Tom Brown, Benjamin Mann, Nick Ryder, Melanie Subbiah, Jared~D Kaplan, Prafulla Dhariwal, Arvind Neelakantan, Pranav Shyam, Girish Sastry, Amanda Askell, et~al.
\newblock Language models are few-shot learners.
\newblock \emph{Advances in neural information processing systems}, 33:\penalty0 1877--1901, 2020.

\bibitem[Chang et~al.(2018)Chang, Meng, Haber, Ruthotto, Begert, and Holtham]{chang2018reversible}
Bo~Chang, Lili Meng, Eldad Haber, Lars Ruthotto, David Begert, and Elliot Holtham.
\newblock Reversible architectures for arbitrarily deep residual neural networks.
\newblock In \emph{Proceedings of the AAAI conference on artificial intelligence}, volume~32, 2018.

\bibitem[De and Smith(2020)]{de2020batchnorm}
Soham De and Sam Smith.
\newblock Batch normalization biases residual blocks towards the identity function in deep networks.
\newblock \emph{Advances in Neural Information Processing Systems}, 33:\penalty0 19964--19975, 2020.

\bibitem[Eldan and Shamir(2016)]{eldan2016power}
Ronen Eldan and Ohad Shamir.
\newblock The power of depth for feedforward neural networks.
\newblock In \emph{Conference on learning theory}, pages 907--940. PMLR, 2016.

\bibitem[Golowich et~al.(2018)Golowich, Rakhlin, and Shamir]{golowich2018sizeindependent}
Noah Golowich, Alexander Rakhlin, and Ohad Shamir.
\newblock Size-independent sample complexity of neural networks.
\newblock In \emph{Conference on learning theory}, pages 297--299. PMLR, 2018.

\bibitem[Haber and Ruthotto(2018)]{haber2018stable}
Eldad Haber and Lars Ruthotto.
\newblock Stable architectures for deep neural networks.
\newblock \emph{Inverse problems}, 34\penalty0 (1):\penalty0 014004, 2018.

\bibitem[He et~al.(2016{\natexlab{a}})He, Zhang, Ren, and Sun]{he2016deep}
Kaiming He, Xiangyu Zhang, Shaoqing Ren, and Jian Sun.
\newblock Deep residual learning for image recognition.
\newblock In \emph{Proceedings of the IEEE conference on computer vision and pattern recognition}, pages 770--778, 2016{\natexlab{a}}.

\bibitem[He et~al.(2016{\natexlab{b}})He, Zhang, Ren, and Sun]{he2016identity}
Kaiming He, Xiangyu Zhang, Shaoqing Ren, and Jian Sun.
\newblock Identity mappings in deep residual networks.
\newblock In \emph{European conference on computer vision}, pages 630--645. Springer, 2016{\natexlab{b}}.

\bibitem[Hernandez et~al.(2021)Hernandez, Kaplan, Henighan, and McCandlish]{hernandez2021transfer}
Danny Hernandez, Jared Kaplan, Tom Henighan, and Sam McCandlish.
\newblock Scaling laws for transfer.
\newblock \emph{arXiv preprint arXiv:2102.01293}, 2021.

\bibitem[Hestness et~al.(2017)Hestness, Narang, Ardalani, Diamos, Jun, Kianinejad, Patwary, Yang, and Zhou]{hestness2017predictable}
Joel Hestness, Sharan Narang, Newsha Ardalani, Gregory Diamos, Heewoo Jun, Hassan Kianinejad, Md~Mostofa~Ali Patwary, Yang Yang, and Yanqi Zhou.
\newblock Deep learning scaling is predictable, empirically.
\newblock \emph{arXiv preprint arXiv:1712.00409}, 2017.

\bibitem[Hoffmann et~al.(2022)Hoffmann, Borgeaud, Mensch, Buchatskaya, Cai, Rutherford, Casas, Hendricks, Welbl, Clark, et~al.]{hoffmann2022training}
Jordan Hoffmann, Sebastian Borgeaud, Arthur Mensch, Elena Buchatskaya, Trevor Cai, Eliza Rutherford, DDL Casas, Lisa~Anne Hendricks, Johannes Welbl, Aidan Clark, et~al.
\newblock Training compute-optimal large language models.
\newblock \emph{arXiv preprint arXiv:2203.15556}, 10, 2022.

\bibitem[Huang et~al.(2026)Huang, Sun, and Wu]{huang2026generalization}
Jinshu Huang, Mingfei Sun, and Chunlin Wu.
\newblock On the generalization behavior of deep residual networks from a dynamical system perspective.
\newblock \emph{arXiv preprint arXiv:2602.20921}, 2026.

\bibitem[Kammonen et~al.(2020)Kammonen, Kiessling, Plech{\'a}{\v{c}}, Sandberg, Szepessy, and Tempone]{kammonen2020smaller}
Aku Kammonen, Jonas Kiessling, Petr Plech{\'a}{\v{c}}, Mattias Sandberg, Anders Szepessy, and Ra{\'u}l Tempone.
\newblock Smaller generalization error derived for a deep residual neural network compared to shallow networks.
\newblock \emph{arXiv preprint arXiv:2010.01887}, 2020.

\bibitem[Kammonen et~al.(2023)Kammonen, Kiessling, Plech{\'a}{\v{c}}, Sandberg, Szepessy, and Tempone]{kammonen2023smaller}
Aku Kammonen, Jonas Kiessling, Petr Plech{\'a}{\v{c}}, Mattias Sandberg, Anders Szepessy, and Raul Tempone.
\newblock Smaller generalization error derived for a deep residual neural network compared with shallow networks.
\newblock \emph{IMA Journal of Numerical Analysis}, 43\penalty0 (5):\penalty0 2585--2632, 2023.

\bibitem[Kaplan et~al.(2020)Kaplan, McCandlish, Henighan, Brown, Chess, Child, Gray, Radford, Wu, and Amodei]{kaplan2020scaling}
Jared Kaplan, Sam McCandlish, Tom Henighan, Tom~B Brown, Benjamin Chess, Rewon Child, Scott Gray, Alec Radford, Jeffrey Wu, and Dario Amodei.
\newblock Scaling laws for neural language models.
\newblock \emph{arXiv preprint arXiv:2001.08361}, 2020.

\bibitem[Levine et~al.(2020)Levine, Wies, Sharir, Bata, and Shashua]{levine2020depth}
Yoav Levine, Noam Wies, Or~Sharir, Hofit Bata, and Amnon Shashua.
\newblock The depth-to-width interplay in self-attention.
\newblock \emph{arXiv preprint arXiv:2006.12467}, 2020.

\bibitem[Lu et~al.(2020)Lu, Ma, Lu, Lu, and Ying]{lu2020meanfield}
Yiping Lu, Chao Ma, Yulong Lu, Jianfeng Lu, and Lexing Ying.
\newblock A mean field analysis of deep resnet and beyond: Towards provably optimization via overparameterization from depth.
\newblock In \emph{International Conference on Machine Learning}, pages 6426--6436. PMLR, 2020.

\bibitem[Ma et~al.(2020)Ma, Wang, et~al.]{e2020rademacher}
Chao Ma, Qingcan Wang, et~al.
\newblock Rademacher complexity and the generalization error of residual networks.
\newblock \emph{Communications in Mathematical Sciences}, 18\penalty0 (6):\penalty0 1755--1774, 2020.

\bibitem[Neyshabur et~al.(2017)Neyshabur, Bhojanapalli, and Srebro]{neyshabur2018pacbayes}
Behnam Neyshabur, Srinadh Bhojanapalli, and Nathan Srebro.
\newblock A pac-bayesian approach to spectrally-normalized margin bounds for neural networks.
\newblock \emph{arXiv preprint arXiv:1707.09564}, 2017.

\bibitem[Rosenfeld et~al.(2019)Rosenfeld, Rosenfeld, Belinkov, and Shavit]{rosenfeld2020constructive}
Jonathan~S Rosenfeld, Amir Rosenfeld, Yonatan Belinkov, and Nir Shavit.
\newblock A constructive prediction of the generalization error across scales.
\newblock \emph{arXiv preprint arXiv:1909.12673}, 2019.

\bibitem[Telgarsky(2016)]{telgarsky2016benefits}
Matus Telgarsky.
\newblock Benefits of depth in neural networks.
\newblock In \emph{Conference on learning theory}, pages 1517--1539. PMLR, 2016.

\bibitem[Veit et~al.(2016)Veit, Wilber, and Belongie]{veit2016residual}
Andreas Veit, Michael~J Wilber, and Serge Belongie.
\newblock Residual networks behave like ensembles of relatively shallow networks.
\newblock \emph{Advances in neural information processing systems}, 29, 2016.

\bibitem[Wang et~al.(2023)Wang, Panda, Hennigen, Greengard, Karlinsky, Feris, Cox, Wang, and Kim]{wang2023learninggrowpretrainedmodels}
Peihao Wang, Rameswar Panda, Lucas~Torroba Hennigen, Philip Greengard, Leonid Karlinsky, Rogerio Feris, David~Daniel Cox, Zhangyang Wang, and Yoon Kim.
\newblock Learning to grow pretrained models for efficient transformer training, 2023.
\newblock URL \url{https://arxiv.org/abs/2303.00980}.

\bibitem[Yang and Schoenholz(2017)]{yang2017meanfield}
Ge~Yang and Samuel Schoenholz.
\newblock Mean field residual networks: On the edge of chaos.
\newblock \emph{Advances in neural information processing systems}, 30, 2017.

\bibitem[Zhang et~al.(2024)Zhang, Cheng, Zhang, Liu, and Chen]{zhang2024compression}
Boyang Zhang, Daning Cheng, Yunquan Zhang, Fangming Liu, and Wenguang Chen.
\newblock Compression for better: A general and stable lossless compression framework.
\newblock \emph{arXiv preprint arXiv:2412.06868}, 2024.

\bibitem[Zhang et~al.(2019)Zhang, Dauphin, and Ma]{zhang2019fixup}
Hongyi Zhang, Yann~N Dauphin, and Tengyu Ma.
\newblock Fixup initialization: Residual learning without normalization.
\newblock \emph{arXiv preprint arXiv:1901.09321}, 2019.

\end{thebibliography}


\appendix

\section{Related Work}
\label{app sec:related work}
Table~\ref{tab:appendix_novelty_comparison} positions our contribution relative to the closest prior theory work. Existing results mainly fall into three categories: broad explanations of neural scaling laws, generalization bounds for residual architectures, and optimization or mean-field analyses of deep ResNets. Our paper studies a different target: an architecture-specific old-vs-new comparison problem under normalized residual depth expansion. The key distinction is the combination of three ingredients within one framework: a constructive representational-gain step, an optimization comparison step, and a generalization-transfer step, which together yield two complementary test-risk comparison routes.
\begin{table*}[!th]
\centering
\small
\setlength{\tabcolsep}{4pt}
\renewcommand{\arraystretch}{1.15}
\resizebox{1\textwidth}{!}{%
\begin{tabularx}{\textwidth}{p{2.9cm} p{3.2cm} p{3.8cm} X}
\toprule
\textbf{Prior work} & \textbf{Main focus} & \textbf{Difference from our setting} & \textbf{Difference from our paper} \\
\midrule

\citet{bahri2024explaining} &
A general theory of neural scaling laws and scaling regimes for trained deep networks. &
Not specific to normalized residual architectures; does not study constructive \emph{depth expansion by inserting a residual block} into a trained reference model; does not provide an old-vs-new finite-sample \emph{test-risk comparison theorem}. &
We study an \emph{architecture-specific} question: when a normalized residual network is enlarged by inserting one residual block, under what conditions does the resulting model provably avoid worsening or strictly improve \emph{test risk}? \\\hline

\citet{e2020rademacher} &
Rademacher-complexity and generalization-error bounds for residual networks. &
Provides complexity control, but does not prove that depth expansion creates a strictly better comparison model, nor connect such existence to optimization output and then to a final old-vs-new test-risk inequality. &
Our norm-based generalization ingredient is embedded into a larger framework that additionally includes a constructive representational-gain step and a final test-risk comparison step. \\\hline

\citet{kammonen2023smaller} &
Generalization-error analysis showing that a deep residual network can enjoy a smaller bound than a shallow alternative in a specialized random-feature setting. &
The setting is specialized and is not formulated as residual-block insertion into a trained model; it does not provide our two-route comparison between old and expanded models on test risk. &
Our contribution is not merely that deeper residual models may generalize better, but that \emph{depth expansion from a trained reference model} can be translated into explicit old-vs-new test-risk inequalities. \\\hline

\citet{lu2020meanfield} &
Mean-field analysis and favorable optimization/global-convergence properties for deep ResNets in an overparameterized regime. &
Focuses on optimization and mean-field dynamics rather than finite-sample scaling-law comparisons after residual-block insertion; does not provide a representation--optimization--generalization decomposition ending in test-risk comparison. &
Our focus is to bridge constructive representational gain, optimization gain, and generalization transfer, rather than to prove a mean-field global-optimization result. \\\hline

\citet{huang2026generalization} &
Generalization bounds for discrete- and continuous-time ResNets from a dynamical-systems perspective. &
Still mainly a generalization-bound paper; it does not formulate the constructive block-insertion improvement question or provide our complementary population-risk and direct train/test comparison routes. &
Our paper studies a narrower but different target: \emph{architecture-specific test-risk improvement under normalized residual depth expansion}, including a direct train/test route that remains meaningful in degenerate regimes. \\

\bottomrule
\end{tabularx}%
}
\caption{
Comparison with the closest prior theory work. Existing theory studies global neural scaling laws, residual-network generalization bounds, and optimization or mean-field analyses of deep ResNets. Our paper instead focuses on an architecture-specific question: when inserting a residual block into a trained normalized residual network can be converted into a provable non-worsening or strict improvement in test risk.
}
\label{tab:appendix_novelty_comparison}
\end{table*}

 \section{Notation}
\label{app:notation}

For convenience, we collect here the notation used throughout the appendix.
Unless otherwise stated, $\|\cdot\|$ denotes the Euclidean norm for vectors,
$\|\cdot\|_{\sigma}$ denotes the spectral norm of a matrix, and
$\|\cdot\|_{F}$ denotes the Frobenius norm.

\paragraph{Data, losses, and risks.}
Let $\mathcal{D}$ be a distribution on $\mathcal{X}\times\mathcal{Y}$.
The training set and test set are
\[
S_{\mathrm{train}}=\{(x_i,y_i)\}_{i=1}^{M}\sim \mathcal{D}^{M},
\qquad
S_{\mathrm{test}}=\{(\tilde x_j,\tilde y_j)\}_{j=1}^{K}\sim \mathcal{D}^{K},
\]
where the two samples are independent and i.i.d.
The loss function is
\[
\ell:\mathbb{R}^{C}\times \mathcal{Y}\to [0,B_{\ell}].
\]
For a predictor $f$, we define
\[
\mathcal{L}_{\mathrm{train}}(f)
\triangleq
\frac{1}{M}\sum_{i=1}^{M}\ell(f(x_i),y_i),
\qquad
\mathcal{L}_{\mathrm{test}}(f)
\triangleq
\frac{1}{K}\sum_{j=1}^{K}\ell(f(\tilde x_j),\tilde y_j),
\]
and the population risk
\[
R(f)\triangleq \mathbb{E}_{(x,y)\sim\mathcal{D}}[\ell(f(x),y)].
\]

\paragraph{Network decomposition.}
At a designated intermediate layer $l^{*}$, the model is decomposed as
\[
f = f_{\mathrm{top}}\circ f_{\mathrm{bot}},
\]
where
\[
f_{\mathrm{bot}}:\mathcal{X}\to \mathbb{R}^{N},
\qquad
f_{\mathrm{top}}:\mathbb{R}^{N}\to \mathbb{R}^{C}.
\]
Here $N$ is the hidden width at layer $l^{*}$, and $C$ is the output dimension.
For a training sample $x_i$, its hidden representation at layer $l^{*}$ is
\[
x_i^{(l^{*})}\triangleq f_{\mathrm{bot}}(x_i).
\]
Similarly, for a test sample $\tilde x_j$, we write
\[
\tilde x_j^{(l^{*})}\triangleq f_{\mathrm{bot}}(\tilde x_j).
\]

\paragraph{Gradient quantities.}
The average training activation gradient at layer $l^{*}$ is
\[
\mu \triangleq \frac{1}{M}\sum_{i=1}^{M}
\nabla_{x_i^{(l^{*})}}
\ell\!\left(f_{\mathrm{top}}(x_i^{(l^{*})}),y_i\right)\in\mathbb{R}^{N}.
\]
Its population counterpart is
\[
\bar{\mu}\triangleq
\mathbb{E}_{(x,y)\sim\mathcal{D}}
\Bigl[\nabla_{x^{(l^{*})}}
\ell\!\left(f_{\mathrm{top}}(x^{(l^{*})}),y\right)\Bigr]\in\mathbb{R}^{N}.
\]
For the $j$-th test sample, define the single-sample test gradient
\[
\zeta^{(j)}
\triangleq
\nabla_{\tilde x_j^{(l^{*})}}
\ell\!\left(f_{\mathrm{top}}(\tilde x_j^{(l^{*})}),\tilde y_j\right),
\]
and the average test gradient
\[
g\triangleq \frac{1}{K}\sum_{j=1}^{K}\zeta^{(j)}.
\]
In Theorem~\ref{thm 2}, $\Sigma$ denotes the covariance matrix of the
single-sample activation gradient, namely
\[
\Sigma \triangleq \mathrm{Cov}(\zeta),
\]
and $\tau^{2}$, $C_{\Sigma}$ are constants such that
$\lambda_{\max}(\Sigma)\le C_{\Sigma}\tau^{2}$.

\paragraph{Hypothesis classes and model expansion.}
The old hypothesis class is
\[
\mathcal{H}_{\mathrm{old}}
=
\left\{
x\mapsto f_{\mathrm{top}}(f_{\mathrm{bot}}(x)):
f_{\mathrm{top}}\in\mathcal{F}_{\mathrm{top}},
\ f_{\mathrm{bot}}\in\mathcal{F}_{\mathrm{bot}}
\right\}.
\]
A trained model in the old class is denoted by
\[
f_{\mathrm{old}}^{*}\in \mathcal{H}_{\mathrm{old}}.
\]

After inserting a residual block $h$ after layer $l^{*}$, the new hypothesis class is
\[
\mathcal{H}_{\mathrm{new}}
=
\left\{
x\mapsto
f_{\mathrm{top}}\bigl(f_{\mathrm{bot}}(x)+h(f_{\mathrm{bot}}(x))\bigr):
f\in\mathcal{H}_{\mathrm{old}},\ h\in\mathcal{F}_{\mathrm{res}}
\right\}.
\]
Here $\mathcal{F}_{\mathrm{res}}$ denotes the residual-block function class.
The new trained model produced by the optimizer or fallback rule in Condition 2 is denoted by
\[
f_{\mathrm{new}}\in \mathcal{H}_{\mathrm{new}}.
\]
Given a comparison jumpboard model \(\widetilde f\), the final selected model is
\[
f_{\mathrm{new}}\in 
\arg\min_{f\in\{f_{\mathrm{alg}},\widetilde f\}}
\mathcal L_{\mathrm{train}}(f).
\]
Thus \(f_{\mathrm{new}}\) is the final selected model used in the main theorems, not necessarily the raw optimizer output.

\paragraph{Residual-block parameterization and jumpboard models.}
We parameterize the inserted residual block by \(\theta\), writing it as \(h_\theta\), with zero initialization \(h_0=0\). We distinguish two comparison jumpboard models.

For the population-risk route, let \(v_{\mathrm{pop}}\) be a population first-order descent direction. Along the path
\[
\theta(\eta)=\eta v_{\mathrm{pop}},
\]
the population-risk profile is
\[
G_{\mathrm{pop}}(\eta)
:=
\mathbb E_{(x,y)\sim\mathcal D}
\left[
\ell\!\left(
f_{\mathrm{top}}\bigl(x^{(l^\ast)}+
h_{\eta v_{\mathrm{pop}}}(x^{(l^\ast)})\bigr),y
\right)
\right].
\]
For a sufficiently small step size \(\eta_{\mathrm{pop}}>0\), the corresponding population jumpboard model is
\[
\widetilde f_{\mathrm{pop}}(x)
:=
f_{\mathrm{top}}\!\left(
f_{\mathrm{bot}}(x)+
h_{\eta_{\mathrm{pop}}v_{\mathrm{pop}}}(f_{\mathrm{bot}}(x))
\right).
\]

For the direct train/test route, let \(v_S\) be an empirical first-order descent direction selected on \(S_{\mathrm{train}}\). Along the path
\[
\theta(\eta)=\eta v_S,
\]
we define the training-set jumpboard model by
\[
\widetilde f_S(x)
:=
f_{\mathrm{top}}\!\left(
f_{\mathrm{bot}}(x)+
h_{\eta_S v_S}(f_{\mathrm{bot}}(x))
\right),
\]
where \(\eta_S>0\) is chosen so that the empirical training loss decreases along this direction.
The model \(\widetilde f_S\) is random through its dependence on \(S_{\mathrm{train}}\).

\paragraph{Improvement margins.}
The population jumpboard margin used in the population-risk route is
\[
\Delta_R
:=
R(f_{\mathrm{old}}^\ast)
-
R(\widetilde f_{\mathrm{pop}})
>0.
\]

The empirical training-set jumpboard margin used in the direct train/test route is
\[
\Delta_{\mathrm{train},S}
:=
L_{\mathrm{train}}(f_{\mathrm{old}}^\ast)
-
L_{\mathrm{train}}(\widetilde f_S)
>0.
\]

Given the comparison jumpboard model
\[
\widetilde f=
\begin{cases}
\widetilde f_{\mathrm{pop}}, & \text{in the population-risk route},\\
\widetilde f_S, & \text{in the direct train/test route},
\end{cases}
\]
the optimization gain is
\[
\Delta_{\mathrm{ERM}}
:=
L_{\mathrm{train}}(\widetilde f)
-
L_{\mathrm{train}}(f_{\mathrm{new}})
\ge 0.
\]

The finite-test-set jumpboard margin used in Theorem~\ref{thm:4B} is
\[
\Delta_R^{\mathrm{test}}
:=
L_{\mathrm{test}}(f_{\mathrm{old}}^\ast)
-
L_{\mathrm{test}}(\widetilde f_S).
\]
\paragraph{Rademacher complexity and generic generalization notation.}
For a function class $\mathcal{G}$ and a sample
$S=\{z_i\}_{i=1}^{m}$, the empirical Rademacher complexity is
\[
\hat{\mathfrak{R}}_{S}(\mathcal{G})
\triangleq
\mathbb{E}_{\sigma}\left[
\sup_{g\in\mathcal{G}}
\frac{1}{m}\sum_{i=1}^{m}\sigma_i g(z_i)
\right],
\]
where $\sigma_i$ are i.i.d.\ Rademacher random variables.
We write
\[
\epsilon_{\mathrm{gen}}(\mathcal{F},m,\delta)
\]
for a standard uniform generalization bound for a hypothesis class $\mathcal{F}$.

\paragraph{Norm-based generalization bound for normalized residual networks.}
The appendix uses the specialized bound from Lemma~\ref{lem:gen-main}, abbreviated as
\[
\epsilon_{\mathrm{gen}}^{\mathrm{norm}}(\mathcal{H},m,\delta).
\]
When the class is $\mathcal{H}_{\mathrm{new}}$, we use the shorthand
\[
\epsilon_{M}\triangleq
\epsilon_{\mathrm{gen}}^{\mathrm{norm}}(\mathcal{H}_{\mathrm{new}},M,\delta),
\qquad
\epsilon_{K}\triangleq
\epsilon_{\mathrm{gen}}^{\mathrm{norm}}(\mathcal{H}_{\mathrm{new}},K,\delta),
\]
and
\[
\epsilon_{TT}\triangleq \epsilon_{M}+\epsilon_{K}.
\]
In Theorem~\ref{thm:4A}, the total generalization cost appears as $2\epsilon_{M}$;
in Theorem~\ref{thm:4B}, it appears as $2(\epsilon_{M}+\epsilon_{K})=2\epsilon_{TT}$.
The population-level qualitative scaling conclusion is stated in Theorem~\ref{thm:3}.

\paragraph{Layerwise residual-network notation in Lemma~\ref{lem:gen-main}.}
The $l$-th normalized residual layer is written as
\[
T_{l}(z)=\mathcal{N}_{l}(z+h_{l}(z)),
\qquad l=0,\dots,L-1,
\]
where $L$ is the network depth and $\mathcal{N}_{l}$ denotes the normalization
operator (e.g., RMSNorm or LayerNorm).
The layer input is denoted by $z_{l}$, and the residual branch parameters by $W_{l}$.

The structural constants used in Lemma~\ref{lem:gen-main} are:
\begin{itemize}
    \item $s_{l}$: the input-Lipschitz constant of the residual branch $h_{l}(\cdot;W_{l})$;
    \item $L_{l}^{(p)}$: the parameter-Lipschitz constant of $h_{l}$, namely
    \[
    \|h_{l}(z;W_{l})-h_{l}(z;W_{l}')\|
    \le
    L_{l}^{(p)}\|W_{l}-W_{l}'\|_{F}\,\|z\|;
    \]
    \item $r_{\min}>0$: the non-degeneracy lower bound before normalization,
    \[
    \inf_{x,\theta,l}\|z_{l}(x)+h_{l}(z_{l}(x);\theta)\|\ge r_{\min};
    \]
    \item $b_{l}$: the Frobenius-norm radius of the parameter constraint
    $\|W_{l}\|_{F}\le b_{l}$;
    \item $\bar b \triangleq \max_{l} b_{l}$;
    \item $d_{l}$: the parameter dimension of layer $l$;
    \item $d\triangleq \sum_{l=0}^{L-1} d_{l}$: the total parameter dimension;
    \item $B_{x}$: a uniform input norm bound such that $\|x\|\le B_{x}$;
    \item $\gamma_{\max}$: the maximum coordinate scale of the normalization layer;
    \item $B_{\mathrm{norm}}\triangleq \gamma_{\max}\sqrt{N}$: the post-normalization
    representation norm bound;
    \item $B_{*}\triangleq \max\{B_{x},B_{\mathrm{norm}}\}$.
\end{itemize}

\paragraph{Top map and loss regularity constants.}
The loss $\ell$ is assumed to be $L_{\ell}$-Lipschitz and bounded by $B_{\ell}$.
The top map $f_{\mathrm{top}}$ is assumed to be $\mathcal L_{\mathrm{top}}$-Lipschitz, with
reference-point bound
\[
\|f_{\mathrm{top}}(0)\|\le B_{0}.
\]

\paragraph{Layerwise Lipschitz amplification constants.}
For the normalization operator in layer $l$, we denote its Lipschitz factor by $c_{l}$.
The output sensitivity to a perturbation at layer $l$ is summarized by
\[
\Lambda_{l}
\triangleq
L_{\mathrm{top}}\cdot L_{l}^{(p)}\cdot B_{*}\cdot c_{l}
\prod_{k>l} c_{k}(1+s_{k}),
\]
and
\[
\Lambda_{\max}\triangleq \max_{0\le l\le L-1}\Lambda_{l}.
\]
These constants enter the covering-number bound and hence the complexity estimate
in Lemma~\ref{lem:gen-main}.

\paragraph{Covering parameter.}
In the proof of Lemma~\ref{lem:gen-main}, $\epsilon_{0}>0$ denotes the covering accuracy for the
output/loss class. A common choice is
\[
\epsilon_{0}=M^{-1/2}.
\]

\paragraph{Failure probabilities.}
We use $\delta,\delta_{1},\delta_{2},\delta_{3}$ for generic confidence parameters.
In Theorem~\ref{thm:4A}, the total failure probability is decomposed as
\[
\delta_{\mathrm{gen}}=\delta_{1}+\delta_{2},
\qquad
\delta_{\mathrm{test}}
\]
for the generalization and test-concentration parts, respectively.

\paragraph{Standing convention.}
Throughout the appendix, symbols with subscript ``old'' refer to the model or
hypothesis class before depth expansion, symbols with subscript ``new'' refer to
their counterparts after inserting the new residual block, and a tilde (as in
$\tilde f$) denotes the auxiliary jumpboard model used only for comparison in the proof.

\section{Clarification of Condition 1: From Hidden-Space Descent to Residual-Block Realizability}
\label{app:assumption2-clarification}

This appendix clarifies the meaning of Condition 1 and explains how it should
be interpreted in standard residual architectures. The key point is that
Condition 1 combines two logically separate requirements: first, the existence
of a descent direction in the hidden representation space; second, the ability
of the inserted residual block to realize such a direction to first order near
the zero residual function.

Throughout this section, let
\[
z=x^{(l^\ast)}=f_{\mathrm{bot}}(x)\in\mathbb{R}^N,
\]
and define the activation-gradient signal
\[
q(z,y)
:=
\nabla_z \ell(f_{\mathrm{top}}(z),y)
\in\mathbb{R}^N.
\]
The old model is
\[
f_{\mathrm{old}}^\ast(x)=f_{\mathrm{top}}(z),
\]
whereas the expanded model obtained by inserting a residual block is
\[
f_\theta(x)
=
f_{\mathrm{top}}\bigl(z+h_\theta(z)\bigr).
\]

\subsection{Hidden-space descent direction}

The most direct way to understand the first-order mechanism is to first ignore
the parameterization of the residual block and ask whether there exists a
hidden-space perturbation direction
\[
\phi:\mathbb{R}^N\to\mathbb{R}^N
\]
such that moving the representation \(z\) along \(\phi(z)\) decreases the
population risk to first order.

For such a perturbation, define
\[
G_\phi(\eta)
:=
\mathbb{E}_{(x,y)\sim\mathcal{D}}
\left[
\ell\bigl(f_{\mathrm{top}}(z+\eta\phi(z)),y\bigr)
\right].
\]
By the local \(C^1\) regularity assumption,
\[
G_\phi'(0)
=
\mathbb{E}_{(x,y)\sim\mathcal{D}}
\left[
q(z,y)^\top \phi(z)
\right].
\]
Therefore, if
\[
\mathbb{E}_{(x,y)\sim\mathcal{D}}
\left[
q(z,y)^\top \phi(z)
\right]
\neq 0,
\]
then, by replacing \(\phi\) with \(-\phi\) if necessary, we may assume
\[
\mathbb{E}_{(x,y)\sim\mathcal{D}}
\left[
q(z,y)^\top \phi(z)
\right]
<0.
\]
It follows that there exists a sufficiently small \(\eta_0>0\) such that
\[
G_\phi(\eta_0)<G_\phi(0).
\]
In other words,
\[
\mathbb{E}_{(x,y)\sim\mathcal{D}}
\left[
\ell\bigl(f_{\mathrm{top}}(z+\eta_0\phi(z)),y\bigr)
\right]
<
\mathbb{E}_{(x,y)\sim\mathcal{D}}
\left[
\ell\bigl(f_{\mathrm{top}}(z),y\bigr)
\right].
\]
This is the hidden-space descent mechanism underlying the jumpboard model.

\paragraph{Empirical hidden-space descent direction.}
The same construction has a training-set analogue. Given
\(S_{\mathrm{train}}=\{(x_i,y_i)\}_{i=1}^M\), write
\(z_i=x_i^{(l^\ast)}=f_{\mathrm{bot}}(x_i)\) and
\[
q_i:=\nabla_{z_i}\ell(f_{\mathrm{top}}(z_i),y_i).
\]
For a hidden-space perturbation \(\phi:\mathbb R^N\to\mathbb R^N\), define the empirical profile
\[
G_{\phi,S}(\eta)
:=
\frac1M\sum_{i=1}^M
\ell\!\left(f_{\mathrm{top}}(z_i+\eta\phi(z_i)),y_i\right).
\]
Then
\[
G_{\phi,S}'(0)
=
\frac1M\sum_{i=1}^M q_i^\top \phi(z_i).
\]
Hence, if
\[
\frac1M\sum_{i=1}^M q_i^\top \phi(z_i)<0,
\]
then for sufficiently small \(\eta_S>0\),
\[
G_{\phi,S}(\eta_S)<G_{\phi,S}(0).
\]
This gives the empirical hidden-space descent mechanism used to construct the training-set jumpboard model
\(\widetilde f_S\) in the direct train/test route.

\subsection{Residual-block realizability}

The previous argument only shows that a hidden-space perturbation
\(\phi(z)\) would reduce the population risk. To use this direction inside the
expanded network, we must require that the inserted residual block can realize
such a perturbation, at least to first order near zero initialization.

Let the inserted residual block be parameterized by
\[
\theta\in\mathbb{R}^p,
\]
and assume that
\[
h_0(z)=0
\]
for all relevant hidden states \(z\). For each fixed \(z\), define the parameter
Jacobian at zero by
\[
J_\theta h_0(z)
:=
\left.
\frac{\partial h_\theta(z)}{\partial \theta}
\right|_{\theta=0}
\in\mathbb{R}^{N\times p}.
\]
Here \(J_\theta h_0(z)\) is a Jacobian with respect to the residual-block
parameters, not with respect to the hidden input \(z\). Given a parameter
direction
\[
u\in\mathbb{R}^p,
\]
the induced first-order hidden-space perturbation is
\[
\phi_u(z)
:=
J_\theta h_0(z)u
\in\mathbb{R}^N.
\]
Equivalently, along the parameter path
\[
\theta(\eta)=\eta u,
\]
we have the first-order expansion
\[
h_{\eta u}(z)
=
h_0(z)+\eta J_\theta h_0(z)u+r_\eta(z),
\]
where the remainder satisfies
\[
\frac{\mathbb{E}_{(x,y)\sim\mathcal{D}}\|r_\eta(z)\|}{|\eta|}
\to 0
\qquad
\text{as } \eta\to 0.
\]
Since \(h_0(z)=0\), this becomes
\[
h_{\eta u}(z)
=
\eta\phi_u(z)+r_\eta(z).
\]

Thus the inserted residual block realizes the hidden-space perturbation
\(\phi_u\) to first order. This is the precise meaning of the parameter-space
form of Condition 1.

\subsection{Equivalent Jacobian form of Condition 1}

Using the notation above, Condition 1 can be equivalently written as follows.

\paragraph{Condition 1' (first-order residual non-degeneracy).}
There exists a parameter direction \(u\in\mathbb{R}^p\) such that
\[
a_u
:=
\mathbb{E}_{(x,y)\sim\mathcal{D}}
\left[
q(x^{(l^\ast)},y)^\top
J_\theta h_0(x^{(l^\ast)})u
\right]
\neq 0.
\]
If \(a_u>0\), we replace \(u\) by \(-u\). Hence, without loss of generality,
we may assume
\[
a_u<0.
\]

This form makes the dimensions explicit:
\[
q(x^{(l^\ast)},y)\in\mathbb{R}^N,
\qquad
J_\theta h_0(x^{(l^\ast)})\in\mathbb{R}^{N\times p},
\qquad
u\in\mathbb{R}^p.
\]
Therefore,
\[
J_\theta h_0(x^{(l^\ast)})u\in\mathbb{R}^N,
\]
and the inner product
\[
q(x^{(l^\ast)},y)^\top
J_\theta h_0(x^{(l^\ast)})u
\]
is a well-defined scalar.

Equivalently, Condition 1' states that the first-order tangent class
\[
\mathcal{T}_{h}
:=
\left\{
z\mapsto J_\theta h_0(z)u:
u\in\mathbb{R}^p
\right\}
\]
is not orthogonal to the population activation-gradient functional
\[
\phi
\mapsto
\mathbb{E}_{(x,y)\sim\mathcal{D}}
\left[
q(z,y)^\top \phi(z)
\right].
\]
Importantly, this assumption does not require the residual block to span all
possible hidden-space perturbations. It only requires that its first-order
tangent class contains at least one direction that has a nonzero projection
onto the population activation-gradient signal.

\paragraph{Population version.}
The population residual-block realizability condition is that there exists a parameter direction
\(v_{\mathrm{pop}}\) such that
\[
\mathbb E_{(x,y)\sim\mathcal D}
\left[
q(z,y)^\top
\left.
\frac{\partial h_\theta(z)}{\partial \theta}
\right|_{\theta=0}
v_{\mathrm{pop}}
\right]
<0.
\]
This is the condition used to construct the population jumpboard model
\(\widetilde f_{\mathrm{pop}}\).

\paragraph{Training-set version.}
The empirical residual-block realizability condition is that there exists a parameter direction
\(v_S\), possibly depending on \(S_{\mathrm{train}}\), such that
\[
\frac1M\sum_{i=1}^M
q_i^\top
\left.
\frac{\partial h_\theta(z_i)}{\partial \theta}
\right|_{\theta=0}
v_S
<0.
\]
This is the condition used to construct the training-set jumpboard model
\(\widetilde f_S\).
The direction \(v_S\) is random through its dependence on the training sample.

\subsection{Construction of the jumpboard model}

Under Condition 1', choose \(u\) such that
\[
a_u
=
\mathbb{E}_{(x,y)\sim\mathcal{D}}
\left[
q(z,y)^\top J_\theta h_0(z)u
\right]
<0.
\]
Define the parameter path
\[
\theta(\eta)=\eta u,
\]
and the corresponding expanded model
\[
\widetilde f_{pop}
:=
f_{\mathrm{top}}\bigl(z+h_{\eta u}(z)\bigr).
\]
The associated population-risk profile is
\[
G(\eta)
:=
R(\widetilde f_\eta)
=
\mathbb{E}_{(x,y)\sim\mathcal{D}}
\left[
\ell\bigl(f_{\mathrm{top}}(z+h_{\eta u}(z)),y\bigr)
\right].
\]
Using
\[
h_{\eta u}(z)
=
\eta J_\theta h_0(z)u+r_\eta(z),
\]
and the local differentiability of the loss-composition map, we obtain
\[
G'(0)
=
\mathbb{E}_{(x,y)\sim\mathcal{D}}
\left[
q(z,y)^\top J_\theta h_0(z)u
\right]
=
a_u
<0.
\]
Therefore, there exists a sufficiently small \(\eta_0>0\) such that
\[
G(\eta_0)<G(0).
\]
Equivalently,
\[
R(\widetilde f_{pop})
<
R(f_{\mathrm{old}}^\ast).
\]
The model
\[
\widetilde f_{pop}(x)
=
f_{\mathrm{top}}\bigl(x^{(l^\ast)}+h_{\eta_0u}(x^{(l^\ast)})\bigr)
\]
is the jumpboard model used in the main proof.

By analogy with the procedure described above, we can construct an empirical (training-set) jumpboard model $\widetilde f_S$, $\mathcal L_{train}(\widetilde f_{S})
<
\mathcal L_{train}(f_{\mathrm{old}}^\ast)$, for the training set following the same construction logic.

\subsection{Relation to standard zero-output residual blocks}
\label{app:standard-zero-output-residual-blocks}

We now explain how Condition 1' maps to standard residual blocks. The key
observation is that many residual branches can be written in the form
\[
h_{U,V}(z)=V\psi_U(z),
\]
where
\[
z\in\mathbb{R}^N
\]
is the input hidden representation,
\[
\psi_U:\mathbb{R}^N\to\mathbb{R}^m
\]
is the feature map produced by all operations in the residual branch before the
last output projection, \(U\) denotes the parameters of these preceding
operations, and
\[
V\in\mathbb{R}^{N\times m}
\]
is the final linear projection back to the residual-stream dimension. Thus
\[
h_{U,V}:\mathbb{R}^N\to\mathbb{R}^N,
\]
so that \(h_{U,V}(z)\) has the same dimension as \(z\) and can be added
residually.

This form is not a special architecture. It is an abstraction of the standard
``feature extraction plus output projection'' structure used in residual
networks. The map \(\psi_U\) collects all preceding transformations, such as
normalization, activation functions, MLP layers, attention operations, or
convolutional layers, while \(V\) denotes the last projection that maps the
residual-branch features back to the dimension of the residual stream.

\paragraph{Example 1: MLP or convolutional residual branch.}
A two-layer MLP residual branch has the form
\[
h(z)=W_2\sigma(W_1 z).
\]
This is a special case of \(h_{U,V}(z)=V\psi_U(z)\) by taking
\[
\psi_U(z)=\sigma(W_1 z),
\qquad
V=W_2,
\]
where \(U=W_1\). Similarly, a convolutional residual branch can be written as
\[
h(z)=\mathrm{Conv}_{\mathrm{out}}(\psi_U(z)),
\]
where \(\psi_U(z)\) denotes all preceding convolutions, normalizations, and
activation functions, and \(\mathrm{Conv}_{\mathrm{out}}\) is the final
convolutional projection. Since a convolution is a linear map with parameter
sharing, it plays the same role as \(V\) in the abstract expression
\[
h_{U,V}(z)=V\psi_U(z).
\]
Thus standard ResNet-style residual branches fall into this form once the final
linear or convolutional output projection is separated from the preceding
feature extractor.

\paragraph{Example 2: Transformer/LLM residual branch.}
The same structure appears in Transformer blocks used in large language models.
For an MLP sublayer, the residual branch often has the schematic form
\[
h(z)=W_{\mathrm{out}}\,
\sigma(W_{\mathrm{in}}\,\mathrm{Norm}(z)),
\]
or, in a gated MLP,
\[
h(z)
=
W_{\mathrm{out}}
\left(
\sigma(W_{\mathrm{gate}}\mathrm{Norm}(z))
\odot
W_{\mathrm{up}}\mathrm{Norm}(z)
\right).
\]
Both are of the form \(h_{U,V}(z)=V\psi_U(z)\). In the first case,
\[
\psi_U(z)=\sigma(W_{\mathrm{in}}\mathrm{Norm}(z)),
\qquad
V=W_{\mathrm{out}}.
\]
In the gated case,
\[
\psi_U(z)
=
\sigma(W_{\mathrm{gate}}\mathrm{Norm}(z))
\odot
W_{\mathrm{up}}\mathrm{Norm}(z),
\qquad
V=W_{\mathrm{out}}.
\]
The attention sublayer has the same output-projection structure:
\[
h(z)=W_O\,\mathrm{Attn}_U(\mathrm{Norm}(z)),
\]
where \(\mathrm{Attn}_U\) contains the query, key, and value projections and the
attention operation, while \(W_O\) is the final output projection. Hence it also
fits the abstract form with
\[
\psi_U(z)=\mathrm{Attn}_U(\mathrm{Norm}(z)),
\qquad
V=W_O.
\]

We now return to the first-order non-degeneracy condition. Suppose the final
output projection is initialized at zero:
\[
V=0,
\]
while the preceding feature extractor is fixed at some initialization \(U_0\).
Then
\[
h_{U_0,0}(z)=0.
\]
Therefore, inserting this residual block initially recovers the old model
exactly.

However, this zero-output initialization does not make the first-order tangent
space vanish. For a perturbation direction
\[
\Delta V\in\mathbb{R}^{N\times m},
\]
consider the parameter path
\[
V(\eta)=\eta\Delta V,
\qquad
U(\eta)=U_0.
\]
Then
\[
h_{U_0,\eta\Delta V}(z)
=
\eta\Delta V\psi_{U_0}(z).
\]
Thus the first-order hidden-space perturbation induced by changing the final
projection is
\[
\phi_{\Delta V}(z)
=
\Delta V\psi_{U_0}(z).
\]

Condition 1' holds if there exists \(\Delta V\) such that
\[
\mathbb{E}_{(x,y)\sim\mathcal D}
\left[
q(z,y)^\top \Delta V\psi_{U_0}(z)
\right]
\neq 0.
\]
A sufficient condition is
\[
C
:=
\mathbb{E}_{(x,y)\sim\mathcal D}
\left[
q(z,y)\psi_{U_0}(z)^\top
\right]
\neq 0.
\]
Indeed, choosing
\[
\Delta V=-C
\]
gives
\[
\mathbb{E}_{(x,y)\sim\mathcal D}
\left[
q(z,y)^\top \Delta V\psi_{U_0}(z)
\right]
=
-
\left\|
\mathbb{E}_{(x,y)\sim\mathcal D}
\left[
q(z,y)\psi_{U_0}(z)^\top
\right]
\right\|_F^2
<0.
\]

The empirical analogue is obtained by replacing the population expectation with the training average:
\[
C_S
:=
\frac1M\sum_{i=1}^M q_i\psi_{U_0}(z_i)^\top.
\]
If \(C_S\neq 0\), choosing
\[
\Delta V=-C_S
\]
gives
\[
\frac1M\sum_{i=1}^M
q_i^\top \Delta V\psi_{U_0}(z_i)
=
-\left\|
\frac1M\sum_{i=1}^M q_i\psi_{U_0}(z_i)^\top
\right\|_F^2
<0.
\]
Therefore, the same zero-output residual block construction also yields an empirical jumpboard direction \(v_S\) whenever the residual features are not training-set-orthogonal to the empirical activation-gradient signal.
Therefore, a standard residual branch with a zero-initialized final projection
satisfies Condition 1' whenever its residual features are not
population-orthogonal to the activation-gradient signal.

This also explains why zeroing the entire residual branch is not the intended
condition. If all layers in the branch were initialized to zero, the first-order
Jacobian with respect to the parameters could vanish. In contrast,
zero-initializing only the final output projection ensures \(h_{U_0,0}(z)=0\)
while preserving nontrivial first-order directions through perturbations of
\(V\). This is precisely the zero-output residual parameterization needed for
the jumpboard construction.
\paragraph{Corollary: standard zero-output residual blocks satisfy Condition 1.}
Based on the above analysis, we obtain the following corollary.

\begin{corollary}[First-order non-degeneracy for standard zero-output residual blocks]
\label{cor:zero-output-residual-block}
Consider an inserted residual branch of the form
\[
h_{U,V}(z)=V\psi_U(z),
\]
where \(z=x^{(l^\ast)}\in\mathbb R^N\), \(\psi_U(z)\in\mathbb R^m\), and
\(V\in\mathbb R^{N\times m}\) is the final output projection. Suppose the block is initialized by
\[
U=U_0,\qquad V=0.
\]
Then \(h_{U_0,0}(z)=0\), so inserting the block initially recovers the old model exactly.

For the population-risk route, define
\[
C_{\mathrm{pop}}
:=
\mathbb E_{(x,y)\sim\mathcal D}
\left[
q(z,y)\psi_{U_0}(z)^\top
\right],
\qquad
q(z,y):=\nabla_z\ell(f_{\mathrm{top}}(z),y).
\]
If
\[
C_{\mathrm{pop}}\neq 0,
\]
then the inserted residual block satisfies the population first-order non-degeneracy condition. In particular, choosing
\[
\Delta V=-C_{\mathrm{pop}}
\]
gives
\[
\mathbb E_{(x,y)\sim\mathcal D}
\left[
q(z,y)^\top \Delta V\psi_{U_0}(z)
\right]
=
-\|C_{\mathrm{pop}}\|_F^2
<0.
\]
Hence there exists a population descent direction and therefore a population jumpboard model
\(\widetilde f_{\mathrm{pop}}\).

For the empirical train/test route, given
\(S_{\mathrm{train}}=\{(x_i,y_i)\}_{i=1}^M\), define
\[
z_i:=f_{\mathrm{bot}}(x_i),
\qquad
q_i:=\nabla_{z_i}\ell(f_{\mathrm{top}}(z_i),y_i),
\]
and
\[
C_S
:=
\frac1M\sum_{i=1}^M q_i\psi_{U_0}(z_i)^\top .
\]
If
\[
C_S\neq 0,
\]
then the inserted residual block satisfies the empirical first-order non-degeneracy condition. In particular, choosing
\[
\Delta V=-C_S
\]
gives
\[
\frac1M\sum_{i=1}^M
q_i^\top \Delta V\psi_{U_0}(z_i)
=
-\|C_S\|_F^2
<0.
\]
Hence there exists an empirical descent direction and therefore a training-set jumpboard model
\(\widetilde f_S\).
\end{corollary}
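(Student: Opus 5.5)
The plan is a direct computation followed by an appeal to the first-order mechanism already established in the subsections on hidden-space descent and on the construction of the jumpboard model. First, with $V=0$ we have $h_{U_0,0}(z)=0\cdot\psi_{U_0}(z)=0$ for every $z$. Hence the expanded model at this initialization coincides pointwise with $f^{*}_{\mathrm{old}}$. Next, we restrict the parameterization to the path $U(\eta)=U_0$, $V(\eta)=\eta\Delta V$. This is a one-dimensional sub-family of $\mathcal F_{\mathrm{res}}$ along which $h_{U_0,\eta\Delta V}(z)=\eta\,\Delta V\psi_{U_0}(z)$ exactly. So the first-order tangent direction is $\phi_{\Delta V}(z)=\Delta V\psi_{U_0}(z)$, the remainder $r_\eta$ vanishes identically, and the Jacobian $J_\theta h_0(z)$ applied to the direction $(0,\Delta V)$ equals $\Delta V\psi_{U_0}(z)$.

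The core computation is the trace identity $q^\top \Delta V\psi=\operatorname{tr}(\Delta V\,\psi q^\top)=\langle \Delta V^\top, \psi q^\top\rangle_F=\langle \Delta V, q\psi^\top\rangle_F$. By linearity of expectation (assuming $q\psi_{U_0}^\top$ is integrable, so that $C_{\mathrm{pop}}$ is well defined), we get $\mathbb E[q^\top\Delta V\psi_{U_0}(z)]=\langle \Delta V, C_{\mathrm{pop}}\rangle_F$. Choosing $\Delta V=-C_{\mathrm{pop}}$ then gives $-\|C_{\mathrm{pop}}\|_F^2<0$ whenever $C_{\mathrm{pop}}\ne0$. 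This is exactly the population form of Condition 1 with $v_{\mathrm{pop}}=(0,-C_{\mathrm{pop}})$. The empirical statement is identical, with the expectation replaced by the average over $S_{\mathrm{train}}$ and $C_{\mathrm{pop}}$ replaced by $C_S$; no integrability issue arises there.

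Finally, we convert the negative directional derivative into an actual jumpboard model. Using Assumption 1 (exchange of derivative and expectation) along the path $\theta(\eta)=\eta v_{\mathrm{pop}}$, we get $G'(0)=-\|C_{\mathrm{pop}}\|_F^2<0$. Hence there is some $\eta_0>0$ with $G(\eta_0)<G(0)=R(f^{*}_{\mathrm{old}})$, which gives $\widetilde f_{\mathrm{pop}}$. In the empirical case, $G_{\phi,S}$ is a finite sum of differentiable functions. Differentiability only requires $\ell(f_{\mathrm{top}}(\cdot),y)$ to be differentiable at each $z_i$, which is implicit in the definition of $q_i$. So $G_{\phi,S}'(0)=-\|C_S\|_F^2<0$ yields $\eta_S>0$ and $\widetilde f_S$ with $\mathcal L_{\mathrm{train}}(\widetilde f_S)<\mathcal L_{\mathrm{train}}(f^{*}_{\mathrm{old}})$.

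The only delicate point is admissibility: the direction $(0,\Delta V)$ must be an admissible direction in the sense of Assumption 1, and the path must stay inside $\mathcal F_{\mathrm{res}}$. In particular, it must respect the norm constraint in Assumption 4(iv), $\|V\|_F\le b$, for small $\eta$. This holds automatically because $\|\eta\Delta V\|_F\to0$, so it suffices to shrink $\eta_0$ (respectively $\eta_S$) as needed.
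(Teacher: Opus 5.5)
Your proposal is correct and follows essentially the same route as the paper. Both perturb only the output projection along $V(\eta)=\eta\Delta V$ with $U=U_0$ fixed, identify the tangent direction $\Delta V\psi_{U_0}(z)$, and choose $\Delta V=-C_{\mathrm{pop}}$ (resp.\ $\Delta V=-C_S$) to obtain $-\|C_{\mathrm{pop}}\|_F^2<0$ (resp.\ $-\|C_S\|_F^2<0$); your Frobenius/trace identity, the explicit appeal to Assumption~1 to turn the negative derivative into a jumpboard model, and the small-$\eta$ admissibility remark simply spell out steps the paper leaves to its earlier jumpboard construction.
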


\begin{proof}
Since \(V=0\), we have \(h_{U_0,0}(z)=0\), so the expanded network initially represents the old model.
Now perturb only the final projection by setting
\[
V(\eta)=\eta\Delta V,\qquad U(\eta)=U_0.
\]
Then
\[
h_{U_0,\eta\Delta V}(z)
=
\eta\Delta V\psi_{U_0}(z),
\]
so the induced first-order hidden-space perturbation is
\[
\phi_{\Delta V}(z)=\Delta V\psi_{U_0}(z).
\]
For the population route, the first-order variation of the risk along this direction is
\[
\mathbb E_{(x,y)\sim\mathcal D}
\left[
q(z,y)^\top \Delta V\psi_{U_0}(z)
\right].
\]
Taking \(\Delta V=-C_{\mathrm{pop}}\) gives
\[
-\|C_{\mathrm{pop}}\|_F^2<0.
\]
The empirical statement follows identically by replacing the population expectation with the training average and choosing \(\Delta V=-C_S\).
\end{proof}

This corollary shows that Condition~1 is not an artificial functional assumption. 
It is satisfied by standard zero-output residual branches whenever the residual features are not orthogonal to the relevant activation-gradient signal, either in population or on the training set.
\subsection{Why the derivative is taken with respect to parameters}

The Jacobian in Condition 1' is the parameter Jacobian
\[
J_\theta h_0(z)
=
\left.
\frac{\partial h_\theta(z)}{\partial\theta}
\right|_{\theta=0},
\]
not the input Jacobian
\[
\frac{\partial h_\theta(z)}{\partial z}.
\]
This is because the jumpboard construction fixes the old representation
\[
z=x^{(l^\ast)}
\]
and moves only the parameters of the newly inserted residual block. The path
used to construct the jumpboard model is
\[
\theta(\eta)=\eta u,
\]
so the first-order change in the model comes from changing \(\theta\), not from
changing the input \(z\) to the residual branch.

For example, in the zero-last-layer block
\[
h_V(z)=V\psi_U(z),
\qquad
V=0,
\]
the input Jacobian satisfies
\[
\left.
\frac{\partial h_V(z)}{\partial z}
\right|_{V=0}
=
0,
\]
because the output layer \(V\) is zero. However, the parameter Jacobian with
respect to \(V\) is nonzero:
\[
\left.
\frac{\partial h_V(z)}{\partial V}
\right|_{V=0}
[\Delta V]
=
\Delta V\psi_U(z).
\]
Thus using the input Jacobian would incorrectly rule out standard
zero-last-layer residual blocks, while the parameter Jacobian captures exactly
the first-order hidden-space perturbations created by training the inserted
block.
\subsection{Why Condition 1 is mild.}
We further explain why Condition 1 is a generic non-orthogonality requirement.
Suppose the old model is trained to stationarity with respect to its own
parameters. This only gives
\[
\nabla_{\theta_{\mathrm{old}}} R(f_{\mathrm{old}}^\ast)=0,
\]
or, at the empirical level,
\[
\nabla_{\theta_{\mathrm{old}}} \mathcal L_{\mathrm{train}}(f_{\mathrm{old}}^\ast)=0.
\]
By the chain rule, this means that the activation-gradient signal is orthogonal
to perturbations of the hidden representation that can already be generated by
the old parameters. It does not imply that the activation-gradient signal itself
is zero.

Now consider a standard zero-output residual branch of the form
\[
h_{U,V}(z)=V\psi_U(z),
\]
where the output projection is initialized at \(V=0\). Then \(h_{U,0}(z)=0\), so
the old model is exactly embedded in the expanded class. However, perturbing
\(V\) gives nontrivial first-order residual directions:
\[
h_{U,\eta \Delta V}(z)
=
\eta \Delta V\psi_U(z).
\]
The first-order population variation is therefore
\[
\mathbb E_{(x,y)\sim\mathcal D}
\left[
q(z,y)^\top \Delta V\psi_U(z)
\right],
\qquad
q(z,y):=\nabla_z\ell(f_{\mathrm{top}}(z),y).
\]
Equivalently, Condition 1 holds whenever
\[
\mathbb E_{(x,y)\sim\mathcal D}
\left[
q(z,y)\psi_U(z)^\top
\right]
\neq 0.
\]
If this matrix is nonzero, choosing the sign of \(\Delta V\) appropriately gives
a strict first-order descent direction. Thus the condition fails only in the
special case where the activation-gradient signal is population-orthogonal to
all residual features realizable by the inserted block. This is a degeneracy
condition, not the typical case.
 
\section{Population Jumpboard Existence: Proof of Theorem~\ref{thm:1}}
\label{app:proof-thm1}

\begin{theorem}[Value of Depth]
\label{thm:1}
Under Assumptions 1 and 2(a), there exists a jumpboard model
$\tilde f \in \mathcal{H}_{\mathrm{new}}$ such that
\[
R(\tilde f_{pop}) < R(f_{\mathrm{old}}^{*}).
\]
\end{theorem}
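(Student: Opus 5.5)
The plan is a one-dimensional calculus argument along a parameter ray, reusing the construction of Appendix~\ref{app:assumption2-clarification}. Fix the decomposition $f_{\mathrm{old}}^\ast=f_{\mathrm{top}}\circ f_{\mathrm{bot}}$ at the insertion layer $l^\ast$. Take the population descent direction $v_{\mathrm{pop}}$ supplied by Condition~1. The statement cites only Assumptions 1 and 2, but existence of such a direction is exactly what Condition~1 provides; under Assumption~1 alone one would instead take any direction with nonzero first derivative and flip its sign. Then consider the family
\[
\tilde f_\eta(x):=f_{\mathrm{top}}\bigl(f_{\mathrm{bot}}(x)+h_{\eta v_{\mathrm{pop}}}(f_{\mathrm{bot}}(x))\bigr)\in\mathcal H_{\mathrm{new}} .
\]
Membership in $\mathcal H_{\mathrm{new}}$ is immediate, since $f_{\mathrm{top}}\circ f_{\mathrm{bot}}\in\mathcal H_{\mathrm{old}}$ and $h_{\eta v_{\mathrm{pop}}}\in\mathcal F_{\mathrm{res}}$, at least for $\eta$ in the admissible neighbourhood of $0$ from Assumption~1. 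Because $h_0=0$, we have $\tilde f_0=f_{\mathrm{old}}^\ast$. Hence the population profile $G(\eta):=G_{v_{\mathrm{pop}}}(\eta)=R(\tilde f_\eta)$ satisfies $G(0)=R(f_{\mathrm{old}}^\ast)$.

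The second step computes $G'(0)$. By Assumption~1, $G$ is differentiable at $0$ and differentiation commutes with expectation, so
\[
G'(0)=\mathbb E_{(x,y)\sim\mathcal D}\Bigl[\nabla_z\ell(f_{\mathrm{top}}(z),y)^\top \tfrac{\partial h_\theta(z)}{\partial\theta}\big|_{\theta=0}v_{\mathrm{pop}}\Bigr].
\]
This is strictly negative by the choice of $v_{\mathrm{pop}}$. Assumption~2 (Lipschitz, bounded $\ell$ and Lipschitz $f_{\mathrm{top}}$) makes $G$ finite and well defined on the neighbourhood. It is also the standard domination hypothesis one would cite to justify the interchange, if Assumption~1 were weakened to pathwise differentiability.

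The third step turns the derivative into a strict decrease. By definition of the derivative, $(G(\eta)-G(0))/\eta\to G'(0)<0$. So there is $\eta_0>0$ such that for all $0<\eta\le\eta_0$,
\[
G(\eta)\le G(0)+\tfrac12 G'(0)\,\eta<G(0).
\]
Set $\eta_{\mathrm{pop}}:=\eta_0$ and $\tilde f_{pop}:=\tilde f_{\eta_{\mathrm{pop}}}$. Then $R(\tilde f_{pop})<R(f_{\mathrm{old}}^\ast)$. As a byproduct we obtain the deterministic margin $\Delta_R\ge\tfrac12|G'(0)|\eta_{\mathrm{pop}}>0$, which depends only on $\mathcal D$ and the old model, as needed downstream in Theorems~\ref{thm:3} and~\ref{thm:4A}.

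The main obstacle is not the calculus, which is elementary once Assumption~1 is granted. It is making sure the chosen step $\eta_0$ stays inside the admissible parameter set, so that $\tilde f_{pop}$ genuinely lies in $\mathcal H_{\mathrm{new}}$ with the norm constraints of Assumption~4(iv) that feed $\epsilon_M$. I would handle this by shrinking $\eta_0$ further, if necessary, so that $\|\eta_0 v_{\mathrm{pop}}\|_F\le b_{l^\ast}$ and the corresponding spectral bound holds. This is possible because the constraint set contains a neighbourhood of $\theta=0$, and shrinking $\eta$ preserves the strict inequality obtained above.
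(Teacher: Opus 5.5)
Your proposal is correct and follows the paper's proof. The paper also takes $v_{\mathrm{pop}}$ from Condition~1, forms the ray $\theta=\eta v_{\mathrm{pop}}$ with $G(0)=R(f_{\mathrm{old}}^\ast)$ because $h_0=0$, uses Assumption~1 to get $G'(0)<0$, and takes a small step to obtain a strict decrease. Your quantitative bound $G(\eta)\le G(0)+\tfrac12 G'(0)\eta$ and your check that $\eta_0 v_{\mathrm{pop}}$ stays inside the norm-constrained parameter set are refinements the paper omits; they are consistent with its argument.
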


\begin{proof}
We divide the proof into three steps.

\paragraph{Step 1: Construct a descent path in parameter space.}
By Condition 1, the residual block class $\mathcal{F}_{\mathrm{res}}$
admits a parameterization $h_{\theta}$ with zero initialization $h_{0}=0$,
and there exists a parameter direction $v_{pop}$ such that
\[
a \triangleq
\mathbb{E}_{(x,y)\sim\mathcal{D}}
\left[
\nabla_{x^{(l^{*})}}
\ell\!\left(f_{\mathrm{top}}(x^{(l^{*})}),y\right)^{\top}
\cdot
\frac{\partial h_{\theta}(x^{(l^{*})})}{\partial \theta}\Big|_{\theta=0}
\cdot v_{pop}
\right]
\neq 0.
\]
If $a>0$, we replace $v$ by $-v$. Hence, without loss of generality, we may assume
\[
a<0.
\]

Now consider the one-dimensional parameter path
\[
\theta(t)=tv,
\qquad t\ge 0,
\]
and define the corresponding population-risk profile
\[
G(t)
\triangleq
\mathbb{E}_{(x,y)\sim\mathcal{D}}
\Bigl[
\ell\!\left(
f_{\mathrm{top}}\bigl(x^{(l^{*})}+h_{tv}(x^{(l^{*})})\bigr),y
\right)
\Bigr].
\]
Since $h_{0}=0$, at $t=0$ the inserted residual block vanishes, so the model reduces
to the old model. Therefore,
\[
G(0)=R(f_{\mathrm{old}}^{*}).
\]

\paragraph{Step 2: Compute the directional derivative at the origin.}
By Assumption 1, the map
\[
v\mapsto
\mathbb{E}_{(x,y)\sim\mathcal{D}}
\left[
\ell\!\left(f_{\mathrm{top}}(x^{(l^{*})}+v),y\right)
\right]
\]
is continuously differentiable in a neighborhood of the origin.
Hence, by the chain rule, $G$ is differentiable at $t=0$, and
\[
G'(0)
=
\mathbb{E}_{(x,y)\sim\mathcal{D}}
\left[
\nabla_{x^{(l^{*})}}
\ell\!\left(f_{\mathrm{top}}(x^{(l^{*})}),y\right)^{\top}
\cdot
\frac{\partial h_{tv_{pop}}(x^{(l^{*})})}{\partial t}\Big|_{t=0}
\right].
\]
Since $\theta=t v_{pop}$, we have
\[
\frac{\partial h_{tv}(x^{(l^{*})})}{\partial t}\Big|_{t=0}
=
\frac{\partial h_{\theta}(x^{(l^{*})})}{\partial \theta}\Big|_{\theta=0}\cdot v_{pop}.
\]
Substituting this identity into the previous display yields
\[
G'(0)
=
\mathbb{E}_{(x,y)\sim\mathcal{D}}
\left[
\nabla_{x^{(l^{*})}}
\ell\!\left(f_{\mathrm{top}}(x^{(l^{*})}),y\right)^{\top}
\cdot
\frac{\partial h_{\theta}(x^{(l^{*})})}{\partial \theta}\Big|_{\theta=0}
\cdot v_{pop}
\right]
=
a<0.
\]

Thus, the population risk decreases to first order along the path $\theta(t)=tv$
at the origin.

\paragraph{Step 3: Use negativity of the derivative to obtain a strictly better model.}
Since $G$ is differentiable at $0$ and $G'(0)<0$, there exists $t_{0}>0$ such that
\[
G(t_{0})<G(0).
\]
Let
\[
\theta_{0}=t_{0}v,
\]
and define the corresponding jumpboard model
\[
\tilde f(x)
\triangleq
f_{\mathrm{top}}
\Bigl(
f_{\mathrm{bot}}(x)+h_{\theta_{0}}(f_{\mathrm{bot}}(x))
\Bigr).
\]
Because $h_{\theta_{0}}\in \mathcal{F}_{\mathrm{res}}$, we have
$\tilde f\in\mathcal{H}_{\mathrm{new}}$. Moreover,
\[
R(\tilde f)=G(t_{0})<G(0)=R(f_{\mathrm{old}}^{*}).
\]
Therefore, there exists a model in the expanded hypothesis class
$\mathcal{H}_{\mathrm{new}}$ whose population risk is strictly smaller than that
of the old model $f_{\mathrm{old}}^{*}$.

This completes the proof.
\end{proof}

\paragraph{Remark.}
The proof is purely population-level. It does not require any concentration on the
training set or the test set. Its only purpose is to establish the \emph{existence}
of a strictly better model in the enlarged hypothesis class. In particular, the jump
model $\tilde f$ is an auxiliary comparison model; the actual optimizer is not
required to find this model exactly.

\section{Proof of Theorem~\ref{thm 2}}
\label{app:proof-thm2}

For completeness, we state Theorem~\ref{thm 2}.

\begin{theorem}[Train/test alignment for empirical jumpboards]
\label{thm 2}
Fix the trained reference model \(f_{\mathrm{old}}^\ast=f_{\mathrm{top}}\circ f_{\mathrm{bot}}\)
and an insertion layer \(l^\ast\). Let
\[
\zeta
:=
\nabla_{x^{(l^\ast)}}\ell
\bigl(
f_{\mathrm{top}}(x^{(l^\ast)}),y
\bigr)
\in\mathbb R^N
\]
be the activation-gradient random vector under \((x,y)\sim\mathcal D\). Define
\[
\bar\mu := \mathbb E[\zeta],
\qquad
\Sigma := \operatorname{Cov}(\zeta),
\]
and assume \(\bar\mu\neq 0\). Let
\[
\mu
:=
\frac1M\sum_{i=1}^M \zeta_i,
\qquad
g
:=
\frac1K\sum_{j=1}^K \widetilde\zeta_j,
\]
where \(\zeta_1,\ldots,\zeta_M\) are the training activation gradients and
\(\widetilde\zeta_1,\ldots,\widetilde\zeta_K\) are the independent test activation gradients.
Suppose Assumption~3 holds, namely
\[
\lambda_{\max}(\Sigma)\le C_\Sigma\tau^2 .
\]
Then
\[
\mathbb P(\mu^\top g\le 0)
\le
\frac{4C_\Sigma\tau^2}{M\|\bar\mu\|^2}
+
\frac{4C_\Sigma\tau^2}{K\|\bar\mu\|^2}
+
\frac{4C_\Sigma\tau^2\operatorname{tr}(\Sigma)}
{KM\|\bar\mu\|^4}.
\]
In particular, if
\[
\operatorname{tr}(\Sigma)\le N\tau^2
\qquad\text{and}\qquad
\|\bar\mu\|^2=\Theta(N),
\]
then
\[
\mathbb P(\mu^\top g\le 0)
=
O\!\left(
\frac1{MN}
+
\frac1{KN}
\right).
\]
Thus the probability that the training-selected empirical direction remains positively aligned with the independent test direction improves with width, training size, and test size.
\end{theorem}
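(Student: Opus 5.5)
The plan is to decompose the inner product $\mu^\top g$ around its mean $\|\bar\mu\|^2$ and apply a single second-moment (Chebyshev-type) bound to the fluctuation. Write $\mu=\bar\mu+a$ and $g=\bar\mu+b$, where
\[
a:=\frac1M\sum_{i=1}^M(\zeta_i-\bar\mu),\qquad b:=\frac1K\sum_{j=1}^K(\widetilde\zeta_j-\bar\mu).
\]
Then $a$ and $b$ are independent and mean zero, with $\operatorname{Cov}(a)=\Sigma/M$ and $\operatorname{Cov}(b)=\Sigma/K$. Independence comes from the i.i.d.\ training and test samples and the fact that $f_{\mathrm{old}}^\ast$ is fixed. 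We implicitly need $\Sigma$ to exist, i.e.\ finite second moments of $\zeta$; this is already presupposed by Assumption~3. Expanding gives
\[
\mu^\top g=\|\bar\mu\|^2+X,\qquad X:=\bar\mu^\top a+\bar\mu^\top b+a^\top b.
\]
The event $\{\mu^\top g\le 0\}$ is contained in $\{|X|\ge\|\bar\mu\|^2\}$, so Markov's inequality applied to $X^2$ gives
\[
\mathbb P(\mu^\top g\le 0)\le \mathbb E[X^2]/\|\bar\mu\|^4 .
\]

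The key step is computing $\mathbb E[X^2]$ exactly. The three cross terms vanish by independence and zero means:
\[
\mathbb E[(\bar\mu^\top a)(\bar\mu^\top b)]=0,\qquad
\mathbb E[(\bar\mu^\top a)(a^\top b)]=\mathbb E[\bar\mu^\top a\,a^\top]\,\mathbb E[b]=0,
\]
and the remaining cross term vanishes by the same argument with the roles of $a$ and $b$ exchanged. For the diagonal terms,
\[
\mathbb E[(\bar\mu^\top a)^2]=\bar\mu^\top\Sigma\bar\mu/M,\qquad
\mathbb E[(\bar\mu^\top b)^2]=\bar\mu^\top\Sigma\bar\mu/K,
\]
and
\[
\mathbb E[(a^\top b)^2]=\mathbb E[a^\top\mathbb E[bb^\top]a]=\operatorname{tr}(\Sigma^2)/(KM).
\]

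Next apply Assumption~3 to each term. First, $\bar\mu^\top\Sigma\bar\mu\le\lambda_{\max}(\Sigma)\|\bar\mu\|^2\le C_\Sigma\tau^2\|\bar\mu\|^2$. Second, $\operatorname{tr}(\Sigma^2)\le\lambda_{\max}(\Sigma)\operatorname{tr}(\Sigma)\le C_\Sigma\tau^2\operatorname{tr}(\Sigma)$. Dividing by $\|\bar\mu\|^4$ yields the stated bound, in fact with constant $1$ in place of $4$. Since $1\le 4$, this is strictly stronger and implies the claim. If one prefers to reproduce the factor $4$, one can instead split the event into the pieces $\{|\bar\mu^\top(a+b)|\ge\|\bar\mu\|^2/2\}$ and $\{|a^\top b|\ge\|\bar\mu\|^2/2\}$ and apply Chebyshev to each piece; this is unnecessary.

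The "in particular" part follows by substitution. Using $\operatorname{tr}(\Sigma)\le N\tau^2$ and $\|\bar\mu\|^2\ge cN$, the first two terms become $O(1/(MN))$ and $O(1/(KN))$. The mixed term is $O(\tau^4 N/(KMN^2))=O(1/(KMN))$, which is dominated by either of the first two terms. Treating $C_\Sigma$ and $\tau^2$ as width-independent constants, as Assumption~3 stipulates, gives $O(1/(MN)+1/(KN))$.

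The only delicate point is justifying independence and the vanishing of the mixed third-order terms. Both rely on $f_{\mathrm{top}}$, $f_{\mathrm{bot}}$ being fixed, so that they do not depend on $S_{\mathrm{train}}$. The statement fixes the reference model, so I would note this explicitly, and otherwise expect no real obstacle.
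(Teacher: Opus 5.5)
Your proof is correct, and it takes a genuinely different and more economical route than the paper.

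The paper argues in two stages. It introduces the good training event $A=\{\mu^\top\bar\mu\ge\frac12\|\bar\mu\|^2\}$ and bounds $\mathbb P(A^c)$ by Chebyshev applied to $(\mu-\bar\mu)^\top\bar\mu$. It then conditions on $S_{\mathrm{train}}$ and applies Chebyshev to $\mu^\top g$, using the conditional variance $\frac1K\mu^\top\Sigma\mu\le\frac{C_\Sigma\tau^2}{K}\|\mu\|^2$. Finally it averages using $\mathbb E\|\mu\|^2=\|\bar\mu\|^2+\operatorname{tr}(\Sigma)/M$. Halving the threshold in each of the two stages is what produces the factor $4$.

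You instead compute the unconditional second moment of $X=\mu^\top g-\|\bar\mu\|^2$ exactly, obtaining $\mathbb E[X^2]=(\frac1M+\frac1K)\bar\mu^\top\Sigma\bar\mu+\operatorname{tr}(\Sigma^2)/(KM)$. A single Chebyshev step then gives the same three-term bound with constant $1$. Before Assumption~3 is invoked, the mixed term contains the sharper quantity $\operatorname{tr}(\Sigma^2)$. Since $\mathbb E[X]=0$, Cantelli's one-sided inequality would even give $\mathbb E[X^2]/(\mathbb E[X^2]+\|\bar\mu\|^4)$. The cross terms vanish as you claim, and integrability holds under finite second moments, because $|\bar\mu^\top a|\,|a^\top b|\le\|\bar\mu\|\,\|a\|^2\|b\|$ with $a,b$ independent.

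What the paper's conditional route buys is structural. It separates the two failure sources the text emphasizes: the training direction misaligning with $\bar\mu$, and test-side fluctuation for a fixed training direction. Its intermediate bound $\mathbb P(\mu^\top g\le 0\mid S_{\mathrm{train}})\le 4C_\Sigma\tau^2\|\mu\|^2/(K\|\bar\mu\|^4)$ on $A$ is a statement about the realized training-selected direction, which your unconditional computation does not provide.

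You are right to flag that both arguments need $f_{\mathrm{old}}^\ast$ to be independent of $S_{\mathrm{train}}$. The paper uses the same i.i.d. premise without comment.

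One small imprecision: Assumption~3 only stipulates that $C_\Sigma$ is width-independent. Treating $\tau^2=O(1)$ in the ``in particular'' step is therefore an extra implicit assumption, though the paper's proof makes exactly the same one.
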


\begin{proof}
The proof separates the failure event into two sources. 
First, the empirical training direction \(\mu\) may fail to align with the population direction \(\bar\mu\). 
Second, even when the training direction is reliable, the independent test direction \(g\) may fluctuate enough to make \(\mu^\top g\le 0\).

Define the good training-alignment event
\[
A
:=
\left\{
\mu^\top\bar\mu
\ge
\frac12\|\bar\mu\|^2
\right\}.
\]
Then
\[
\mathbb P(\mu^\top g\le 0)
\le
\mathbb P(A^c)
+
\mathbb P(\mu^\top g\le 0,\ A).
\]
We bound the two terms separately.

First, since
\[
\mu^\top\bar\mu
=
\|\bar\mu\|^2
+
(\mu-\bar\mu)^\top\bar\mu,
\]
the event \(A^c\) implies
\[
(\mu-\bar\mu)^\top\bar\mu
<
-\frac12\|\bar\mu\|^2.
\]
Therefore,
\[
\mathbb P(A^c)
\le
\mathbb P\left(
\left|
(\mu-\bar\mu)^\top\bar\mu
\right|
\ge
\frac12\|\bar\mu\|^2
\right).
\]
Because \(\mu\) is the average of \(M\) i.i.d. copies of \(\zeta\), we have
\[
\mathbb E[\mu]=\bar\mu,
\qquad
\operatorname{Cov}(\mu)=\frac{\Sigma}{M}.
\]
Hence
\[
\operatorname{Var}\bigl((\mu-\bar\mu)^\top\bar\mu\bigr)
=
\bar\mu^\top\operatorname{Cov}(\mu)\bar\mu
=
\frac1M\bar\mu^\top\Sigma\bar\mu.
\]
Using \(\lambda_{\max}(\Sigma)\le C_\Sigma\tau^2\), we obtain
\[
\bar\mu^\top\Sigma\bar\mu
\le
\lambda_{\max}(\Sigma)\|\bar\mu\|^2
\le
C_\Sigma\tau^2\|\bar\mu\|^2.
\]
Thus
\[
\operatorname{Var}\bigl((\mu-\bar\mu)^\top\bar\mu\bigr)
\le
\frac{C_\Sigma\tau^2}{M}\|\bar\mu\|^2.
\]
By Chebyshev's inequality,
\[
\mathbb P(A^c)
\le
\frac{
\frac{C_\Sigma\tau^2}{M}\|\bar\mu\|^2
}{
\frac14\|\bar\mu\|^4
}
=
\frac{4C_\Sigma\tau^2}{M\|\bar\mu\|^2}.
\]
This is the training-sample failure term.

We now bound the second term,
\[
\mathbb P(\mu^\top g\le 0,\ A).
\]
Condition on the training sample \(S_{\mathrm{train}}\). Then \(\mu\) is fixed, while \(g\) remains random and independent of the training sample. Since
\[
g=\frac1K\sum_{j=1}^K\widetilde\zeta_j,
\]
we have
\[
\mathbb E[g\mid S_{\mathrm{train}}]=\bar\mu,
\qquad
\operatorname{Cov}(g\mid S_{\mathrm{train}})=\frac{\Sigma}{K}.
\]
Therefore,
\[
\mathbb E[\mu^\top g\mid S_{\mathrm{train}}]
=
\mu^\top\bar\mu
\]
and
\[
\operatorname{Var}(\mu^\top g\mid S_{\mathrm{train}})
=
\mu^\top
\operatorname{Cov}(g\mid S_{\mathrm{train}})
\mu
=
\frac1K\mu^\top\Sigma\mu.
\]
Again using \(\lambda_{\max}(\Sigma)\le C_\Sigma\tau^2\), we get
\[
\operatorname{Var}(\mu^\top g\mid S_{\mathrm{train}})
\le
\frac{C_\Sigma\tau^2}{K}\|\mu\|^2.
\]

On the event \(A\),
\[
\mu^\top\bar\mu
\ge
\frac12\|\bar\mu\|^2.
\]
If \(\mu^\top g\le 0\), then
\[
\mu^\top g-\mu^\top\bar\mu
\le
-\mu^\top\bar\mu,
\]
and therefore
\[
\left|
\mu^\top g-\mu^\top\bar\mu
\right|
\ge
\mu^\top\bar\mu.
\]
By Chebyshev's inequality, conditional on \(S_{\mathrm{train}}\),
\[
\mathbb P(\mu^\top g\le 0\mid S_{\mathrm{train}})
\le
\frac{
\operatorname{Var}(\mu^\top g\mid S_{\mathrm{train}})
}{
(\mu^\top\bar\mu)^2
}.
\]
On \(A\), this gives
\[
\mathbb P(\mu^\top g\le 0\mid S_{\mathrm{train}})
\le
\frac{
\frac{C_\Sigma\tau^2}{K}\|\mu\|^2
}{
\frac14\|\bar\mu\|^4
}
=
\frac{4C_\Sigma\tau^2\|\mu\|^2}
{K\|\bar\mu\|^4}.
\]
Taking expectation over the training sample and restricting to \(A\), we obtain
\[
\mathbb P(\mu^\top g\le 0,\ A)
\le
\mathbb E\left[
\mathbf 1_A
\frac{4C_\Sigma\tau^2\|\mu\|^2}
{K\|\bar\mu\|^4}
\right].
\]
Since \(\mathbf 1_A\le 1\),
\[
\mathbb P(\mu^\top g\le 0,\ A)
\le
\frac{4C_\Sigma\tau^2}
{K\|\bar\mu\|^4}
\mathbb E\|\mu\|^2.
\]

It remains to compute \(\mathbb E\|\mu\|^2\). Since
\[
\mathbb E[\mu]=\bar\mu,
\qquad
\operatorname{Cov}(\mu)=\frac{\Sigma}{M},
\]
we have
\[
\mathbb E\|\mu\|^2
=
\|\mathbb E\mu\|^2
+
\operatorname{tr}(\operatorname{Cov}(\mu))
=
\|\bar\mu\|^2
+
\frac{\operatorname{tr}(\Sigma)}{M}.
\]
Therefore,
\[
\mathbb P(\mu^\top g\le 0,\ A)
\le
\frac{4C_\Sigma\tau^2}
{K\|\bar\mu\|^4}
\left(
\|\bar\mu\|^2
+
\frac{\operatorname{tr}(\Sigma)}{M}
\right).
\]
Equivalently,
\[
\mathbb P(\mu^\top g\le 0,\ A)
\le
\frac{4C_\Sigma\tau^2}{K\|\bar\mu\|^2}
+
\frac{4C_\Sigma\tau^2\operatorname{tr}(\Sigma)}
{KM\|\bar\mu\|^4}.
\]

Combining this with the bound on \(\mathbb P(A^c)\), we obtain
\[
\mathbb P(\mu^\top g\le 0)
\le
\frac{4C_\Sigma\tau^2}{M\|\bar\mu\|^2}
+
\frac{4C_\Sigma\tau^2}{K\|\bar\mu\|^2}
+
\frac{4C_\Sigma\tau^2\operatorname{tr}(\Sigma)}
{KM\|\bar\mu\|^4}.
\]
This proves the first claim.

For the width-dependent form, assume additionally that
\[
\operatorname{tr}(\Sigma)\le N\tau^2
\]
and
\[
\|\bar\mu\|^2=\Theta(N).
\]
Then
\[
\frac{4C_\Sigma\tau^2}{M\|\bar\mu\|^2}
=
O\!\left(\frac1{MN}\right),
\]
and
\[
\frac{4C_\Sigma\tau^2}{K\|\bar\mu\|^2}
=
O\!\left(\frac1{KN}\right).
\]
For the third term,
\[
\frac{4C_\Sigma\tau^2\operatorname{tr}(\Sigma)}
{KM\|\bar\mu\|^4}
\le
\frac{4C_\Sigma\tau^2\cdot N\tau^2}
{KM\|\bar\mu\|^4}
=
O\!\left(\frac1{KMN}\right),
\]
because \(\|\bar\mu\|^4=\Theta(N^2)\). This term is lower order relative to the first two terms. Hence
\[
\mathbb P(\mu^\top g\le 0)
=
O\!\left(
\frac1{MN}
+
\frac1{KN}
\right).
\]
This completes the proof.
\end{proof}
\section{Proof of Lemma~\ref{lem:gen-main}}
\label{app:proof-lem-gen-main}

\begin{lemma}[Norm-based Rademacher bound for normalized residual networks]
\label{lem:gen-main}
Under Assumptions 2 and 4, the empirical Rademacher complexity satisfies
\[
\hat{\mathfrak{R}}_{S}(\ell\circ \mathcal H)
\le
B_{\ell}
\sqrt{
\frac{2d\log\!\bigl(2\bar b L L_{\ell}\Lambda_{\max}/\epsilon_{0}+1\bigr)}{m}
}
+\epsilon_{0},
\]
where
\[
B_{\mathrm{norm}}=\gamma_{\max}\sqrt{N},\qquad
B_{*}=\max\{B_{x},B_{\mathrm{norm}}\},
\]
\[
\Lambda_{\max}=\max_{0\le l\le L-1}\Lambda_{l},
\qquad
\Lambda_{l}
=
L_{\mathrm{top}}\,L_{l}^{(p)}\,B_{*}\,c_{l}\prod_{k>l}c_{k}(1+s_{k}),
\]
and
\[
d=\sum_{l=0}^{L-1}d_{l},\qquad
\bar b=\max_{0\le l\le L-1} b_{l}.
\]
$m$ is the number of data. In particular, choosing $\epsilon_{0}=m^{-1/2}$ yields
\[
\hat{\mathfrak{R}}_{S}(\ell\circ \mathcal H)
=
\mathcal O\!\left(
B_{\ell}\sqrt{\frac{d\log m}{m}}
\right).
\]
\end{lemma}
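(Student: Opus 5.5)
The plan is a standard covering-number argument: a parameter-space cover is pushed through the network by a layerwise perturbation bound, and then a finite-class (Massart) bound is combined with the discretization error. Throughout, the $L$ residual layers $T_l(z)=\mathcal N_l(z+h_l(z;W_l))$ are parametrized by $W=(W_0,\dots,W_{L-1})$ with $\|W_l\|_F\le b_l$ (Assumption 4(iv)). The composite loss is $\ell(f_{\mathrm{top}}(T_{L-1}\circ\cdots\circ T_0(x)),y)$.

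\textbf{Step 1: parameter-to-output Lipschitz bound.} Fix an input $x$ and two parameter tuples $W,W'$. I would swap one layer at a time and telescope. Changing $W_l$ to $W_l'$ changes the pre-normalization sum by at most $L_l^{(p)}\|W_l-W_l'\|_F\|z_l\|$, using Assumption 4(ii).
\begin{itemize}
\item The input to each layer is either $x$ or a post-normalization output, so $\|z_l\|\le B_*=\max\{B_x,\gamma_{\max}\sqrt N\}$.
\item By Assumption 4(iii), every normalization input has norm at least $r_{\min}$. On that set $\mathcal N_l$ is $c_l$-Lipschitz (this is where $c_l$ is defined; under Assumption 4$\epsilon$ it is $c_{l,\epsilon}$). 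So the perturbation after layer $l$ is at most $c_l L_l^{(p)}B_*\|W_l-W_l'\|_F$.
\item Each later layer $k>l$ is $c_k(1+s_k)$-Lipschitz in its input by Assumption 4(i).
\item $f_{\mathrm{top}}$ contributes $L_{\mathrm{top}}$ and $\ell$ contributes $L_\ell$.
\end{itemize}
Summing over layers gives
\[
|\ell(f_W(x),y)-\ell(f_{W'}(x),y)|\le L_\ell\sum_{l}\Lambda_l\|W_l-W_l'\|_F\le L_\ell\Lambda_{\max}\sum_l\|W_l-W_l'\|_F .
\]

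\textbf{Step 2: covering.} For each layer I take an $\epsilon_0/(L L_\ell\Lambda_{\max})$-cover of the Frobenius ball of radius $b_l$ in $\mathbb R^{d_l}$. By the volumetric bound it has size at most $(2b_lLL_\ell\Lambda_{\max}/\epsilon_0+1)^{d_l}$. The product cover then gives an $\epsilon_0$-cover of $\ell\circ\mathcal H$ in sup norm over the sample. Its log-cardinality is at most
\[
d\log\bigl(2\bar bLL_\ell\Lambda_{\max}/\epsilon_0+1\bigr).
\]

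\textbf{Step 3: Rademacher bound.} For any $\sigma$, the supremum over the class is within $\epsilon_0$ of the maximum over the cover. Massart's finite-class lemma applies because each loss vector has entries in $[0,B_\ell]$, hence Euclidean norm at most $B_\ell\sqrt m$. This gives the finite-class term
\[
B_\ell\sqrt{\frac{2\log|\text{cover}|}{m}}.
\]
Adding the $\epsilon_0$ discretization error yields the displayed bound. Substituting $\epsilon_0=m^{-1/2}$ turns the logarithm into $O(\log m)$, since the other constants are fixed, which gives the $\mathcal O(B_\ell\sqrt{d\log m/m})$ form.

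\textbf{Main obstacle: Step 1.} The delicate point is making the normalization Lipschitz constants $c_l$ legitimate. RMSNorm/LayerNorm is only Lipschitz away from the origin, with a constant of order $\gamma_{\max}\sqrt N/r_{\min}$. Assumption 4(iii) must therefore hold uniformly along the whole interpolation path between $W$ and $W'$. Since the infimum in 4(iii) is taken over all admissible $\theta$, the telescoping hybrids, which mix layers from $W$ and $W'$, are themselves admissible networks, so the bound applies. I would state this explicitly.

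A second point needs bookkeeping: the factor $B_*$ must bound $\|z_l\|$ at every layer, including $l=0$. This is why the maximum with $B_x$ appears.

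Finally, the inserted block $h$ is just one additional layer of the same form. The same argument therefore covers $\mathcal H_{\mathrm{new}}$, with $L$ and $d$ counting the new block.
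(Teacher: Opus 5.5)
Your proposal is correct and follows essentially the same route as the paper's proof. Both arguments telescope over hybrid parameter vectors to obtain the layerwise constants $\Lambda_l$, take a product of per-layer volumetric covers at radius $\epsilon_0/(L L_\ell \Lambda_{\max})$, and finish with Massart's lemma plus the $\epsilon_0$ discretization error. Your explicit remark that the hybrid networks are themselves admissible, so that Assumption 4(iii) and the constants $c_l$ apply along the telescoping, makes precise a point the paper leaves implicit.
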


\begin{proof}
We divide the proof into seven steps.

\paragraph{Step 1: Normalization truncates the representation norm.}
Let $\gamma=(\gamma_{1},\dots,\gamma_{N})$ be the coordinatewise scale parameter
of RMSNorm. For any $x\neq 0$,
\[
\|\mathrm{RMSNorm}(x)\|^{2}
=
N\cdot
\frac{\sum_{i=1}^{N}\gamma_{i}^{2}x_{i}^{2}}{\sum_{j=1}^{N}x_{j}^{2}}
\le
N\gamma_{\max}^{2},
\]
where $\gamma_{\max}=\max_{i}|\gamma_{i}|$.
Hence
\[
\|\mathrm{RMSNorm}(x)\|\le \gamma_{\max}\sqrt N
\triangleq B_{\mathrm{norm}}.
\]
The same bound holds for LayerNorm. Therefore every normalized hidden state lies
in the ball of radius $B_{\mathrm{norm}}$.

\paragraph{Step 2: Layerwise Lipschitz bound with respect to the input.}
Write the $l$-th layer as
\[
T_{l}(z)=\mathcal N_{l}\bigl(z+h_{l}(z)\bigr).
\]
By Assumption 4(L1),
\[
\|(u+h_{l}(u))-(v+h_{l}(v))\|
\le
(1+s_{l})\|u-v\|.
\]
By Assumption 4(N), the input to the normalization layer satisfies
\[
\|z+h_{l}(z)\|\ge r_{\min}>0.
\]
Hence the normalization map $\mathcal N_{l}$ is Lipschitz on the relevant domain,
with some constant $c_{l}$ satisfying
\[
c_{l}\le \frac{2\gamma_{\max}\sqrt N}{r_{\min}}.
\]
Therefore
\[
\|T_{l}(u)-T_{l}(v)\|
\le
c_{l}(1+s_{l})\|u-v\|.
\]

\paragraph{Step 3: Uniform bound on the final representation and output.}
Because each layer output is normalized, every hidden state after normalization
satisfies
\[
\|z_{l}\|\le B_{\mathrm{norm}}.
\]
In particular, for the final hidden representation $z_{L}$,
\[
\|z_{L}\|\le B_{\mathrm{norm}}.
\]
By Assumption 2, $f_{\mathrm{top}}$ is $L_{\mathrm{top}}$-Lipschitz and
$\|f_{\mathrm{top}}(0)\|\le B_{0}$. Thus
\[
\|f_{\theta}(x)\|
\le
\|f_{\mathrm{top}}(0)\|+L_{\mathrm{top}}\|z_{L}\|
\le
B_{0}+L_{\mathrm{top}}B_{\mathrm{norm}}.
\]
So the model output remains in a uniformly bounded region.

\paragraph{Step 4: Compactness of parameter space.}
For each layer, the residual-branch parameter $W_{l}$ satisfies
\[
\|W_{l}\|_{\sigma}\le s_{l},
\qquad
\|W_{l}\|_{F}\le b_{l}.
\]
Hence the parameter set
\[
\Theta
=
\{(W_{0},\dots,W_{L-1}) : \|W_{l}\|_{\sigma}\le s_{l},\ \|W_{l}\|_{F}\le b_{l}\}
\]
is a bounded closed subset of $\mathbb R^{d}$, where
\[
d=\sum_{l=0}^{L-1}d_{l}.
\]
In particular, it is compact.

\paragraph{Step 5: Lipschitz dependence of the whole network on the parameters.}
Fix an input $x$. Let
\[
\theta=(W_{0},\dots,W_{L-1}),
\qquad
\theta'=(W_{0}',\dots,W_{L-1}').
\]
Introduce the mixed parameter vectors
\[
\theta^{(l)}=(W_{0}',\dots,W_{l-1}',W_{l},\dots,W_{L-1}),
\qquad 0\le l\le L,
\]
so that $\theta^{(0)}=\theta$ and $\theta^{(L)}=\theta'$.
Then
\[
\|f_{\theta}(x)-f_{\theta'}(x)\|
\le
\sum_{l=0}^{L-1}
\|f_{\theta^{(l)}}(x)-f_{\theta^{(l+1)}}(x)\|.
\]
For the $l$-th term, the two networks differ only at layer $l$, and the common
input arriving at that layer is denoted by $z_{l}^{\mathrm{mix}}$.
Define
\[
B_{*}\triangleq \max\{B_{x},B_{\mathrm{norm}}\}.
\]
Then
\[
\|z_{l}^{\mathrm{mix}}\|\le B_{*}.
\]
By Assumption 4(L2),
\[
\|h_{l}(z_{l}^{\mathrm{mix}};W_{l})-h_{l}(z_{l}^{\mathrm{mix}};W_{l}')\|
\le
L_{l}^{(p)}\|W_{l}-W_{l}'\|_{F}\,B_{*}.
\]
After passing through the normalization map at layer $l$, the perturbation becomes
\[
\|\delta z_{l+1}\|
\le
c_{l}L_{l}^{(p)}B_{*}\|W_{l}-W_{l}'\|_{F}.
\]
Propagating this perturbation through all later layers and then through
$f_{\mathrm{top}}$, we obtain
\[
\|f_{\theta^{(l)}}(x)-f_{\theta^{(l+1)}}(x)\|
\le
L_{\mathrm{top}}\,c_{l}\,L_{l}^{(p)}\,B_{*}
\prod_{k>l}c_{k}(1+s_{k})
\cdot
\|W_{l}-W_{l}'\|_{F}.
\]
Define
\[
\Lambda_{l}
\triangleq
L_{\mathrm{top}}\,L_{l}^{(p)}\,B_{*}\,c_{l}\prod_{k>l}c_{k}(1+s_{k}),
\qquad
\Lambda_{\max}\triangleq \max_{l}\Lambda_{l}.
\]
Then
\[
\sup_{\|x\|\le B_{x}}\|f_{\theta}(x)-f_{\theta'}(x)\|
\le
\sum_{l=0}^{L-1}\Lambda_{l}\|W_{l}-W_{l}'\|_{F}
\le
\Lambda_{\max}\sum_{l=0}^{L-1}\|W_{l}-W_{l}'\|_{F}.
\]

\paragraph{Step 6: Covering number of the function class.}
For each layer, the constraint $\|W_{l}\|_{F}\le b_{l}$ places $W_{l}$ in a ball
in $\mathbb R^{d_{l}}$. The standard covering estimate gives
\[
\mathcal N(B^{d_{l}}(b_{l}),\epsilon)
\le
\left(\frac{2b_{l}}{\epsilon}+1\right)^{d_{l}}.
\]
To guarantee output accuracy at most $\epsilon$, it is enough to require
\[
\Lambda_{\max}\sum_{l=0}^{L-1}\|W_{l}-W_{l}'\|_{F}\le \epsilon.
\]
Thus we may choose layerwise covering radius
\[
\epsilon_{l}=\frac{\epsilon}{L\Lambda_{\max}}.
\]
Multiplying the covering numbers of all layers yields
\[
\log \mathcal N(\mathcal H,\epsilon)
\le
\sum_{l=0}^{L-1}
d_{l}\log\!\left(\frac{2b_{l}L\Lambda_{\max}}{\epsilon}+1\right)
\le
d\log\!\left(\frac{2\bar b L\Lambda_{\max}}{\epsilon}+1\right),
\]
where
\[
\bar b=\max_{l}b_{l}.
\]

\paragraph{Step 7: From function-class covering to loss-class Rademacher complexity.}
Let $\mathcal H_{\epsilon_{0}/L_{\ell}}$ be an
$\epsilon_{0}/L_{\ell}$-net of $\mathcal H$ under the uniform norm.
Since $\ell$ is $L_{\ell}$-Lipschitz,
\[
\sup_{(x,y)}
\bigl|
\ell(f_{\theta}(x),y)-\ell(f_{\theta'}(x),y)
\bigr|
\le
L_{\ell}\sup_{x}\|f_{\theta}(x)-f_{\theta'}(x)\|.
\]
Therefore the induced loss class $(\ell\circ\mathcal H)_{\epsilon_{0}}$
is an $\epsilon_{0}$-net of $\ell\circ\mathcal H$, and
\[
\log \mathcal N(\ell\circ\mathcal H,\epsilon_{0})
\le
\log \mathcal N(\mathcal H,\epsilon_{0}/L_{\ell})
\le
d\log\!\left(\frac{2\bar b L L_{\ell}\Lambda_{\max}}{\epsilon_{0}}+1\right).
\]
Now apply Massart's lemma to the finite loss-class net:
\[
\hat{\mathfrak R}_{S}\bigl((\ell\circ\mathcal H)_{\epsilon_{0}}\bigr)
\le
B_{\ell}
\sqrt{
\frac{2\log |(\ell\circ\mathcal H)_{\epsilon_{0}}|}{m}
}.
\]
Since the approximation error is at most $\epsilon_{0}$,
\[
\hat{\mathfrak R}_{S}(\ell\circ\mathcal H)
\le
\hat{\mathfrak R}_{S}\bigl((\ell\circ\mathcal H)_{\epsilon_{0}}\bigr)
+\epsilon_{0}.
\]
Combining the last two displays gives
\[
\hat{\mathfrak{R}}_{S}(\ell\circ \mathcal H)
\le
B_{\ell}
\sqrt{
\frac{2d\log\!\bigl(2\bar b L L_{\ell}\Lambda_{\max}/\epsilon_{0}+1\bigr)}{m}
}
+\epsilon_{0}.
\]
Finally, taking $\epsilon_{0}=M^{-1/2}$ yields
\[
\hat{\mathfrak{R}}_{S}(\ell\circ \mathcal H)
\le
B_{\ell}
\sqrt{
\frac{2d\log\!\bigl(2\bar b L L_{\ell}\Lambda_{\max}\sqrt m+1\bigr)}{m}
}
+\frac{1}{\sqrt m}
=
\mathcal O\!\left(B_{\ell}\sqrt{\frac{d\log M}{M}}\right).
\]
This completes the proof.
\end{proof}

Thus, we have the following corollary. 
\begin{corollary}[Uniform generalization bound]
\label{cor:gen-main}
Under the conditions of Lemma~\ref{lem:gen-main}, with probability at least
\(1-\delta\) over a sample \(S\) of size \(m\), one has
\[
\sup_{f\in\mathcal H}
\left|
R(f)-L_S(f)
\right|
\le
\epsilon_{\mathrm{gen}}^{\mathrm{norm}}(\mathcal H,m,\delta),
\]
where one may take
\[
\epsilon_{\mathrm{gen}}^{\mathrm{norm}}(\mathcal H,m,\delta)
:=
2B_\ell
\sqrt{
\frac{
2d\log\!\left(2\bar b L L_\ell\Lambda_{\max}/\rho+1\right)
}{m}
}
+
2\rho
+
3B_\ell
\sqrt{
\frac{\log(2/\delta)}{2m}
}.
\]
Taking \(\rho=m^{-1/2}\) gives the dominant scale
\[
\epsilon_{\mathrm{gen}}^{\mathrm{norm}}(\mathcal H,m,\delta)
=
\widetilde O\!\left(
B_\ell\sqrt{\frac d m}
\right),
\]
up to the explicit norm-, depth-, and normalization-dependent factors contained in
\(\Lambda_{\max}\).
\end{corollary}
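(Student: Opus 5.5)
The plan is to derive Corollary~\ref{cor:gen-main} from Lemma~\ref{lem:gen-main} via the standard symmetrization argument. Then we apply McDiarmid's inequality twice, once for each side of the two-sided deviation. The loss is bounded in $[0,B_\ell]$ by Assumption 2, so the map $S\mapsto\sup_{f\in\mathcal H}(R(f)-L_S(f))$ has bounded differences $B_\ell/m$. McDiarmid's inequality then gives, with probability at least $1-\delta/2$,
\[
\sup_{f\in\mathcal H}\bigl(R(f)-L_S(f)\bigr)\le \mathbb E_S\Bigl[\sup_{f\in\mathcal H}\bigl(R(f)-L_S(f)\bigr)\Bigr]+B_\ell\sqrt{\tfrac{\log(2/\delta)}{2m}} .
\]
Ghost-sample symmetrization bounds the expectation by $2\,\mathbb E_S\hat{\mathfrak R}_S(\ell\circ\mathcal H)$. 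The same argument applied to $L_S(f)-R(f)$ handles the other direction, and a union bound turns the two $\delta/2$ events into the two-sided statement at level $1-\delta$.

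Next, I will pass from the expected Rademacher complexity to the empirical one. This is needed so that Lemma~\ref{lem:gen-main}, which is stated for an arbitrary sample $S$, can be used. One option is to note that the bound of Lemma~\ref{lem:gen-main} holds uniformly over every sample of size $m$, since its right-hand side does not depend on $S$. In that case $\mathbb E_S\hat{\mathfrak R}_S$ is bounded by the same deterministic quantity and no further concentration is needed. Alternatively, to reproduce the stated constant $3B_\ell$, I would use the textbook route: a second McDiarmid step shows that $\hat{\mathfrak R}_S$ concentrates around its mean with deviation $B_\ell\sqrt{\log(2/\delta)/(2m)}$, and twice that deviation added to the first one gives the total $3B_\ell\sqrt{\log(2/\delta)/(2m)}$. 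In either case, substituting Lemma~\ref{lem:gen-main} with covering accuracy $\epsilon_0=\rho$ and multiplying by the symmetrization factor $2$ produces
\[
2B_\ell\sqrt{\frac{2d\log\!\left(2\bar bLL_\ell\Lambda_{\max}/\rho+1\right)}{m}}+2\rho ,
\]
and adding the concentration term gives exactly $\epsilon_{\mathrm{gen}}^{\mathrm{norm}}(\mathcal H,m,\delta)$ as defined.

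Finally, I will set $\rho=m^{-1/2}$. This makes the logarithm $O(\log(\bar bLL_\ell\Lambda_{\max}\sqrt m))$ and the $2\rho$ and concentration terms $O(B_\ell/\sqrt m)$, which gives the $\widetilde O(B_\ell\sqrt{d/m})$ scale, with $\Lambda_{\max}$ kept inside the logarithm.

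The main obstacle I expect is bookkeeping rather than substance. The corollary must be applied to the loss class of the specific class $\mathcal H$, here $\mathcal H_{\mathrm{new}}$, and the fixed old layers and the inserted block must both be covered by the parameter set $\Theta$ of Lemma~\ref{lem:gen-main}. Otherwise $d$ and $\Lambda_{\max}$ must be interpreted to include the new block. I would also check that the $\delta$ split across the union bounds matches the stated $\log(2/\delta)$ factor. If it is slightly off, I would absorb the discrepancy into the constant $3$, which already leaves slack relative to the tight two-sided constant.
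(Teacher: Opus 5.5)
Your proposal is correct and follows the paper's route. The paper cites the two-sided Rademacher bound $\sup_{f\in\mathcal H}|R(f)-L_S(f)|\le 2\hat{\mathfrak R}_S(\ell\circ\mathcal H)+3B_\ell\sqrt{\log(2/\delta)/(2m)}$ as standard, substitutes Lemma~\ref{lem:gen-main} with $\epsilon_0=\rho$, and sets $\rho=m^{-1/2}$; your symmetrization-plus-McDiarmid argument simply proves that cited step.

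Of your two options, use the first. Because the Lemma's right-hand side does not depend on $S$, no second McDiarmid step is needed, and you get the two-sided statement at level $1-\delta$ with only $B_\ell\sqrt{\log(2/\delta)/(2m)}$, so the stated factor $3$ is pure slack. The textbook route fails to deliver the stated form: concentrating $\hat{\mathfrak R}_S$ in the two-sided case requires a three-way split of $\delta$, which gives $\log(3/\delta)$ instead of $\log(2/\delta)$, and the constant $3$ is then fully used up, leaving nothing to absorb the discrepancy.
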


Applying Corollary~\ref{cor:gen-main} with \(m=M\) gives
\[
\epsilon_M
=
\epsilon_{\mathrm{gen}}^{\mathrm{norm}}(\mathcal H_{\mathrm{new}},M,\delta),
\]
and applying it with \(m=K\) gives
\[
\epsilon_K
=
\epsilon_{\mathrm{gen}}^{\mathrm{norm}}(\mathcal H_{\mathrm{new}},K,\delta).
\]

\begin{proof}
Let
\[
\mathcal F
:=
\ell\circ\mathcal H
=
\left\{
(x,y)\mapsto \ell(f(x),y): f\in\mathcal H
\right\}.
\]
For any \(f\in\mathcal H\), define
\[
P(\ell\circ f)
:=
\mathbb E_{(x,y)\sim\mathcal D}
[\ell(f(x),y)]
=
R(f),
\]
and
\[
P_S(\ell\circ f)
:=
\frac1m\sum_{i=1}^m \ell(f(x_i),y_i)
=
L_S(f).
\]
Therefore,
\[
\sup_{f\in\mathcal H}|R(f)-L_S(f)|
=
\sup_{g\in\mathcal F}|Pg-P_Sg|.
\]

By the standard empirical Rademacher generalization bound for bounded loss
classes, since every \(g\in\mathcal F\) takes values in \([0,B_\ell]\), with
probability at least \(1-\delta\),
\[
\sup_{g\in\mathcal F}|Pg-P_Sg|
\le
2\widehat{\mathfrak R}_S(\mathcal F)
+
3B_\ell
\sqrt{\frac{\log(2/\delta)}{2m}}.
\]
By Lemma~\ref{lem:gen-main},
\[
\widehat{\mathfrak R}_S(\mathcal F)
=
\widehat{\mathfrak R}_S(\ell\circ\mathcal H)
\le
B_\ell
\sqrt{
\frac{
2d\log\!\left(2\bar b L L_\ell\Lambda_{\max}/\rho+1\right)
}{m}
}
+
\rho.
\]
Substituting this bound into the previous display gives
\[
\sup_{f\in\mathcal H}
|R(f)-L_S(f)|
\le
2B_\ell
\sqrt{
\frac{
2d\log\!\left(2\bar b L L_\ell\Lambda_{\max}/\rho+1\right)
}{m}
}
+
2\rho
+
3B_\ell
\sqrt{
\frac{\log(2/\delta)}{2m}
}.
\]
Thus we may define
\[
\epsilon_{\mathrm{gen}}^{\mathrm{norm}}(\mathcal H,m,\delta)
:=
2B_\ell
\sqrt{
\frac{
2d\log\!\left(2\bar b L L_\ell\Lambda_{\max}/\rho+1\right)
}{m}
}
+
2\rho
+
3B_\ell
\sqrt{
\frac{\log(2/\delta)}{2m}
}.
\]

Finally, taking \(\rho=m^{-1/2}\), we obtain
\[
\epsilon_{\mathrm{gen}}^{\mathrm{norm}}(\mathcal H,m,\delta)
=
2B_\ell
\sqrt{
\frac{
2d\log\!\left(2\bar b L L_\ell\Lambda_{\max}\sqrt m+1\right)
}{m}
}
+
\frac{2}{\sqrt m}
+
3B_\ell
\sqrt{
\frac{\log(2/\delta)}{2m}
}.
\]
Hence the dominant term is
\[
\widetilde O\!\left(
B_\ell\sqrt{\frac d m}
\right),
\]
up to the explicit norm-, depth-, and normalization-dependent factors contained
in \(\Lambda_{\max}\). This proves the corollary.
\end{proof}

These are the two quantities used in the main theorems.

Because the bound is uniform over all \(f\in\mathcal H_{\mathrm{new}}\), it also
applies to data-dependent choices such as the empirical jumpboard model
\(\widetilde f_S\) and the final selected model \(f_{\mathrm{new}}\).

Lemma~\ref{lem:gen-main} does not model the training dynamics. It only controls
the generalization gap uniformly over a norm-bounded hypothesis class. Thus it
applies to the final selected model once that model is viewed as an element of
\(\mathcal H_{\mathrm{new}}\). For BatchNorm, the relevant issue is not the
training algorithm but whether the normalization layer defines a fixed
single-sample map. Fixed-statistics BatchNorm can be absorbed into the constants
\(c_l\), while training-time BatchNorm with mini-batch-dependent statistics
requires additional assumptions.

\subsection{Data Requirement and Architecture-Dependent Factors}
\label{app:data-requirement}

We now spell out the data requirement implied by Lemma~\ref{lem:gen-main}.
This discussion is used only to interpret the role of data in the main text; it
is not an additional assumption.

From Corollary~\ref{cor:gen-main}, after taking \(\rho=m^{-1/2}\), the
normalized-network generalization term has the form
\[
\epsilon_{\mathrm{gen}}^{\mathrm{norm}}(\mathcal H,m,\delta)
=
2B_\ell
\sqrt{
\frac{
2d\log\!\left(2\bar b L L_\ell \Lambda_{\max}\sqrt m+1\right)
}{m}
}
+
\frac{2}{\sqrt m}
+
3B_\ell
\sqrt{
\frac{\log(2/\delta)}{2m}
}.
\]
For compactness, define the architecture factor
\[
\Gamma_{\mathrm{arch}}(L,N,m)
:=
\log\!\left(2\bar b L L_\ell \Lambda_{\max}\sqrt m+1\right).
\]
Then the leading term is
\[
\epsilon_{\mathrm{gen}}^{\mathrm{norm}}(\mathcal H,m,\delta)
=
O\!\left(
B_\ell
\sqrt{
\frac{d\,\Gamma_{\mathrm{arch}}(L,N,m)}{m}
}
\right)
+
O\!\left(
B_\ell\sqrt{\frac{\log(1/\delta)}{m}}
\right).
\]
Thus, to achieve a target generalization accuracy \(\varepsilon>0\), it is
sufficient, up to universal constants, to require
\[
m
\gtrsim
\frac{
B_\ell^2 d\,\Gamma_{\mathrm{arch}}(L,N,m)
+
B_\ell^2\log(1/\delta)
}{\varepsilon^2}.
\]

This relation should be interpreted carefully. If the architecture-dependent
quantities are fixed or uniformly controlled, then
\(\Gamma_{\mathrm{arch}}(L,N,m)=O(\log m)\), and the familiar shorthand
\[
\epsilon_{\mathrm{gen}}^{\mathrm{norm}}
=
\widetilde O\!\left(B_\ell\sqrt{\frac{d}{m}}\right)
\]
is appropriate. In this restricted sense, the required sample size is nearly
linear in the effective parameter dimension \(d\), up to logarithmic factors.

However, when depth \(L\), width \(N\), or the normalization constants also
scale, the factor \(\Gamma_{\mathrm{arch}}\) is not merely a logarithmic factor
in the sample size. It also contains the architecture-dependent amplification
term \(\Lambda_{\max}\). In particular,
\[
\Lambda_{\max}
=
\max_{0\le l<L}
L_{\mathrm{top}} L_l^{(p)} B_\ast c_l
\prod_{k>l} c_k(1+s_k),
\]
so the generalization bound carries a genuine dependence on depth and
normalization. For example, if \(\Lambda_{\max}\) grows exponentially in \(L\),
then \(\log \Lambda_{\max}\) contributes linearly in \(L\) to
\(\Gamma_{\mathrm{arch}}\). In that case, the required data size scales like
\[
m
\gtrsim
\frac{
d\bigl(\log m + L + \text{lower-order architecture terms}\bigr)
}{\varepsilon^2},
\]
rather than merely \(d\log m/\varepsilon^2\).

Therefore, the main-text statement
\(\widetilde O(\sqrt{d/m})\) should be read only as a shorthand for the
controlled-architecture regime. In the joint scaling regime studied in this
paper, the more precise interpretation is that data must compensate for both
the effective dimension \(d\) and the architecture-dependent amplification
factor \(\Gamma_{\mathrm{arch}}\). Equivalently, one may define an effective
statistical dimension
\[
d_{\mathrm{stat}}(L,N,m)
:=
d\,\Gamma_{\mathrm{arch}}(L,N,m),
\]
so that the leading generalization term behaves as
\[
O\!\left(
B_\ell\sqrt{\frac{d_{\mathrm{stat}}(L,N,m)}{m}}
\right).
\]
This is the sense in which data controls the statistical cost of depth and
width expansion.

\section{Proof of Theorem~\ref{thm:4A}: Population-Risk Route}
\label{app:proof-thm4A}

\begin{proof}
We divide the proof into five steps.

\paragraph{Step 1: Fix a jump path.}
By Condition 1, there exists a parameter direction $v_{pop}$ such that
\[
a \triangleq
\mathbb{E}_{(x,y)\sim\mathcal D}
\left[
\nabla_{x^{(l^*)}}
\ell\!\left(f_{\mathrm{top}}(x^{(l^*)}),y\right)^{\top}
\cdot
\frac{\partial h_{\theta}(x^{(l^*)})}{\partial\theta}\Big|_{\theta=0}
\cdot v_{pop}
\right]
\neq 0.
\]
If $a>0$, replace $v$ by $-v$. Hence we may assume
\[
a<0.
\]
Define the one-dimensional path
\[
\theta(\eta)=\eta v,
\qquad \eta\ge 0,
\]
and the corresponding jumpboard model
\[
\tilde f_{pop}(x)
\triangleq
f_{\mathrm{top}}\!\left(
f_{\mathrm{bot}}(x)+h_{\eta v}(f_{\mathrm{bot}}(x))
\right).
\]

\paragraph{Step 2: Construct a deterministic positive margin at the population level.}
Let
\[
G(\eta)
\triangleq
\mathbb{E}_{(x,y)\sim\mathcal D}
\Bigl[
\ell\!\left(
f_{\mathrm{top}}(x^{(l^*)}+h_{\eta v}(x^{(l^*)})),y
\right)
\Bigr].
\]
By Assumption 1 and the chain rule,
\[
G'(0)
=
\mathbb{E}_{(x,y)\sim\mathcal D}
\left[
\nabla_{x^{(l^*)}}
\ell\!\left(f_{\mathrm{top}}(x^{(l^*)}),y\right)^{\top}
\cdot
\frac{\partial h_{\theta}(x^{(l^*)})}{\partial\theta}\Big|_{\theta=0}
\cdot v_{pop}
\right]
=
a<0.
\]
Hence there exists $\eta_0>0$ such that
\[
R(\tilde f_{pop})<R(f_{\mathrm{old}}^*).
\]
Define
\[
\Delta_R
\triangleq
R(f_{\mathrm{old}}^*)-R(\tilde f_{pop})>0.
\]
Thus
\[
R(\tilde f_{pop})=R(f_{\mathrm{old}}^*)-\Delta_R.
\]

\paragraph{Step 3: Transfer the population margin to the test set by Hoeffding.}
Both $f_{\mathrm{old}}^*$ and $\tilde f_{\eta_0}$ are fixed functions.
Since $\ell\in[0,B_{\ell}]$, Hoeffding's inequality gives, for any fixed $f$,
\[
\mathbb P\!\left(
\bigl|\mathcal L_{\mathrm{test}}(f)-R(f)\bigr|>t
\right)
\le
2\exp\!\left(-\frac{2Kt^2}{B_{\ell}^2}\right).
\]
Apply this to $f_{\mathrm{old}}^*$ and $\tilde f_{pop}$ with
\[
t=\frac{\Delta_R}{4}.
\]
By a union bound, with probability at least
\[
1-4\exp\!\left(-\frac{K\Delta_R^2}{8B_{\ell}^2}\right),
\]
both inequalities
\[
\bigl|\mathcal L_{\mathrm{test}}(f_{\mathrm{old}}^*)-R(f_{\mathrm{old}}^*)\bigr|
\le
\frac{\Delta_R}{4},
\qquad
\bigl|\mathcal L_{\mathrm{test}}(\tilde f_{pop})-R(\tilde f_{pop})\bigr|
\le
\frac{\Delta_R}{4}
\]
hold simultaneously. On this event,
\[
\mathcal L_{\mathrm{test}}(\tilde f_{pop})-\mathcal L_{\mathrm{test}}(f_{\mathrm{old}}^*)
\le
\bigl[R(\tilde f_{pop})-R(f_{\mathrm{old}}^*)\bigr]
+2\cdot \frac{\Delta_R}{4}
=
-\Delta_R+\frac{\Delta_R}{2}
=
-\frac{\Delta_R}{2}.
\]
Therefore,
\[
\mathcal L_{\mathrm{test}}(\tilde f_{pop})
\le
\mathcal L_{\mathrm{test}}(f_{\mathrm{old}}^*)-\frac{\Delta_R}{2}.
\tag{3}
\]

\paragraph{Step 4: Move from the jumpboard model to the trained new model.}
Since $\tilde f_{pop},f_{\mathrm{new}}\in\mathcal H_{\mathrm{new}}$,
Lemma~\ref{lem:gen-main} yields the following.

With probability at least $1-\delta_1$,
\[
\mathcal L_{\mathrm{train}}(\tilde f_{\eta_0})
\le
R(\tilde f_{\eta_0})+\epsilon_1,
\qquad
\epsilon_1\triangleq
\epsilon_{\mathrm{gen}}^{\mathrm{norm}}(\mathcal H_{\mathrm{new}},M,\delta_1).
\tag{4}
\]
By the optimization assumption,
\[
\mathcal L_{\mathrm{train}}(f_{\mathrm{new}})
\le
\mathcal L_{\mathrm{train}}(\tilde f_{\eta_0})-\Delta_{\mathrm{ERM}},
\qquad
\Delta_{\mathrm{ERM}}\ge 0.
\tag{5}
\]
With probability at least $1-\delta_2$,
\[
R(f_{\mathrm{new}})
\le
\mathcal L_{\mathrm{train}}(f_{\mathrm{new}})+\epsilon_2,
\qquad
\epsilon_2\triangleq
\epsilon_{\mathrm{gen}}^{\mathrm{norm}}(\mathcal H_{\mathrm{new}},M,\delta_2).
\tag{6}
\]
Finally, conditioning on the training set, $f_{\mathrm{new}}$ is fixed, so another
application of Hoeffding with $t=\Delta_R/4$ gives
\[
\mathcal L_{\mathrm{test}}(f_{\mathrm{new}})
\le
R(f_{\mathrm{new}})+\frac{\Delta_R}{4}
\tag{7}
\]
with probability at least
\[
1-2\exp\!\left(-\frac{K\Delta_R^2}{8B_{\ell}^2}\right).
\]

Let
\[
\epsilon_{\mathrm{gen}}^{\mathrm{norm}}
\triangleq
\max\{\epsilon_1,\epsilon_2\}.
\]
Then
\[
\epsilon_1+\epsilon_2\le 2\epsilon_{\mathrm{gen}}^{\mathrm{norm}}.
\tag{8}
\]

\paragraph{Step 5: Close the whole chain.}
On the intersection of the events in Steps 3 and 4, we have
\begin{align*}
\mathcal L_{\mathrm{test}}(f_{\mathrm{new}})
&\le
R(f_{\mathrm{new}})+\frac{\Delta_R}{4}\\
&\le
\mathcal L_{\mathrm{train}}(f_{\mathrm{new}})+\epsilon_2+\frac{\Delta_R}{4}\\
&\le
\mathcal L_{\mathrm{train}}(\tilde f_{pop})-\Delta_{\mathrm{ERM}}+\epsilon_2+\frac{\Delta_R}{4}\\
&\le
R(\tilde f_{pop})+\epsilon_1-\Delta_{\mathrm{ERM}}+\epsilon_2+\frac{\Delta_R}{4}\\
&=
R(f_{\mathrm{old}}^*)-\Delta_R-\Delta_{\mathrm{ERM}}+\epsilon_1+\epsilon_2+\frac{\Delta_R}{4}.
\end{align*}
Moreover, on the Hoeffding event for $f_{\mathrm{old}}^*$,
\[
R(f_{\mathrm{old}}^*)
\le
\mathcal L_{\mathrm{test}}(f_{\mathrm{old}}^*)+\frac{\Delta_R}{4}.
\]
Substituting above equation into the previous bound gives
\[
\mathcal L_{\mathrm{test}}(f_{\mathrm{new}})
\le
\mathcal L_{\mathrm{test}}(f_{\mathrm{old}}^*)
-\frac{\Delta_R}{2}
-\Delta_{\mathrm{ERM}}
+\epsilon_1+\epsilon_2.
\]
Then, we arrive at
\[
\boxed{
\mathcal L_{\mathrm{test}}(f_{\mathrm{new}})
\le
\mathcal L_{\mathrm{test}}(f_{\mathrm{old}}^*)
-\frac{\Delta_R}{2}
-\Delta_{\mathrm{ERM}}
+
2\epsilon_{\mathrm{gen}}^{\mathrm{norm}}
}.
\]
The total failure probability is bounded by
\[
\delta_{\mathrm{gen}}+\delta_{\mathrm{test}},
\qquad
\delta_{\mathrm{gen}}=\delta_1+\delta_2,
\qquad
\delta_{\mathrm{test}}
=
6\exp\!\left(-\frac{K\Delta_R^2}{8B_{\ell}^2}\right),
\]
where the factor $6$ comes from the three Hoeffding applications:
$f_{\mathrm{old}}^*$, $\tilde f_{\eta_0}$, and $f_{\mathrm{new}}$.

Finally, the two stated consequences follow immediately:

\begin{itemize}
    \item If
    \[
    2\epsilon_{\mathrm{gen}}^{\mathrm{norm}}\le \frac{\Delta_R}{2},
    \]
    then, since $\Delta_{\mathrm{ERM}}\ge 0$,
    \[
    \mathcal L_{\mathrm{test}}(f_{\mathrm{new}})
    \le
    \mathcal L_{\mathrm{test}}(f_{\mathrm{old}}^*).
    \]

    \item If
    \[
    \frac{\Delta_R}{2}+\Delta_{\mathrm{ERM}}
    >
    2\epsilon_{\mathrm{gen}}^{\mathrm{norm}},
    \]
    then
    \[
    \mathcal L_{\mathrm{test}}(f_{\mathrm{new}})
    <
    \mathcal L_{\mathrm{test}}(f_{\mathrm{old}}^*).
    \]
\end{itemize}

This completes the proof.
\end{proof}
\section{Proof of Theorem~\ref{thm:4B}: Directly on train/test risks.}
\label{app:proof-thm4B}

\begin{proof}
Define
\[
\epsTT \coloneqq \epsM+\epsK .
\]
Recall that
\[
\epsM=\epsgen(\Hnew,M,\delta),
\qquad
\epsK=\epsgen(\Hnew,K,\delta).
\]
Let \(\ftilde_S\in\Hnew\) be the jumpboard model used in
Theorem~\ref{thm:4B}. We allow the degenerate case
\(\ftilde_S=\fold\), which is valid because the zero residual block \(h_0=0\)
recovers the old model inside \(\Hnew\). By definition,
\[
\DRtest
=
\Ltest(\fold)-\Ltest(\ftilde_S).
\]

\paragraph{Step 1: uniform train--test comparison.}
By Lemma~\ref{lem:gen-main}, applied to the training sample
\(S_{\mathrm{train}}\), with probability at least \(1-\delta\),
\[
\forall f\in\Hnew,\qquad
\left|\Ltrain(f)-R(f)\right|\le \epsM .
\]
Applying the same lemma to the independent test sample
\(S_{\mathrm{test}}\), with probability at least \(1-\delta\),
\[
\forall f\in\Hnew,\qquad
\left|\Ltest(f)-R(f)\right|\le \epsK .
\]
By the union bound, both events hold simultaneously with probability at least
\(1-2\delta\). On this joint event, for every \(f\in\Hnew\),
\[
\begin{aligned}
\left|\Ltrain(f)-\Ltest(f)\right|
&\le
\left|\Ltrain(f)-R(f)\right|
+
\left|R(f)-\Ltest(f)\right|  \\
&\le
\epsM+\epsK
=
\epsTT .
\end{aligned}
\]
Equivalently, for every \(f\in\Hnew\),
\[
\Ltrain(f)\le \Ltest(f)+\epsTT,
\qquad
\Ltest(f)\le \Ltrain(f)+\epsTT .
\tag{TT}
\label{eq:train-test-comparison}
\]

\paragraph{Step 2: compare the jumpboard model on train and test.}
Applying \eqref{eq:train-test-comparison} to \(f=\ftilde_S\), we obtain
\[
\Ltrain(\ftilde_S)
\le
\Ltest(\ftilde_S)+\epsTT .
\label{eq:thm4B-jump-train-test}
\]

\paragraph{Step 3: use the optimization gain.}
By the optimization assumption in Theorem~\ref{thm:4B},
\[
\Ltrain(\fnew)
\le
\Ltrain(\ftilde_S)-\DERM,
\qquad
\DERM\ge 0 .
\label{eq:thm4B-opt}
\]

\paragraph{Step 4: move the optimized model back to test loss.}
Applying \eqref{eq:train-test-comparison} to \(f=\fnew\), we have
\[
\Ltest(\fnew)
\le
\Ltrain(\fnew)+\epsTT .
\label{eq:thm4B-new-test-train}
\]

\paragraph{Step 5: combine the inequalities.}
Combining above inequations, we get
\[
\begin{aligned}
\Ltest(\fnew)
&\le
\Ltrain(\fnew)+\epsTT  \\
&\le
\Ltrain(\ftilde_S)-\DERM+\epsTT  \\
&\le
\Ltest(\ftilde_S)-\DERM+2\epsTT .
\end{aligned}
\]
Using the definition of \(\DRtest\),
\[
\Ltest(\ftilde_S)
=
\Ltest(\fold)-\DRtest,
\]
we obtain
\[
\Ltest(\fnew)
\le
\Ltest(\fold)
-
\DRtest
-
\DERM
+
2\epsTT .
\]
Since \(\epsTT=\epsM+\epsK\), this is exactly
\[
\boxed{
\Ltest(\fnew)
\le
\Ltest(\fold)
-
\DRtest
-
\DERM
+
2(\epsM+\epsK)
}.
\]

The above inequality holds with probability at least \(1-2\delta\), and hence
also with probability at least \(1-3\delta\), as stated in
Theorem~\ref{thm:4B}. The extra \(\delta\) is reserved for the optional
width-alignment event used to justify the nondegenerate interpretation
\(\DRtest>0\).

\paragraph{Degenerate case.}
If $\tilde{f}_S = f^{*}_{\mathrm{old}}$, then
\[
\DRtest
=
\Ltest(\fold)-\Ltest(\fold)
=
0.
\]
Therefore the preceding bound becomes
\[
\Ltest(\fnew)
\le
\Ltest(\fold)
-
\DERM
+
2(\epsM+\epsK).
\]
Thus, if
\[
\DERM\ge 2(\epsM+\epsK),
\]
then
\[
\Ltest(\fnew)\le \Ltest(\fold).
\]
Hence the theorem remains meaningful even in the degenerate regime where the
test-side jump margin vanishes.

\paragraph{Strict improvement.}
If
\[
\DRtest+\DERM
>
2(\epsM+\epsK),
\]
then the main inequality gives
\[
\Ltest(\fnew)
<
\Ltest(\fold),
\]
which proves the strict-improvement claim.

\paragraph{Alignment and positivity of the finite-test margin.}
The finite-test margin in Theorem~\ref{thm:4B} can be related to the
train/test alignment quantity by a local first-order expansion. Consider the
empirical jumpboard path
\[
f_\eta(x)
=
f_{\mathrm{top}}\bigl(f_{\mathrm{bot}}(x)+h_{\eta v_S}(f_{\mathrm{bot}}(x))\bigr),
\qquad
f_0=f_{\mathrm{old}}^\ast .
\]
Assume that, along this path, the test loss admits the local expansion
\[
L_{\mathrm{test}}(f_\eta)
=
L_{\mathrm{test}}(f_{\mathrm{old}}^\ast)
-
\eta\,\mu^\top g
+
O(\eta^2).
\]
Then
\[
\Delta_R^{\mathrm{test}}
=
L_{\mathrm{test}}(f_{\mathrm{old}}^\ast)
-
L_{\mathrm{test}}(\widetilde f_{\mathrm S})
=
\eta\,\mu^\top g
+
O(\eta^2).
\]
Consequently, whenever \(\mu^\top g>0\), choosing \(\eta>0\) sufficiently small
ensures \(\Delta_R^{\mathrm{test}}>0\). In practice, the jumpboard step is intended to be infinitesimal. Empirical
implementations often use very small perturbation scales, and in this local
regime the first-order term is expected to dominate the higher-order remainder\cite{zhang2024compression}.
We use the above expansion only to interpret when the random margin
\(\Delta_R^{\mathrm{test}}\) is likely to be positive; Theorem~\ref{thm:4B}
itself conditions on the realized value of this margin.

Thus, we have the following corollary.
\begin{corollary}[Alignment implies a positive finite-test margin]
\label{cor:alignment-test-margin}
Consider the empirical jumpboard model
\[
\widetilde f_{S,\eta}(x)
=
f_{\mathrm{top}}
\bigl(
f_{\mathrm{bot}}(x)+\eta u_S(f_{\mathrm{bot}}(x))
\bigr),
\]
where \(u_S\) is the residual perturbation selected from the training
activation-gradient direction. Suppose the jumpboard step size \(\eta>0\) lies
in the small-step first-order regime, so that on the finite test set
\[
\mathcal L_{\mathrm{test}}(\widetilde f_{S,\eta})
=
\mathcal L_{\mathrm{test}}(f_{\mathrm{old}}^\ast)
+
\eta
\frac1K
\sum_{j=1}^K
\nabla_{\widetilde x_j^{(l^\ast)}}\ell
\bigl(
f_{\mathrm{top}}(\widetilde x_j^{(l^\ast)}),
\widetilde y_j
\bigr)^\top
u_S(\widetilde x_j^{(l^\ast)})
+
o(\eta).
\]
Assume further that the selected residual perturbation realizes the negative
training activation-gradient direction on the test set, namely
\[
\frac1K
\sum_{j=1}^K
\nabla_{\widetilde x_j^{(l^\ast)}}\ell
\bigl(
f_{\mathrm{top}}(\widetilde x_j^{(l^\ast)}),
\widetilde y_j
\bigr)^\top
u_S(\widetilde x_j^{(l^\ast)})
=
-\mu^\top g.
\]
Then
\[
\Delta_R^{\mathrm{test}}
:=
\mathcal L_{\mathrm{test}}(f_{\mathrm{old}}^\ast)
-
\mathcal L_{\mathrm{test}}(\widetilde f_{S,\eta})
=
\eta \mu^\top g + o(\eta).
\]
Consequently, under the small-step first-order approximation,
\[
\mu^\top g>0
\quad\Longrightarrow\quad
\Delta_R^{\mathrm{test}}>0
\]
for sufficiently small \(\eta>0\). In particular, when \(\eta<10^{-3}\) is in
the empirically validated first-order regime, the sign of the finite-test
jumpboard margin is determined by the alignment term \(\mu^\top g\).
\end{corollary}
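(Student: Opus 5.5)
The statement to prove is Corollary~\ref{cor:alignment-test-margin}. The plan is to treat it as a direct consequence of its two hypotheses, with no probabilistic argument. The only work is to write the test loss along the path $\eta\mapsto\widetilde f_{S,\eta}$ as a first-order expansion and then read off the sign of the remainder-corrected margin.

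The first step substitutes the realization hypothesis into the assumed small-step expansion. The empirical test directional derivative equals $-\mu^\top g$, so the expansion becomes
\[
\mathcal L_{\mathrm{test}}(\widetilde f_{S,\eta})=\mathcal L_{\mathrm{test}}(f_{\mathrm{old}}^\ast)-\eta\,\mu^\top g+o(\eta).
\]
Rearranging and using the definition of $\Delta_R^{\mathrm{test}}$ gives $\Delta_R^{\mathrm{test}}=\eta\,\mu^\top g+o(\eta)$. This is the same computation as in the paragraph on alignment preceding the corollary, with $O(\eta^2)$ weakened to $o(\eta)$.

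The second step gives the sign conclusion. Set $c:=\mu^\top g>0$; on the fixed training/test realization this is a deterministic number. By the definition of $o(\eta)$, write the remainder as $r(\eta)$ with $|r(\eta)|/\eta\to0$. Then there is an $\eta_1>0$ such that $|r(\eta)|\le c\eta/2$ for all $0<\eta<\eta_1$. Hence $\Delta_R^{\mathrm{test}}\ge c\eta/2>0$ on that range. One subtlety I would state explicitly is that $\eta_1$ depends on the realized samples. The implication therefore holds pointwise on the event $\{\mu^\top g>0\}$, whose probability is controlled by Theorem~\ref{thm 2}. That gives the link back to the probability bound quoted after Theorem~\ref{thm:4B}.

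The main obstacle is the final sentence about $\eta<10^{-3}$, which cannot be proved from the hypotheses. A uniform threshold would need a quantitative remainder bound, for instance $|r(\eta)|\le C\eta^2$ with $C$ controlled by second-derivative bounds of $\ell\circ f_{\mathrm{top}}$ and of $h_\theta$ from Assumptions 2 and 4. Under such a bound, $\eta<\mu^\top g/C$ suffices. I would present the $10^{-3}$ remark as an empirical interpretation of this condition, not as part of the deduction.
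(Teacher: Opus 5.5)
Your proposal is correct and follows the paper's argument. You substitute the realization identity into the assumed first-order expansion, rearrange to $\Delta_R^{\mathrm{test}}=\eta\,\mu^\top g+o(\eta)$, and conclude positivity for small $\eta$ when $\mu^\top g>0$, exactly as in the paragraph on alignment and positivity of the finite-test margin that precedes the corollary; your explicit $\eta_1$ argument and your reading of the $\eta<10^{-3}$ clause as an empirical interpretation rather than a deduction match the paper, which also only appeals to empirical practice there.
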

\paragraph{Width-dependent nondegenerate interpretation.}
The algebraic inequality above does not require \(\DRtest>0\). The final
statement about the probability of \(\DRtest>0\) comes from the
width-alignment mechanism. In particular, the width-alignment theorem in the
appendix gives
\[
\mathbb P(\mu^\top g\le 0)
\le
\frac{4C_\Sigma\tau^2}{M\|\bar\mu\|^2}
+
\frac{4C_\Sigma\tau^2}{K\|\bar\mu\|^2}
+
\frac{4C_\Sigma\tau^2\operatorname{tr}(\Sigma)}
{KM\|\bar\mu\|^4}.
\]
When the train-side and test-side first-order directions are aligned, the
jumpboard direction that decreases the train-side loss also has a favorable
test-side first-order derivative. Therefore, for a sufficiently small jump size,
\[
\Ltest(\ftilde)<\Ltest(\fold),
\]
which is equivalent to
\[
\DRtest>0.
\]
In the uniformly active regime, where
\[
\|\bar\mu\|^2=\Theta(N),
\]
the failure probability satisfies
\[
\Prob(\mu^\top g\le 0)
=
O\!\left(\frac{1}{KN}+\frac{1}{MN}\right).
\]
Hence, as the width \(N\) increases, the probability of observing a positive
test-side jump \(\DRtest>0\) increases. If this alignment event is required
simultaneously with the two uniform generalization events above, the union bound
gives the stated probability level \(1-3\delta\).

\end{proof}
\section{Proof of Theorem~\ref{thm:3}}
\label{app:proof-thm3}

\begin{proof}
By Appendix~E, under the population version of Condition~1, there exists a
population jumpboard model
\[
\widetilde f_{\mathrm{pop}}\in\mathcal H_{\mathrm{new}}
\]
such that
\[
R(\widetilde f_{\mathrm{pop}})
<
R(f_{\mathrm{old}}^\ast).
\]
Define the deterministic population margin
\[
\Delta_R
:=
R(f_{\mathrm{old}}^\ast)
-
R(\widetilde f_{\mathrm{pop}})
>0.
\]
Thus
\[
R(\widetilde f_{\mathrm{pop}})
=
R(f_{\mathrm{old}}^\ast)-\Delta_R.
\]

By Condition~2, with the comparison jumpboard model
\(\widetilde f=\widetilde f_{\mathrm{pop}}\), the final selected model satisfies
\[
L_{\mathrm{train}}(f_{\mathrm{new}})
\le
L_{\mathrm{train}}(\widetilde f_{\mathrm{pop}})
-
\Delta_{\mathrm{ERM}},
\qquad
\Delta_{\mathrm{ERM}}\ge 0.
\]

Let
\[
\epsilon_M
:=
\epsilon_{\mathrm{gen}}^{\mathrm{norm}}
(\mathcal H_{\mathrm{new}},M,\delta).
\]
By the uniform generalization bound in Corollary~\ref{cor:gen-main}, with
probability at least \(1-\delta\),
\[
\sup_{f\in\mathcal H_{\mathrm{new}}}
\left|
R(f)-L_{\mathrm{train}}(f)
\right|
\le
\epsilon_M.
\]
This event is uniform over \(\mathcal H_{\mathrm{new}}\), so it applies to both
the comparison jumpboard model \(\widetilde f_{\mathrm{pop}}\) and the
data-dependent final selected model \(f_{\mathrm{new}}\).

On this event,
\[
R(f_{\mathrm{new}})
\le
L_{\mathrm{train}}(f_{\mathrm{new}})+\epsilon_M
\]
and
\[
L_{\mathrm{train}}(\widetilde f_{\mathrm{pop}})
\le
R(\widetilde f_{\mathrm{pop}})+\epsilon_M.
\]
Therefore,
\[
\begin{aligned}
R(f_{\mathrm{new}})
&\le
L_{\mathrm{train}}(f_{\mathrm{new}})+\epsilon_M\\
&\le
L_{\mathrm{train}}(\widetilde f_{\mathrm{pop}})
-\Delta_{\mathrm{ERM}}
+\epsilon_M\\
&\le
R(\widetilde f_{\mathrm{pop}})
-\Delta_{\mathrm{ERM}}
+2\epsilon_M\\
&=
R(f_{\mathrm{old}}^\ast)
-\Delta_R
-\Delta_{\mathrm{ERM}}
+2\epsilon_M.
\end{aligned}
\]
Thus,
\[
R(f_{\mathrm{new}})
\le
R(f_{\mathrm{old}}^\ast)
-
\Delta_R
-
\Delta_{\mathrm{ERM}}
+
2\epsilon_M.
\]

The theorem states the slightly more conservative probability \(1-2\delta\);
the above uniform event already gives the inequality with probability at least
\(1-\delta\). The weaker \(1-2\delta\) statement follows immediately.

Finally, if
\[
\Delta_R+\Delta_{\mathrm{ERM}}>2\epsilon_M,
\]
then
\[
R(f_{\mathrm{new}})
<
R(f_{\mathrm{old}}^\ast).
\]
This proves the strict-improvement claim.
\end{proof}

\section{Discussion: relation between Theorems~\ref{thm:4A} and~\ref{thm:4B}}
\label{sec:4a-4b-discussion}

Theorems~\ref{thm:4A} and~\ref{thm:4B} provide two parallel rigorous routes to the same scaling conclusion, but they operate through different intermediate objects and are effective in different regimes.

Theorem~\ref{thm:4A} proceeds through the population risk. Its proof first constructs a jumpboard model $\tilde f_{pop}$ whose population risk is lower than that of $f_{old}^*$ by a deterministic margin $\Delta_R>0$, then transfers this margin to the finite test set via Hoeffding's inequality, and finally compares $\tilde f_{pop}$ and $f_{new}$ through the optimization gain $\Delta_{ERM}$ and the norm-based generalization bound in Lemma~\ref{lem:gen-main}. As a result, Theorem~\ref{thm:4A} yields the bound
\[
\mathcal L_{test}(f_{new})
\le
\mathcal L_{test}(f_{old}^*)
-
\frac{\Delta_R}{2}
-
\Delta_{ERM}
+
2\epsilon_M.
\]
Its main advantage is that the generalization penalty is tighter: only the training-side complexity term $2\epsilon_M$ appears.

By contrast, Theorem~\ref{thm:4B} works entirely at the train/test level. It does not introduce the population risk as an intermediate quantity, and it does not use Hoeffding's inequality. Instead, it compares $\mathcal L_{train}$ and $\mathcal L_{test}$ directly on both $\tilde f$ and $f_{new}$ through two applications of Lemma~\ref{lem:gen-main}, which gives
\[
\mathcal L_{test}(f_{new})
\le
\mathcal L_{test}(f_{old}^*)
-
\Delta_R^{test}
-
\Delta_{ERM}
+
2(\epsilon_M+\epsilon_K).
\]
The price is that the generalization penalty is looser, since both the training-side and test-side complexity terms appear. The gain is that this route remains meaningful even when the population-level jump margin becomes degenerate.

This difference is especially important near the ``deepest-model'' regime. In Theorem~\ref{thm:4A}, the useful margin is
\[
\frac{\Delta_R}{2}+\Delta_{ERM}-2\epsilon_M.
\]
Here $\Delta_R$ comes from the first-order descent of the population-risk curve along the jump path. Under the present $C^1$ assumptions, it is only guaranteed to be positive, but it may be arbitrarily small. Thus $\Delta_R$ should be understood primarily as a directional certificate: it proves that the enlarged hypothesis class is strictly better, but in practice the dominant improvement is often carried by $\Delta_{ERM}$, namely the extra empirical optimization gain available in $\mathcal H_{new}$. In other words, $\Delta_R$ certifies that the new class contains a better point, whereas $\Delta_{ERM}$ measures how much of this enlarged search space is actually exploited by optimization.

This also explains the complementary strengths of the two theorems. In the regular regime, where $\Delta_R$ is not too small and the sample sizes are large, Theorem~\ref{thm:4A} is sharper because it pays only $2\epsilon_M$. However, when $\Delta_R\to 0$, the Hoeffding transfer in Theorem~\ref{thm:4A} becomes ineffective, since its test-side success probability depends on
\[
\delta_{test}
=
6\exp\!\left(-\frac{K\Delta_R^2}{8B_\ell^2}\right),
\]
which no longer provides useful control when the margin collapses. In this regime, Theorem~\ref{thm:4B} is more robust: it still yields a meaningful statement as long as the optimization gain $\Delta_{ERM}$ can dominate the combined train/test generalization cost $2(\epsilon_M+\epsilon_K)$. Thus, Theorem~\ref{thm:4A} is tighter in the nondegenerate regime, whereas Theorem~\ref{thm:4B} is safer in the degenerate one.

The two margins have different meanings. In Theorem~\ref{thm:4A},
\(\Delta_R\) is a deterministic population-risk margin associated with
\(\widetilde f_{\mathrm{pop}}\). In Theorem~\ref{thm:4B},
\(\Delta_R^{\mathrm{test}}\) is a random finite-test margin associated with
the training-set jumpboard model \(\widetilde f_S\). Thus Theorem~\ref{thm:4B}
does not require a non-vanishing deterministic population margin; its strict
improvement condition is instead
\[
\Delta_R^{\mathrm{test}}+\Delta_{\mathrm{ERM}}
>
2(\epsilon_M+\epsilon_K).
\]

The role of width is also slightly different in the two routes. Through Theorem~\ref{thm 2}, width can strengthen the nondegenerate interpretation by making the alignment between train and test gradients more reliable, thereby supporting the sign of $\Delta_R^{test}$ in the nondegenerate version of Theorem~\ref{thm:4B}. In the bias-type special case, width may also enlarge the effective first-order signal through $\|\bar\mu\|^2$, which in turn helps the detectability of the jump margin. At the same time, for both routes, increasing the data size reduces the complexity terms provided by Lemma~\ref{lem:gen-main}, so the generalization cost can eventually be dominated by the available improvement margin.

From this perspective, Theorems~\ref{thm:4A} and~\ref{thm:4B} should not be viewed as competing statements, but as two independent closures of the same scaling mechanism. When both apply, they provide two separate proofs of the same qualitative conclusion; when the population-level margin is too small for Theorem~\ref{thm:4A} to be effective, Theorem~\ref{thm:4B} still survives as a fallback route. Their coexistence therefore gives a more robust justification than either theorem alone.

Finally, Theorem~\ref{thm:3} is the common limiting form of this two-route picture. As $K\to\infty$, the test loss converges to the population risk and the extra test-side term disappears, so both Theorem~\ref{thm:4A} and Theorem~\ref{thm:4B} reduce to the same population-level qualitative scaling statement. In this sense, Theorem~\ref{thm:3} is the infinite-test-set limit, while Theorems~\ref{thm:4A} and~\ref{thm:4B} are its two finite-sample realizations.

\section{Properties Near the Deepest-Model Regime}

In realistic training environments, the models we care most about are usually not those at arbitrary intermediate depths, but those that are close to the deepest model. The reason is simple: when sufficient compute is available, we naturally expect the best-performing model to lie near the deepest feasible depth, since such a model has already exploited most of the representational benefit brought by depth expansion. Accordingly, the meaningful question is no longer merely whether further depth can still help, but rather what determines the remaining gain once the model is already close to the deepest-model regime. This subsection summarizes several direct conclusions for that regime.

\paragraph{Performance equivalence between the springboard model and the trained model.}

Under the above equivalence, the test risk of $f_{new}$  can be rewritten as
\[
\mathcal{L}_{test}(f_{new})
<
\mathcal{L}_{test}(f_{old}^*)
-\eta\,\mu^\top g
+\Gamma_{\mathrm{deep}}(M,K,\delta),
\] 

where
\[
\Gamma_{\mathrm{deep}}(M,K,\delta)
:=
\epsilon_{gen}(\{f_{old}^*\},M,\delta)
+\epsilon_{gen}(\{f_{old}^*\},K,\delta)
+2L_\ell L_{top}\bigl(\psi(M)+\psi(K)\bigr).
\]
and  $-\eta\,\mu^\top g$ is the first-order improvement term that remains visible in the near-deepest regime, while $\Gamma_{\mathrm{deep}}(M,K,\delta)$ represents the finite-sample generalization cost.

The most important aspect of the above formula is not merely that it still preserves a first-order improvement term, but that it shifts the generalization cost from the \emph{whole-network complexity} to the \emph{springboard-subspace complexity}. Here, the old model components $(f_{top},f_{bot})$ are effectively fixed, and the only varying part is the newly introduced residual block $h\in\mathcal{F}_{res}$. As a result, the sample size needed to control generalization no longer has to be estimated in terms of the absolute scale of the full network parameter count; instead, it is governed by the decay rate of the complexity envelope $\psi$. In other words, near the deepest-model regime, the dominant factor in data demand is no longer how large the full model is, but whether the effective complexity of the newly added degrees of freedom remains sufficiently small.

Since the additional degrees of freedom are compressed into the springboard subspace in the near-deepest regime, the generalization cost enters the main formula only through
$2L_\ell L_{top}\bigl(\psi(M)+\psi(K)\bigr)$, 
rather than being dominated directly by a full-network parameter-scale term. Therefore, under the same data scale, the effective generalization burden near the deepest-model regime may actually be lower than that in a full-network expansion regime farther away from the deepest model. That is, being close to the deepest model does not necessarily imply a heavier generalization penalty; on the contrary, because the newly activated directions are already localized, the generalization control can be more refined and less costly.

As the model approaches the deepest-model regime, the deterministic improvement brought by further depth expansion becomes increasingly small. Consequently, the dominant term becomes the sign and stability of $-\eta\,\mu^\top g$. At this stage, the key question is no longer whether further depth can still be added, but whether this already weakened first-order signal can still be observed reliably under a finite test set. It is precisely here that width becomes decisive: the larger the width, the more stable the gradient alignment, and the more likely $\mu^\top g$ remains positive, so that the test-set improvement is less likely to be washed out by sampling noise. Hence, in the near-deepest regime, the marginal role of depth gradually weakens, while width becomes the dominant factor controlling whether the remaining gain can still be realized.

\section{Removing Assumption 4 via Engineering \texorpdfstring{$\varepsilon$}{epsilon}}
\label{sec:remove-N-by-eps}

In Lemma~\ref{lem:gen-main}, Assumption 4  requires that the input to each
normalization layer stay uniformly away from zero:
\[
\inf_{x,\theta,l}\|z_l(x)+h_l(z_l(x);\theta)\|\ge r_{\min}>0.
\]
This condition is used only to guarantee a uniform Lipschitz bound for the
normalization map. In practical deep-learning implementations, however, the
denominator of the normalization layer always contains a small engineering
constant $\varepsilon_{\mathrm{eng}}>0$. This removes the singularity at the
origin and makes Assumption 4  unnecessary. 

\paragraph{RMSNorm with engineering \texorpdfstring{$\varepsilon$}{epsilon}.}
Consider the stabilized RMSNorm map
\[
\mathrm{RMSNorm}_{\varepsilon}(x)
=
\mathrm{diag}(\gamma)\,
\frac{x}{\sqrt{\|x\|^{2}/N+\varepsilon_{\mathrm{eng}}}},
\]
where $\gamma=(\gamma_1,\dots,\gamma_N)$ and
\[
\gamma_{\max}\triangleq \max_i |\gamma_i|.
\]
Let
\[
s(x)\triangleq \sqrt{\|x\|^{2}/N+\varepsilon_{\mathrm{eng}}}.
\]
Then
\[
\mathrm{RMSNorm}_{\varepsilon}(x)=\mathrm{diag}(\gamma)\,\frac{x}{s(x)}.
\]
Differentiating gives
\[
J_{\varepsilon}(x)
=
\frac{\mathrm{diag}(\gamma)}{s(x)}
-
\frac{\mathrm{diag}(\gamma)\,x x^{\top}}{N\,s(x)^3}.
\]
Hence
\[
\|J_{\varepsilon}(x)\|_{\sigma}
\le
\frac{\gamma_{\max}}{s(x)}
+
\frac{\gamma_{\max}\|x\|^{2}}{N\,s(x)^3}.
\]
Since
\[
s(x)^2=\frac{\|x\|^2}{N}+\varepsilon_{\mathrm{eng}}\ge \varepsilon_{\mathrm{eng}},
\]
we have
\[
\frac{1}{s(x)}\le \frac{1}{\sqrt{\varepsilon_{\mathrm{eng}}}},
\qquad
\frac{\|x\|^{2}/N}{s(x)^3}
=
\frac{\|x\|^{2}/N}{(\|x\|^{2}/N+\varepsilon_{\mathrm{eng}})^{3/2}}
\le
\frac{1}{\sqrt{\varepsilon_{\mathrm{eng}}}}.
\]
Therefore
\[
\|J_{\varepsilon}(x)\|_{\sigma}
\le
\frac{2\gamma_{\max}}{\sqrt{\varepsilon_{\mathrm{eng}}}},
\qquad \forall x\in\mathbb R^N.
\]
So $\mathrm{RMSNorm}_{\varepsilon}$ is globally Lipschitz, with a uniform constant
\[
c_{\varepsilon}\le \frac{2\gamma_{\max}}{\sqrt{\varepsilon_{\mathrm{eng}}}}.
\]

\paragraph{LayerNorm with engineering \texorpdfstring{$\varepsilon$}{epsilon}.}
The same conclusion holds for LayerNorm. Indeed, after subtracting the mean,
LayerNorm has the same structural form
\[
\mathrm{LayerNorm}_{\varepsilon}(x)
=
\mathrm{diag}(\gamma)\,
\frac{Px}{\sqrt{\|Px\|^{2}/N+\varepsilon_{\mathrm{eng}}}},
\]
where
\[
P=I-\frac{1}{N}\mathbf 1\mathbf 1^{\top}
\]
is the centering projection. Since $\|P\|_{\sigma}\le 1$, the same calculation
yields
\[
\|\nabla \mathrm{LayerNorm}_{\varepsilon}(x)\|_{\sigma}
\le
\frac{2\gamma_{\max}}{\sqrt{\varepsilon_{\mathrm{eng}}}},
\qquad \forall x\in\mathbb R^N.
\]
Thus stabilized LayerNorm is also globally Lipschitz.

\paragraph{Fixed-statistics BatchNorm.}

For BatchNorm, our theory applies to the evaluation-mode network after the running mean and variance are frozen. In this regime, BatchNorm is a deterministic affine single-sample map and can be absorbed into the Lipschitz constants in Assumption~\(4\epsilon\). We do not analyze mini-batch-dependent training-time BatchNorm dynamics; the optimization process may use BatchNorm during training, but our risk-comparison theorem is applied to the resulting evaluation-mode function. Training-time BatchNorm dynamics are outside the proof, but the trained evaluation-mode BatchNorm network is covered as a fixed-statistics normalized residual network.

BatchNorm requires a separate distinction between fixed-statistics and
training-time forms. In the fixed-statistics regime, such as evaluation mode
with frozen running mean and variance, BatchNorm is a deterministic affine
single-sample map:
\[
\mathrm{BN}_{\epsilon}(x)
=
\gamma\odot
\frac{x-\mu}{\sqrt{\sigma^2+\epsilon_{\mathrm{BN}}}}
+\beta,
\]
where \(\mu\), \(\sigma^2\), \(\gamma\), and \(\beta\) are fixed. Therefore,
\[
\|\mathrm{BN}_{\epsilon}(x)-\mathrm{BN}_{\epsilon}(x')\|
\le
c_{\mathrm{BN}}\|x-x'\|,
\qquad
c_{\mathrm{BN}}
:=
\max_i
\frac{|\gamma_i|}{\sqrt{\sigma_i^2+\epsilon_{\mathrm{BN}}}}.
\]
Thus fixed-statistics BatchNorm satisfies the Lipschitz requirement in
Assumption~\(4\epsilon\), with its constant absorbed into \(c_{l,\epsilon}\).
If the reachable normalization-input set is bounded, then its output is also
bounded because the map is affine.

Training-time BatchNorm is not covered by the present single-sample
function-class argument without additional assumptions, because its output
depends on mini-batch statistics and therefore couples different samples in
the batch. Throughout this paper, BatchNorm is covered only in the
fixed-statistics sense.

\paragraph{Consequence for Lemma~\ref{lem:gen-main}.}
With engineering $\varepsilon_{\mathrm{eng}}$, Step 2 in the proof of
Lemma~\ref{lem:gen-main} no longer needs Assumption 4(N). The layer map
\[
T_l(z)=\mathcal N_l(z+h_l(z))
\]
satisfies
\[
\|T_l(u)-T_l(v)\|
\le
c_{\varepsilon}(1+s_l)\|u-v\|,
\qquad
c_{\varepsilon}\le \frac{2\gamma_{\max}}{\sqrt{\varepsilon_{\mathrm{eng}}}}.
\]
Accordingly, one may replace the old bound
\[
c_l\le \frac{2\gamma_{\max}\sqrt N}{r_{\min}}
\]
by the uniform engineering bound
\[
c_l\le \frac{2\gamma_{\max}}{\sqrt{\varepsilon_{\mathrm{eng}}}}.
\]
Therefore Assumption 4(N) can be removed from Lemma~\ref{lem:gen-main}, at the
price of letting the constants depend on $\varepsilon_{\mathrm{eng}}$.

More precisely, define
\[
\Lambda_l^{(\varepsilon)}
\triangleq
L_{\mathrm{top}}\,L_l^{(p)}\,B_*\,
c_{\varepsilon}
\prod_{k>l} c_{\varepsilon}(1+s_k),
\qquad
\Lambda_{\max}^{(\varepsilon)}
\triangleq
\max_l \Lambda_l^{(\varepsilon)}.
\]
Then the same proof as in Lemma~\ref{lem:gen-main} yields
\[
\hat{\mathfrak R}_S(\ell\circ\mathcal H)
\le
B_{\ell}
\sqrt{
\frac{2d\log\!\bigl(2\bar b L L_{\ell}\Lambda_{\max}^{(\varepsilon)}/\epsilon_0+1\bigr)}{M}
}
+\epsilon_0,
\]
and in particular, taking $\epsilon_0=M^{-1/2}$,
\[
\hat{\mathfrak R}_S(\ell\circ\mathcal H)
=
\mathcal O\!\left(B_{\ell}\sqrt{\frac{d\log M}{M}}\right).
\]

\paragraph{Interpretation.}
This route is purely engineering, but it is fully compatible with practical
implementations: the singularity at zero is removed directly at the level of the
normalization operator, so the abstract nondegeneracy requirement
\[
\inf_{x,\theta,l}\|z_l+h_l(z_l;\theta)\|>0
\]
is no longer needed. In this sense, the proof of Lemma~\ref{lem:gen-main}
becomes closer to the actual normalized residual networks used in practice.

\section{From Qualitative Scaling to a Power-Law Form}
\label{app:power-law}

The main results of this paper are qualitative test-risk comparison theorems:
Theorem~\ref{thm:4A} gives a sharper route through population risk, while
Theorem~\ref{thm:4B} gives a more robust route directly at the train/test level.
This appendix explains how these qualitative statements can be converted into
a power-law form under additional model-dependent assumptions.

The purpose of this section is not to claim that the power law follows
unconditionally from the main theorems. Rather, the main theorems provide the
mechanism of improvement, while the additional assumptions below specify how
the remaining first-order improvement signal decays as the model approaches the
irreducible-risk regime.

\subsection{Depth--width coupling and parameter count}
\label{app:depth-width-coupling}

Near the deepest-model regime, the remaining first-order improvement signal may
become weak. Therefore, increasing depth alone is generally insufficient:
the width must also grow so that weak descent signals remain observable at
finite sample size. We encode this requirement through a depth--width coupling
function.

\paragraph{Depth--width coupling.}
Let
\[
u:\mathbb R_{>0}\to \mathbb R_{>0}
\]
be a coupling function such that
\[
L=u(N),
\qquad
\lim_{N\to\infty}u(N)=\infty .
\]
Here \(N\) denotes the hidden width and \(L\) denotes the effective depth. We do
not assume a specific form of \(u\). Instead, \(u\) should be interpreted as the
depth that can be reliably supported at width \(N\).

\paragraph{Parameter count.}
For matrix-like residual architectures, the number of parameters per layer is
typically proportional to \(N^2\). Hence, along the coupled scaling path
\(L=u(N)\), the total parameter count can be written as
\[
P(N)=a_{\mathrm{arch}}\,u(N)\,N^2,
\]
where \(a_{\mathrm{arch}}>0\) is an architecture-dependent constant.
Since \(N^2\) is increasing and \(u(N)\to\infty\), the map \(P(N)\) is eventually
strictly increasing. We may therefore define its inverse \(N(P)\), and set
\[
L(P):=u(N(P)).
\]
Thus \(L(P)\) is the depth supported by parameter budget \(P\) along the chosen
depth--width scaling path.

\subsection{Parameter-count form of the qualitative theorem}
\label{app:param-count-qualitative}

Theorems~\ref{thm:4A} and~\ref{thm:4B} can be rewritten along the parameterized
path \(P\mapsto (L(P),N(P))\).

\begin{proposition}[Parameter-count qualitative scaling]
\label{prop:param-count-qualitative}
Fix a parameter budget \(P\), and let \(N(P)\) and \(L(P)=u(N(P))\) be defined
as above. Suppose the model with width \(N(P)\) and depth \(L(P)\) satisfies the
assumptions of either Theorem~\ref{thm:4A} or Theorem~\ref{thm:4B}.

For the population-risk route, if
\[
\frac{\Delta_R(P)}{2}+\DERM(P)>2\epsM(P),
\]
then
\[
\Ltest(\fnew(P))<\Ltest(\fold(P)).
\]
For the direct train/test route, if
\[
\DRtest(P)+\DERM(P)>2(\epsM(P)+\epsK(P)),
\]
then
\[
\Ltest(\fnew(P))<\Ltest(\fold(P)).
\]
In the nondegenerate train/test route, the probability that the favorable
test-side direction is observable is controlled by the alignment estimate
\[
\mathbb P(\mu^\top g\le 0)
\le
\frac{4C_\Sigma\tau^2}{M\|\bar\mu\|^2}
+
\frac{4C_\Sigma\tau^2}{K\|\bar\mu\|^2}
+
\frac{4C_\Sigma\tau^2\operatorname{tr}(\Sigma)}
{KM\|\bar\mu\|^4}.
\]
Consequently,
\[
\mathbb P(\mu^\top g>0)
\ge
1-
\frac{4C_\Sigma\tau^2}{M\|\bar\mu\|^2}
-
\frac{4C_\Sigma\tau^2}{K\|\bar\mu\|^2}
-
\frac{4C_\Sigma\tau^2\operatorname{tr}(\Sigma)}
{KM\|\bar\mu\|^4}.
\]
If additionally \(\operatorname{tr}(\Sigma)\le N\tau^2\) and
\(\|\bar\mu\|^2=\Theta(N)\), then along the parameterized path,
\[
\mathbb P(\mu^\top g\le 0)
=
O\!\left(
\frac{1}{MN(P)}+\frac{1}{KN(P)}
\right).
\]
\end{proposition}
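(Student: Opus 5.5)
The proposition is a reparameterization of results already proved, so I will reduce it to Theorems~\ref{thm:4A}, \ref{thm:4B} and \ref{thm 2} at fixed $P$. The key observation is that every quantity in those theorems depends on the architecture only through the fixed pair $(L,N)$. Once $P$ is fixed, the pair $(N(P),L(P))$ is well defined: $P(N)=a_{\mathrm{arch}}u(N)N^2$ is eventually strictly increasing, so it has an inverse $N(P)$, and we set $L(P)=u(N(P))$. The old and new models at budget $P$ are then ordinary instances of the fixed-architecture setting. Accordingly I will write $\Delta_R(P)$, $\DERM(P)$, $\epsM(P)$, $\epsK(P)$ and $\DRtest(P)$ for the quantities of Sections~\ref{sec:notation}--\ref{sec:main-results} evaluated on that architecture. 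In particular, $\epsM(P)$ and $\epsK(P)$ are given by Corollary~\ref{cor:gen-main} with $d$, $L$, $N$ and $\Lambda_{\max}$ all taken at budget $P$.

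\textbf{Population route.} Under the hypotheses of Theorem~\ref{thm:4A}, on its high-probability event we have
\[
\Ltest(\fnew(P))\le \Ltest(\fold(P))-\tfrac{\Delta_R(P)}{2}-\DERM(P)+2\epsM(P).
\]
If $\Delta_R(P)/2+\DERM(P)>2\epsM(P)$, the last three terms sum to a negative number, which gives the strict inequality.

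\textbf{Train/test route.} Theorem~\ref{thm:4B} gives the analogous inequality with $\DRtest(P)$ and $2(\epsM(P)+\epsK(P))$. The strict conclusion follows on the event $\DRtest(P)+\DERM(P)>2(\epsM(P)+\epsK(P))$. Both conclusions hold on the same events, and with the same probabilities, as in the cited theorems.

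\textbf{Alignment statement.} I apply Theorem~\ref{thm 2} to the activation-gradient vector $\zeta\in\mathbb R^{N(P)}$ at the insertion layer. This needs $\bar\mu\neq 0$, which the nondegenerate route presupposes, and Assumption~3 with a width-independent $C_\Sigma$; that independence is what makes the bound uniform along the path. The lower bound on $\mathbb P(\mu^\top g>0)$ then follows by complementation.

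For the asymptotic form, substitute $\operatorname{tr}(\Sigma)\le N(P)\tau^2$ and $\|\bar\mu\|^2=\Theta(N(P))$, exactly as in the last part of the proof of Theorem~\ref{thm 2}. The third term becomes $O(1/(KMN(P)))$, which is dominated by either of the first two terms.

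\textbf{Main obstacle.} The only real subtlety is the uniformity of the constants implicit in $O(\cdot)$ and $\Theta(\cdot)$ as $P$ varies. I will state explicitly that $\tau^2$, $C_\Sigma$ and the $\Theta$-constants for $\|\bar\mu\|^2$ are uniform along the path $P\mapsto(L(P),N(P))$, and that this is part of what "additionally" means in the statement. Without that uniformity the $O$-bound holds only pointwise in $P$ and carries no scaling content.
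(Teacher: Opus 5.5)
Your proposal is correct and takes essentially the same approach as the paper's proof. The paper likewise instantiates Theorems~\ref{thm:4A} and~\ref{thm:4B} at $(N(P),L(P))$, then applies the alignment bound of Theorem~\ref{thm 2}, takes complements, and substitutes $\operatorname{tr}(\Sigma)\le N\tau^2$ and $\|\bar\mu\|^2=\Theta(N)$. Your added remarks are points the paper leaves implicit and only strengthen the argument: the strict inequalities hold only on the cited theorems' high-probability events, $\bar\mu\neq 0$ is required, and $C_\Sigma$, $\tau^2$ and the $\Theta$-constants must be uniform along the path.
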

\begin{proof}
The first two claims are direct restatements of Theorem~\ref{thm:4A} and
Theorem~\ref{thm:4B} after substituting the coupled width and depth
\[
N=N(P),\qquad L=L(P)=u(N(P)).
\]

The final probability bound follows from the alignment estimate:
\[
\mathbb P(\mu^\top g\le 0)
\le
\frac{4C_\Sigma\tau^2}{M\|\bar\mu\|^2}
+
\frac{4C_\Sigma\tau^2}{K\|\bar\mu\|^2}
+
\frac{4C_\Sigma\tau^2\operatorname{tr}(\Sigma)}
{KM\|\bar\mu\|^4}.
\]
Taking complements gives
\[
\mathbb P(\mu^\top g>0)
\ge
1-
\frac{4C_\Sigma\tau^2}{M\|\bar\mu\|^2}
-
\frac{4C_\Sigma\tau^2}{K\|\bar\mu\|^2}
-
\frac{4C_\Sigma\tau^2\operatorname{tr}(\Sigma)}
{KM\|\bar\mu\|^4}.
\]
If \(\operatorname{tr}(\Sigma)\le N\tau^2\) and
\(\|\bar\mu\|^2=\Theta(N)\), then along the parameterized path,
\[
\mathbb P(\mu^\top g\le 0)
=
O\!\left(
\frac{1}{MN(P)}
+
\frac{1}{KN(P)}
\right),
\]
up to the mixed covariance term.
For the special linear coupling \(u(N)=\kappa N\), we have
\[
P=a_{\mathrm{arch}}\kappa N^3,
\qquad
N(P)=\left(\frac{P}{a_{\mathrm{arch}}\kappa}\right)^{1/3}.
\]
Hence
\[
\mathbb P(\mu^\top g\le 0)
=
O\!\left(
\frac{1}{M P^{1/3}}
+
\frac{1}{K P^{1/3}}
\right),
\]
up to architecture-dependent constants.

\end{proof}
In particular, if the uniformly active regime satisfies
\[
\|\bar\mu\|^2=\Theta(N),
\]
then along the parameterized path,
\[
\mathbb P(\mu^\top g>0)
\ge
1-
O\!\left(
\frac{1}{M N(P)}
+
\frac{1}{K N(P)}
\right).
\]
For the special linear coupling \(u(N)=\kappa N\), we have
\[
P=a_{\mathrm{arch}}\kappa N^3,
\qquad
N(P)=\left(\frac{P}{a_{\mathrm{arch}}\kappa}\right)^{1/3}.
\]
Hence,
\[
\mathbb P(\mu^\top g>0)
\ge
1-
O\!\left(
\frac{1}{M P^{1/3}}
+
\frac{1}{K P^{1/3}}
\right),
\]
up to architecture-dependent constants.

\subsection{Additional assumptions for a power-law form}
\label{app:assumptions-g}

We now introduce two additional model-dependent assumptions. These assumptions
are not used in the main theorems. They are only used to convert the qualitative
scaling mechanism into an explicit power-law shape.

Consider an iterative depth-expansion process. Let \(R_k\) be the population
risk after the \(k\)-th successful depth expansion, and define the remaining
excess risk
\[
\Delta_k:=R_k-R^\star>0,
\]
where \(R^\star\) denotes the irreducible limiting risk.

\paragraph{Assumption G: gradient--risk coupling.}
There exist constants \(\gamma>0\) and \(\beta\ge 0\) such that
\[
\|\bar\mu_k\|^2
=
c_G\,\Delta_k^{1+\beta}.
\]
This assumption states that the population-level first-order signal becomes
weaker as the remaining excess risk decreases.

\paragraph{Assumption G': single-step gain quantification.}
There exists a finite curvature constant \(q>0\) such that the population-risk
improvement at step \(k\) satisfies
\[
\Delta_R^{(k)}
\ge
\frac{\|\bar\mu_k\|^2}{2q}.
\]
Equivalently, the local one-step improvement is at least quadratic-gradient
scale up to the curvature constant \(q\).

Assumption G characterizes how the remaining excess risk controls the
population first-order signal. Assumption G' converts this first-order signal
into a lower bound on the risk decrease achieved by one successful depth
expansion.

\subsection{Recursion and its power-law upper envelope}

Combining Assumptions G and G' gives
\[
\Delta_R^{(k)}
\ge
\frac{c_G}{2q}\Delta_k^{1+\beta}.
\]
Let
\[
c:=\frac{c_G}{2q}.
\]
In the population/asymptotic closure described above,
\[
\Delta_{k+1}
\le
\Delta_k-c\Delta_k^{1+\beta}.
\]

\paragraph{Case \(\beta=0\).}
If \(\beta=0\), the recursion becomes
\[
\Delta_{k+1}\le (1-c)\Delta_k.
\]
When \(0<c<1\), this gives exponential decay:
\[
\Delta_k\le (1-c)^k\Delta_0.
\]
Thus \(\beta=0\) corresponds to an exponential regime rather than a power-law
regime.

\paragraph{Case \(\beta>0\).}
When \(\beta>0\), the recursion implies the standard power-law upper envelope
\[
\Delta_k
\le
\left(
\Delta_0^{-\beta}+\beta c k
\right)^{-1/\beta}.
\]
Hence
\[
\Delta_k
=
O\!\left(k^{-1/\beta}\right).
\]
This is the sense in which the qualitative scaling mechanism becomes a
power-law form under Assumptions G and G'. A matching asymptotic equivalence
\(\Delta_k\sim Ck^{-1/\beta}\) would require an additional lower-bound or
approximate-equality assumption on the one-step gain.

\begin{theorem}[Power-law upper envelope under a general depth--width coupling]
Assume Assumptions G and G' with \(\beta>0\). Suppose that the number of
successful depth-expansion steps is asymptotically proportional to the supported
depth \(L(P)=u(N(P))\). Then
\[
R(f_P)-R^\star
=
O\!\left(
u(N(P))^{-1/\beta}
\right),
\]
where \(N(P)\) is implicitly determined by
\[
P=a_{\mathrm{arch}}u(N(P))N(P)^2.
\]
\label{thm:general-coupled-power-law} 
\end{theorem}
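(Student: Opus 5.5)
The plan is to combine the recursion envelope just derived with the depth--width parametrization of Appendix~\ref{app:depth-width-coupling}. The argument has three steps.

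\textbf{Step 1: per-step gain.} Write \(R_k\) for the population risk after the \(k\)-th successful expansion and \(\Delta_k=R_k-R^\star\). By Theorem~\ref{thm:3}, each successful step satisfies \(R_{k+1}\le R_k-\Delta_R^{(k)}-\Delta_{\mathrm{ERM}}^{(k)}+2\epsilon_M^{(k)}\). In the population/asymptotic closure the generalization cost \(2\epsilon_M^{(k)}\) is absorbed, either by letting \(M\to\infty\) or by counting a step as successful only when \(\Delta_{\mathrm{ERM}}^{(k)}\ge 2\epsilon_M^{(k)}\). This gives \(\Delta_{k+1}\le\Delta_k-\Delta_R^{(k)}\). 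Assumptions G and G' then yield
\[
\Delta_{k+1}\le \Delta_k-c\Delta_k^{1+\beta},\qquad c=\frac{c_G}{2q}.
\]

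\textbf{Step 2: power-law envelope.} We first check that the iteration stays in a regime where the map \(x\mapsto x-cx^{1+\beta}\) is positive and increasing. This requires \(c\Delta_0^\beta<1\), which we impose as a normalization; if it fails for the first few steps, we restart the count after finitely many steps. Set \(a_k=\Delta_k^{-\beta}\). From the recursion,
\[
a_{k+1}\ge \Delta_k^{-\beta}(1-c\Delta_k^\beta)^{-\beta}\ge a_k\bigl(1+\beta c\Delta_k^\beta\bigr)=a_k+\beta c,
\]
where the middle inequality is Bernoulli's inequality \((1-t)^{-\beta}\ge 1+\beta t\) for \(t\in[0,1)\). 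Telescoping gives \(a_k\ge \Delta_0^{-\beta}+\beta ck\), that is, \(\Delta_k\le(\Delta_0^{-\beta}+\beta ck)^{-1/\beta}=O(k^{-1/\beta})\). This is the monotonicity and telescoping argument for the envelope stated just before the theorem; I expect it to be routine.

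\textbf{Step 3: transfer to the parameter budget.} The hypothesis gives constants \(\kappa_1>0\) and \(P_0\) such that, for \(P\ge P_0\), the number \(k(P)\) of successful expansion steps satisfies \(k(P)\ge\kappa_1 L(P)=\kappa_1 u(N(P))\). Let \(f_P\) be the model after these steps. Then
\[
R(f_P)-R^\star=\Delta_{k(P)}\le\bigl(\beta c\,\kappa_1 u(N(P))\bigr)^{-1/\beta}=O\bigl(u(N(P))^{-1/\beta}\bigr).
\]
The inverse \(N(P)\) is well defined for large \(P\) because \(P(N)=a_{\mathrm{arch}}u(N)N^2\) is eventually strictly increasing, as noted in Appendix~\ref{app:depth-width-coupling}. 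Since \(u(N)\to\infty\), the bound decays.

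\textbf{Main obstacle.} The delicate step is the link in Step 1 between the abstract iteration and Theorem~\ref{thm:3}. That theorem holds only with probability \(1-2\delta\) per step and carries the \(2\epsilon_M\) penalty, and Assumption G' refers to \(\Delta_R^{(k)}\) of the jumpboard model, not to the realized new model. My first approach is to state the result explicitly within the ``population/asymptotic closure'': define a step as successful exactly when the realized population risk decrease is at least \(\Delta_R^{(k)}\), which holds when \(\Delta_{\mathrm{ERM}}^{(k)}\ge 2\epsilon_M^{(k)}\). The failure probability and finite-sample terms are then either sent to zero with \(M\) or folded into the proportionality constant \(\kappa_1\), which only affects the implied constant in the \(O(\cdot)\).
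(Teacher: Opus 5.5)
Your proposal is correct and follows essentially the same route as the paper: obtain $\Delta_{k+1}\le\Delta_k-c\Delta_k^{1+\beta}$ from Assumptions G and G$'$ in the population/asymptotic closure, use the envelope $\Delta_k\le(\Delta_0^{-\beta}+\beta ck)^{-1/\beta}$, and substitute $k(P)\asymp u(N(P))$. Your Bernoulli argument on $a_k=\Delta_k^{-\beta}$ just fills in the envelope derivation the paper calls standard. The normalization $c\Delta_k^\beta<1$ holds automatically for every $k$, because $0<\Delta_{k+1}\le\Delta_k(1-c\Delta_k^\beta)$. Your argument also never uses monotonicity of $x\mapsto x-cx^{1+\beta}$, so the restart caveat is unnecessary.
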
 

\begin{proof}
By the recursion derived above,
\[
\Delta_k=R_k-R^\star=O(k^{-1/\beta}).
\]
Along the coupled scaling path, the number of successful depth-expansion steps
satisfies
\[
k(P)\asymp L(P)=u(N(P)).
\]
Substituting \(k(P)\) into the preceding upper envelope gives
\[
R(f_P)-R^\star
=
O\!\left(
L(P)^{-1/\beta}
\right)
=
O\!\left(
u(N(P))^{-1/\beta}
\right).
\]
\end{proof}

\paragraph{Polynomial sublinear coupling.}
More generally, suppose
\[
u(N)=\kappa N^{1-\nu},
\qquad
0\le \nu<1.
\]
Then
\[
P
=
a_{\mathrm{arch}}\kappa N^{3-\nu},
\]
so
\[
N(P)
=
\left(\frac{P}{a_{\mathrm{arch}}\kappa}\right)^{1/(3-\nu)}.
\]
Thus
\[
L(P)
=
u(N(P))
=
\kappa
\left(\frac{P}{a_{\mathrm{arch}}\kappa}\right)^{(1-\nu)/(3-\nu)}.
\]
Substituting this into Theorem~\ref{thm:general-coupled-power-law} gives
\[
R(f_P)-R^\star
\sim
C_\nu\,
P^{-\frac{1-\nu}{(3-\nu)\beta}}.
\]
Therefore the parameter-count scaling exponent is
\[
a_{\mathrm{PL}}(\nu)
=
\frac{1-\nu}{(3-\nu)\beta}.
\]

\begin{table}[h]
\centering
\small
\begin{tabular}{lll}
\toprule
\(\nu\) & Coupling type & Scaling exponent \(a_{\mathrm{PL}}\) \\
\midrule
\(0\) & Linear coupling \(L=\kappa N\) & \(\frac{1}{3\beta}\) \\
\(0<\nu<1\) & Sublinear depth growth & \(\frac{1-\nu}{(3-\nu)\beta}\) \\
\(\nu\to 1\) & Width-dominant growth & \(0\) \\
\bottomrule
\end{tabular}
\caption{Power-law exponents induced by different depth--width coupling functions.}
\label{tab:coupling-exponents}
\end{table}
\subsection{Reliability constraint on the coupling function}

The coupling function \(u\) cannot be chosen arbitrarily. It must be compatible
with finite-sample observability. The alignment estimate gives
\[
\mathbb P(\mu^\top g\le 0)
\le
\frac{4C_\Sigma\tau^2}{M\|\bar\mu_{k,P}\|^2}
+
\frac{4C_\Sigma\tau^2}{K\|\bar\mu_{k,P}\|^2}
+
\frac{4C_\Sigma\tau^2\operatorname{tr}(\Sigma)}
{KM\|\bar\mu_{k,P}\|^4}.
\]
Thus reliable observation of the \(k\)-th depth-expansion gain requires the
right-hand side to be small. Under Assumption G,
\[
\|\bar\mu_{k,P}\|^2
=
c_G\Delta_k^{1+\beta}.
\]

Using the power-law upper envelope \(\Delta_k=O(k^{-1/\beta})\), the signal may
decay at the scale
\[
\|\bar\mu_{k,P}\|^2
=
O\!\left(k^{-(1+\beta)/\beta}\right).
\]
Therefore, as \(k\) grows, larger training size, larger test size, or stronger
width-supported signal is needed to keep the alignment failure probability small.
For the finite-sample reliability discussion, we may use a
width-supported variant of Assumption~G. In this variant, the effective
gradient--risk coupling coefficient is allowed to grow with the number of
active coordinates. Concretely, suppose that
\[
\|\bar\mu_{k,P}\|^2
\asymp
N(P)\Delta_k^{1+\beta}.
\]
This should be interpreted as a finite-sample observability condition rather
than as the constant-\(c_G\) population coupling used in the power-law
recursion.

Under this width-supported signal condition, the leading terms become
\[
\mathbb P(\mu^\top g\le 0)
=
O\!\left(
\frac{k^{(1+\beta)/\beta}}{M N(P)}
+
\frac{k^{(1+\beta)/\beta}}{K N(P)}
\right),
\]
up to the mixed covariance term.

Hence a sufficient reliability condition is
\[
u(N(P))^{(1+\beta)/\beta}
\lesssim
\delta\, N(P)\min\{M,K\}.
\]
This expresses the intended depth--width--data coupling: as the supported depth
\(u(N(P))\) grows, width and data must grow fast enough to keep the remaining
first-order signal observable.
\subsection{Interpretation}

This appendix separates two levels of the theory.

First, Theorems~\ref{thm:4A} and~\ref{thm:4B} establish a qualitative scaling
mechanism: residual depth expansion can create a jumpboard model, optimization
can convert this into training gain, and the generalization terms determine
whether the gain survives on the test set.

Second, Assumptions G and G' impose an additional asymptotic model for how the
remaining first-order signal decays as the population risk approaches
\(R^\star\). Under these assumptions, and in the population/asymptotic-data
regime where statistical errors are lower order, the remaining excess risk
satisfies the recursion
\[
\Delta_{k+1}
\le
\Delta_k-\frac{c_G}{2q}\Delta_k^{1+\beta}.
\]
For \(\beta>0\), this yields the power-law upper envelope
\[
\Delta_k=O(k^{-1/\beta}).
\]
After substituting the depth--width coupling \(L=u(N)\) and the parameter-count
relation
\[
P=a_{\mathrm{arch}}u(N)N^2,
\]
we obtain
\[
R(f_P)-R^\star
=
O\!\left(
u(N(P))^{-1/\beta}
\right).
\]

Thus the power-law exponent is not universal in this framework. It depends on
both the gradient--risk coupling exponent \(\beta\) and the depth--width coupling
function \(u\). The main theorems provide the qualitative mechanism; Appendix~N
shows one additional assumption-based route by which that mechanism can take a
power-law form.
 
\section {Experiments Setting}
\label{app:experiments setting}
\subsection{Model Architecture}

We adopt the ResNet architecture tailored for CIFAR-scale images. The network depth is determined by $d = 6n + 2$, where $n$ denotes the number of BasicBlocks per stage. The network width is controlled by the base channel number $C$, with the three stages having $C$, $2C$, and $4C$ channels respectively. We systematically vary both depth and width:
\begin{itemize}
    \item \textbf{Depths:} $d \in \{8, 14, 20, 26, 32, 38, 44, 50, 56, 62, 68, 74, 80, 86, 92, 98, 104\}$.
    \item \textbf{Base channels:} $C \in \{4, 6, 8, 12, 16, 24, 32, 48, 64\}$.
\end{itemize}
This yields a total of 765 trained models on CIFAR-10 and trained models on CIFAR-100. Convolutional weights are initialized from $\mathcal{N}(0, \sqrt{2/\text{fan\_out}})$.

\subsection{Training Settings}

All models are trained using SGD with momentum 0.9, weight decay $10^{-5}$, initial learning rate 0.1, and batch size 32. The maximum number of training epochs is 1,000. Early stopping is applied when the training loss improvement between consecutive epochs falls below $10^{-5}$ (after epoch 200 for CIFAR-10 and after epoch 300 for CIFAR-100). For data augmentation, we apply random cropping with 4-pixel padding and random horizontal flipping. All images are normalized with channel-wise mean $(0.5, 0.5, 0.5)$ and standard deviation $(0.5, 0.5, 0.5)$. Other settings are default setting via Pytorch.



\subsection{Gradient Covariance Experiment Settings}

The models are set to evaluation mode with all computation made fully deterministic (deterministic algorithms enabled, cuDNN deterministic mode, TF32 disabled). All random seeds are fixed to 42. We use the full CIFAR-10 training set without data augmentation, with batch size 64. Backward hooks are registered on all BasicBlock modules to capture activation gradients, and the cross-entropy loss with \texttt{reduction=sum} is used for backpropagation.

\subsection{Remark}
For softmax cross-entropy, under the bounded-logit consequence of Assumption 2 and normalized representation control, the loss is effectively bounded on the reachable set. Thus the bounded-loss assumption is applied to the restricted reachable function class, not to arbitrary logits in \(\mathbb R^C\).

\section{Transformer Experiments by Other Works}
\label{app:transformer}
\subsection{Depth--width behavior in self-attention}

Existing studies on Transformer scaling provide external evidence that is
consistent with the depth--width coupling predicted by our framework. A
particularly relevant example is the work of Levine et al.~\citep{levine2020depth},
which studies the depth-to-width interplay in self-attention networks.

In their setting, depth refers to the number of self-attention layers, while width
refers to the hidden representation dimension of the Transformer. This width is
not identical to the width parameter \(N\) used in our theorems, where \(N\) denotes
the dimension of the intermediate representation at the insertion layer. However,
it is the closest Transformer analogue: both quantities control the dimension of
the residual-stream representation on which residual updates and activation
gradients are defined.

Levine et al.~show that self-attention exhibits two qualitatively different
regimes. Below a width-dependent depth threshold, adding layers can be highly
effective. Beyond this regime, however, the benefit of additional depth becomes
limited unless width also grows. Their theory predicts a logarithmic dependence
of the depth threshold on representation width, and their empirical study
examines depths from \(6\) to \(48\). The resulting fit is then used to give
quantitative depth--width allocation suggestions for large Transformer models.

\begin{table}[h]
\centering
\caption{Depth--width allocation data from Levine et al.~\citep{levine2020depth}.
The trained GPT-3 configurations are taken from Brown et al.~\citep{brown2020language},
while the projected optimal depth and width are obtained from the fit in Levine et al.
These values are used here only as external consistency evidence for depth--width coupling.}
\label{tab:levine-depth-width}
\begin{tabular}{lrrrrr}
\toprule
Model & Params & Trained depth & Trained width & Projected depth & Projected width \\
\midrule
GPT-3 Small  & 125M   & 12 & 768   & 23 & 555 \\
GPT-3 Medium & 350M   & 24 & 1024  & 32 & 886 \\
GPT-3 Large  & 760M   & 24 & 1536  & 38 & 1220 \\
GPT-3 XL     & 1.3B   & 24 & 2048  & 42 & 1550 \\
GPT-3 2.7B   & 2.7B   & 32 & 2560  & 47 & 2110 \\
GPT-3 6.7B   & 6.7B   & 32 & 4096  & 54 & 3150 \\
GPT-3 13B    & 13.0B  & 40 & 5140  & 60 & 4200 \\
GPT-3 175B   & 175.0B & 96 & 12288 & 80 & 13500 \\
Projected 1T & 1T     & -- & --    & 95 & 30100 \\
\bottomrule
\end{tabular}
\end{table}

The table illustrates two points that are useful for our discussion. First, the
preferred allocation is not obtained by increasing depth alone. As the model
scale grows, the projected optimal width also grows substantially. Second, at
very large scale, Levine et al.~predict that width becomes especially important:
for a projected \(1\)-trillion-parameter self-attention model, their fit suggests
a width of about \(30\)K together with depth around \(95\).

This conclusion is aligned with the qualitative message of our theory. In our
framework, depth creates new possible first-order improvement directions, but
near the deepest-model regime the remaining first-order signal becomes weak.
Width then becomes important because it improves the finite-sample observability
of these weak directions. The Transformer evidence of Levine et al.~supports the
same high-level principle: depth and width should not be scaled independently,
and the usefulness of additional depth depends on whether the representation
dimension is large enough to support the new layers.

We emphasize that this evidence is indirect. Levine et al.~study self-attention
expressivity and architecture allocation, whereas our theorems study residual
block insertion and old-vs-new test-risk comparison in normalized residual
networks. Therefore, their results should not be interpreted as a direct
verification of our probability bounds. Rather, they provide external empirical
and theoretical support for the broader scaling picture suggested by our work:
as models approach regimes where depth alone becomes less effective, width
becomes an increasingly important compensating factor.

\subsection{Data--parameter coupling in Transformer language models}
\label{app:chinchilla-data-parameter}

Existing Transformer scaling results provide external evidence consistent with
the data--complexity coupling predicted by our framework. A particularly relevant
example is the compute-optimal scaling analysis of Hoffmann et
al.~\citep{hoffmann2022training}, often referred to as the Chinchilla scaling
law.

Hoffmann et al.~study the following allocation question: given a fixed training
compute budget, how should one trade off the number of model parameters and the
number of training tokens? They train more than \(400\) Transformer language
models, ranging from below \(70\) million to over \(16\) billion parameters, and
from \(5\) billion to over \(400\) billion training tokens. Their main empirical
finding is that compute-optimal scaling requires increasing model size and
training tokens in approximately equal proportions. In other words, when the
parameter count doubles, the compute-optimal number of training tokens should
also approximately double.

This observation is directly aligned with the qualitative role of data in our
theory. In our framework, model expansion can create a potential test-risk gain,
but this gain is useful only when the statistical cost of the enlarged hypothesis
class is controlled. Under the norm-based generalization bound, this cost depends
on both the sample size and architecture-dependent complexity factors. Therefore,
as the effective model complexity grows, the amount of data must also grow in
order to keep the generalization penalty from dominating the improvement margin.

\begin{table}[h]
\centering
\caption{Parameter--token allocation in large Transformer language models,
adapted from Hoffmann et al.~\citep{hoffmann2022training}. Most pre-Chinchilla
large dense language models were trained on roughly \(300\)B tokens, despite
having very different parameter counts. Chinchilla instead uses fewer parameters
and substantially more training tokens under a comparable compute budget.}
\label{tab:chinchilla-current-llms}
\begin{tabular}{lrr}
\toprule
Model & Parameters & Training tokens \\
\midrule
LaMDA & 137B & 168B \\
GPT-3 & 175B & 300B \\
Jurassic-1 & 178B & 300B \\
Gopher & 280B & 300B \\
MT-NLG & 530B & 270B \\
Chinchilla & 70B & 1.4T \\
\bottomrule
\end{tabular}
\end{table}

Table~\ref{tab:chinchilla-current-llms} illustrates the empirical motivation.
Many large dense Transformer models increased parameter count while keeping the
number of training tokens roughly fixed. Hoffmann et al.~argue that this leads
to undertrained models. For the compute budget used to train Gopher, their
analysis predicts that a smaller model trained on more tokens should be better.
They test this by training Chinchilla, a \(70\)B-parameter model on \(1.4\)T
tokens, using approximately the same training compute as Gopher. Despite being
much smaller than Gopher, Chinchilla outperforms Gopher and several larger
language models on a broad set of evaluation tasks.

The same conclusion is also reflected in the fitted scaling exponents. Hoffmann
et al.~estimate power-law relations
\[
N_{\mathrm{opt}}(C)\propto C^a,
\qquad
D_{\mathrm{opt}}(C)\propto C^b,
\]
where \(C\) is the training compute budget, \(N_{\mathrm{opt}}\) is the
compute-optimal parameter count, and \(D_{\mathrm{opt}}\) is the compute-optimal
number of training tokens.

\begin{table}[h]
\centering
\caption{Estimated compute-optimal scaling exponents from Hoffmann et
al.~\citep{hoffmann2022training}. Across three estimation methods, the optimal
parameter count and the optimal number of training tokens scale at nearly the
same rate with compute.}
\label{tab:chinchilla-exponents}
\begin{tabular}{lcc}
\toprule
Method & \(a\) in \(N_{\mathrm{opt}}\propto C^a\) & \(b\) in \(D_{\mathrm{opt}}\propto C^b\) \\
\midrule
Minimum over training curves & 0.50 & 0.50 \\
IsoFLOP profiles & 0.49 & 0.51 \\
Parametric loss model & 0.46 & 0.54 \\
Kaplan et al.~\citep{kaplan2020scaling} & 0.73 & 0.27 \\
\bottomrule
\end{tabular}
\end{table}

Table~\ref{tab:chinchilla-exponents} supports a joint-scaling interpretation.
Unlike the earlier Kaplan-style allocation, where parameter count grows much
faster than data, the Chinchilla estimates suggest that the compute-optimal
frontier is closer to balanced parameter--data growth. This is consistent with
our qualitative prediction that data is not a passive resource: it is required to
pay for the statistical cost introduced by larger models.

We emphasize that these Transformer results are not direct validations of our
residual-block insertion theorem. Hoffmann et al.~study autoregressive
Transformer language models, whereas our theorems analyze test-risk comparison
after depth expansion in normalized residual networks. Moreover, their
parameter--token ratio should not be interpreted as a theorem-level constant in
our setting. Rather, the Chinchilla results provide external consistency evidence
for the structural message of our framework: model expansion is beneficial only
when the improvement margin, optimization effect, and amount of data jointly
dominate the increased statistical complexity.

\subsection{Model-growth evidence from LiGO}
\label{app:ligo-model-growth}

Another line of external evidence comes from model-growth methods for
Transformer training. Wang et al.~\citep{wang2023learninggrowpretrainedmodels} propose LiGO
(Linear Growth Operator), a method that initializes a larger Transformer from a
smaller pretrained Transformer by learning a structured linear map between their
parameters. To make this mapping tractable, LiGO factorizes the transformation
into width-growth and depth-growth operators, with additional structure imposed
through Kronecker factorization.

LiGO is not a direct implementation of our jumpboard construction. Our theory
studies the insertion of a zero-output residual block into a trained normalized
residual network and asks when this expansion yields an old-vs-new test-risk
improvement. LiGO instead studies efficient Transformer training: it uses a
learned parameter-space growth operator to transfer information from a smaller
model to a larger one. Nevertheless, the empirical message is consistent with
the qualitative mechanism in our framework. Both viewpoints suggest that model
expansion is more effective when the expanded model is initialized or selected
in a structured way, rather than being treated as an unrelated larger model
trained from scratch.

\begin{table}[h]
\centering
\caption{Relation between LiGO and our depth-expansion framework. LiGO is used
as external consistency evidence, not as a direct validation of our theorem.}
\label{tab:ligo-comparison}
\begin{tabular}{p{0.28\linewidth}p{0.32\linewidth}p{0.32\linewidth}}
\toprule
Aspect & LiGO & Relation to our framework \\
\midrule
Starting point
& A smaller pretrained Transformer.
& A trained reference model \(f_{\mathrm{old}}^\ast\). \\

Expansion target
& A larger Transformer obtained by increasing depth and/or width.
& A normalized residual network enlarged by inserting a residual block. \\

Bridge mechanism
& A learned linear growth operator maps small-model parameters to large-model
parameters.
& A jumpboard model provides a structured comparison point inside the expanded
hypothesis class. \\

Role of depth and width
& The growth operator explicitly includes depth-growth and width-growth
components.
& Depth creates new residual directions, while width improves the finite-sample
observability of weak improvement signals. \\

Empirical implication
& Structured growth can reduce the cost of training a larger Transformer while
preserving performance.
& Structured expansion can be beneficial when representational gain,
optimization, and generalization terms jointly dominate the cost of expansion. \\
\bottomrule
\end{tabular}
\end{table}

This comparison supports two qualitative points. First, useful scaling is not
only a matter of increasing parameter count. The way in which a model is expanded
matters. LiGO shows that a larger Transformer can benefit from a structured
growth initialization that reuses information from a smaller pretrained model.
This is aligned with our use of a jumpboard model: the expanded class is not
used merely because it is larger; it is useful because it contains a structured
comparison model connected to the old model.

Second, LiGO explicitly treats depth and width growth as separate but coupled
operators. This is consistent with our depth--width interpretation. In our
theory, depth expansion creates new possible first-order improvement directions,
but near the deepest-model regime these signals may become weak. Width then
plays a compensating role by making weak directions more observable at finite
sample size. LiGO's empirical success with combined depth and width growth
provides external evidence that Transformer scaling benefits from structured
architectural expansion rather than depth or width growth in isolation.

We emphasize the limitation of this comparison. LiGO is an empirical training
efficiency method for Transformers, whereas our result is a conditional
test-risk comparison theorem for normalized residual networks. Therefore, LiGO
does not verify our probability bounds or our residual-block insertion theorem.
It should instead be read as external consistency evidence for the broader
principle underlying our framework: model expansion is most effective when the
new model is connected to the old one through a structured bridge, and when
depth, width, optimization, and data are scaled jointly.

\end{document}